\newtheorem{theorem}{Theorem}[section]
\newtheorem{corollary}[theorem]{Corollary}
\newtheorem{lemma}[theorem]{Lemma}
\newtheorem{definition}[theorem]{Definition}
\newtheorem{problem}{Problem}
\newcommand{\eps}{\ensuremath{\epsilon}\xspace}
\renewcommand{\tilde}{\widetilde}
\renewcommand{\hat}{\widehat}
\renewcommand{\bar}{\overline}
\newcommand{\R}{\mathbb{R}}
\newcommand{\C}{\mathbb{R}}
\newcommand{\OO}{\mathcal{O}}
\DeclareMathOperator{\tr}{tr}
\newcommand{\norm}[1]{\lVert#1{\rVert}}
\newcommand{\normone}[1]{{\norm{#1}}_1}
\newcommand{\normtwo}[1]{{\norm{#1}}_2}
\newcommand{\norminf}[1]{{\norm{#1}}_\infty}
\newcommand{\fnorm}[1]{{\norm{#1}}_F}
\newcommand{\maxnorm}[1]{{\norm{#1}}_{\max}}
\providecommand{\expect}[2]{\ensuremath{\ifthenelse{\equal{#1}{}}{\mathbb{E}}{\mathbb{E}_{#1}}\left[#2\right]}\xspace}
\newcommand{\maxi}[1]{\mbox{maximize} & {#1 } & \\}
\newcommand{\st}{\mbox{subject to} }
\newcommand{\con}[1]{&#1 & \\}
\newenvironment{lp}{\begin{equation}  \begin{array}{lll}}{\end{array}\end{equation}}
\newenvironment{lp*}{\begin{equation*}  \begin{array}{lll}}{\end{array}\end{equation*}}
\newcommand{\inner}[1]{\langle #1\rangle}
\newcommand{\Us}{U^\star}
\newcommand{\Vs}{V^\star}
\newcommand{\Ms}{M^\star}
\newcommand{\Sigs}{\Sigma^\star}
\newcommand{\sigs}{\sigma^\star}
\begin{document}

\title{Non-Convex Matrix Completion Against a Semi-Random Adversary}
\author{Yu Cheng \qquad Rong Ge \\ Duke University}
\date{}

\maketitle


\begin{abstract}
Matrix completion is a well-studied problem with many machine learning applications. In practice, the problem is often solved by non-convex optimization algorithms. However, the current theoretical analysis for non-convex algorithms relies crucially on the assumption that each entry of the matrix is observed with exactly the same probability $p$, which is not realistic in practice.

In this paper, we investigate a more realistic semi-random model, where the probability of observing each entry is {\em at least} $p$. 
Even with this mild semi-random perturbation, 
we can construct counter-examples where existing non-convex algorithms get stuck in bad local optima.

In light of the negative results, we propose a pre-processing step that tries to re-weight the semi-random input, so that it becomes ``similar'' to a random input. We give a nearly-linear time algorithm for this problem, and show that after our pre-processing, all the local minima of the non-convex objective can be used to approximately recover the underlying ground-truth matrix.

\end{abstract}




\section{Introduction}
\label{sec:intro}
Non-convex optimization techniques are now very popular for machine learning applications, especially in learning neural networks \citep{deepsurvey1,deepsurvey2}. These methods are easier to implement and extremely efficient in practice. 
Even though optimizing a non-convex function is hard in general, recent works proved convergence guarantees for problems including matrix completion, dictionary learning and tensor decomposition \citep{jain2013low, sun2015guaranteed,rgDict2,ge2015escaping}.

Unlike convex optimization, non-convex problems may have many bad local optima. Existing techniques rely heavily on model assumptions to get a strong initialization (e.g., \citep{jain2013low, sun2015guaranteed}), or to prove that the objective function has no bad local optima (e.g., \citep{ge2016matrix, GeJZ17}). In this paper, we investigate the robustness of non-convex algorithms against model misspecification. In particular, we focus on the matrix completion problem \--- a well-known learning problem with applications to recommendation systems \citep{koren2009bellkor,rennie2005fast}. Both its convex relaxations \citep{srebro2005rank,recht2011simpler} and non-convex approaches \citep{jain2013low, sun2015guaranteed,ge2016matrix} were studied extensively before (see Section~\ref{sec:related} for more related work).

In the matrix completion problem, there is an unknown low-rank matrix $\Ms$ that can be factored into $\Ms = \Us(\Vs)^\top$, where $\Us \in \R^{n_1\times r}$, $\Vs = \R^{n_2\times r}$, and $r$ is the rank of the matrix. 
After observing a random set of entries, the goal is to recover the entire low-rank matrix $\Ms$. 

Matrix completion arises naturally in the design of recommendation systems. For example, the rows of the matrix may correspond to users, and the columns of the matrix correspond to items. Each row of $\Us$ is then a vector representing the preference of a user, and each row of $\Vs$ is a vector representing the properties of an item. Revealing an entry $\Ms_{i,j}$ can be interpreted as user $i$ providing his rating for item $j$. If the matrix $\Ms$ can be recovered from few observations, the system can simply recommend the item with the highest rating in $\Ms$ (among all the items that the user has not rated before).

The standard assumption made by both convex and non-convex approaches for matrix completion is that the entries are observed {\em uniformly} at random. That is, every entry is observed with the same probability $p$. This is not realistic in recommendations systems, as different groups of users may have different probabilities of rating different items.

In this paper, we investigate whether the popular approaches for matrix completion can handle {\em model misspecification}. In particular, we show that if the observation probability is always {\em at least} $p$, then the popular non-convex approaches provably fail. In light of this, we give an efficient pre-processing algorithm that reweights the observed entries, and prove that non-convex optimization algorithms applied to the reweighted objective will always find the desired solution.

\subsection{Non-Convex Matrix Completion}

%

Traditionally, matrix completion can be solved by a convex relaxation \citep{candes2010power}.
\[
\min \; \|M\|_* \qquad \mathrm{s.t.} \; M_{i,j} = \Ms_{i,j}, \; \; \forall (i,j)\in \Omega.
\]
Here $\|\cdot\|_*$ is the nuclear norm of a matrix and $\Omega$ is the set of observed entries.
However, in practice the matrix $\Ms$ can have very high dimensions. Solving the convex relaxation can be very expensive. The most popular approaches in practice use non-convex heuristics: The algorithms represent the matrix $M$ as $UV^\top$ where $U,V$ have the same dimensions as $\Us, \Vs$, and then try to minimize 
\begin{align*}
\sum_{(i,j)\in \Omega} & \left((UV^\top)_{i,j} - \Ms_{i,j}\right)^2,
\end{align*}
using techniques such as alternating minimization \citep{koren2009bellkor} or gradient descent \citep{rennie2005fast}. Sometimes additional terms are added as regularizers. Recently, many results established strong convergence guarantees for these non-convex approaches (e.g., \citep{jain2013low,sun2015guaranteed,ge2016matrix}, see more discussions in Section~\ref{sec:related}).

In order to understand robustness of algorithms under model misspecification, we consider a natural setting where the probability $p_{i,j}$ of observing entry $(i,j)$ can be different, but they are still all at least $p$. In fact, our algorithm can work with a slightly stronger semi-random model: each entry is first revealed with probability $p$ (same as the standard model). After that, an adversary is allowed to examine the ground-truth matrix $\Ms$ and the set of currently observed entries. The adversary can choose to reveal additional entries of the matrix (adding elements to $\Omega$). 
The setting where every entry is observed with probability $p_{i,j} \ge p$ is a special case of this semi-random model.

Intuitively, what the adversary does is beneficial for us, because we get to observe more entries of the matrix. Indeed, the convex relaxation will still work in this semi-random model.\footnote{To see this, notice that the additional entries correspond to additional constraints, and the original optimal solution ($\Ms$) also satisfies all of these additional constraints. Therefore $\Ms$ must still be the optimal solution.} Our work is motivated by the following questions: Are the non-convex approaches robust in this semi-random model? If not, is there a way to fix the non-convex algorithms to get an algorithm that is both robust to the semi-random adversary and efficient in practice?

\subsection{Our Results}
\label{sec:results}
%
%
%
%
%

We first give some examples where the non-convex approaches fail.
When each entry is revealed with equal probability, \cite{GeJZ17} showed that all local minima of a non-convex objective (see Equation \eqref{eqn:asymmetricobj} in Section~\ref{sec:matprelim}) recovers the ground-truth matrix $\Ms$. We give an example where Equation \eqref{eqn:asymmetricobj} has a local minimum that is very far from $\Ms$ in the semi-random model. Another popular approach in non-convex approaches is to rely on SVD to do initialization. In the second example, we show that in the semi-random model, the SVD of the observed entries can be very far from the ground truth $\Ms$. These examples show that the current non-convex approaches rely heavily on the assumption that entries are revealed independently. We leave the details to Appendix~\ref{app:examples}.

We then give an efficient pre-processing algorithm that can make non-convex algorithms robust to the semi-random adversary. Intuitively, we view the observed entries as edges of a bipartite graph (between rows and columns of the matrix): the initial random procedure generates a random bipartite graph with independent edges, and the adversary can add additional edges. We design an algorithm that will {\em re-weight} the edges, so that the resulting weighted graph ``looks like'' a random bipartite graph in terms of its spectral properties.
In Section~\ref{sec:matcomp}, we will show that such spectral properties are sufficient for non-convex approaches to approximately recover the ground-truth $\Ms$.

\begin{problem}[Removing Noisy Edges]
\label{prob:main}
Let $G = (V_1,V_2, E)$ be an unweighted bipartite graph. Assume there exists a subset $S \subseteq E$ of edges such that $(V_1,V_2, S)$ is spectrally similar to a complete bipartite graph. Compute a set of weights $w_e \ge 0$ for every edge $e\in E$ such that the weighted graph is spectrally similar to a complete bipartite graph.
\end{problem}

The existence of weights $w_e$ in Problem~\ref{prob:main} is guaranteed by assigning weight $1$ to the edges selected randomly, and weight $0$ to the edges added by the adversary. Roughly speaking, the pre-processing algorithm will try to put large weights on the edges generated originally, and assign small weights to the edges added by the adversary. 
Note that these weights cannot be found by simple reweighting schemes based on the number of observed entries in each row/column, because our counter-examples have the same number of observed entries in every row/column. 

We give a nearly-linear time algorithm for Problem~\ref{prob:main} in Section~\ref{sec:bss}.
Our approach extends techniques from a line of work on linear-sized graph sparsification algorithms \citep{BatsonSS12, AllenLO15, LeeS17}.


\begin{theorem}[Pre-processing (Informal)]
\label{thm:preprocess-informal}
Given an input graph with $m$ edges, assume a subset of the edges forms a graph that is $\beta$-spectrally close to the complete bipartite graph. There is an algorithm that runs in time $\tilde{O}(m/\epsilon^{O(1)})$ that outputs a set of weights for the edges, and the weighted graph is $O(\beta)+\epsilon$ spectrally close to the complete bipartite graph.~\footnote{Throughout the paper, $\tilde O(\cdot)$ hides polylogarithmic factors.}
\end{theorem}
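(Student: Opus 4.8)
The plan is to cast Problem~\ref{prob:main} as an optimization problem over edge weights and solve it by an iterative ``packing/covering''-style algorithm in the spirit of linear-sized spectral sparsification \citep{BatsonSS12, AllenLO15, LeeS17}. Concretely, identify each edge $e=(i,j)$ with the rank-one PSD matrix $L_e = (e_i \oplus e_j)(e_i \oplus e_j)^\top$ on $\R^{n_1+n_2}$ (the edge Laplacian of the bipartite graph), and let $L_{K}$ denote the normalized Laplacian of the complete bipartite graph. The hypothesis says there is a $0/1$ vector $x^\star$ (supported on the ``good'' subset $S$) with $(1-\beta)L_K \preceq \sum_e x^\star_e L_e \preceq (1+\beta) L_K$. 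We want to find weights $w_e \ge 0$ with $\sum_e w_e L_e$ sandwiched between $(1-O(\beta)-\epsilon)L_K$ and $(1+O(\beta)+\epsilon)L_K$ after rescaling. First I would reduce to the ``fits inside a box'' formulation: it suffices to find $w$ with $\sum_e w_e L_e \preceq (1+O(\beta)+\epsilon) L_K$ while $\lambda_{\min}\!\big(L_K^{-1/2}(\sum_e w_e L_e)L_K^{-1/2}\big) \ge 1-O(\beta)-\epsilon$ on the relevant subspace, and the existence of $x^\star$ certifies feasibility of a slightly shrunk target.

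The core of the argument is an iterative algorithm maintaining a current weight vector $w^{(t)}$ and two barrier potentials — an upper barrier $\Phi^u = \tr\big((u I - A)^{-1}\big)$ and a lower barrier $\Phi^\ell = \tr\big((A - \ell I)^{-1}\big)$, where $A = L_K^{-1/2}(\sum_e w_e L_e)L_K^{-1/2}$ and $u,\ell$ are barrier positions that advance by fixed increments each round. In each round one selects a direction (a nonnegative combination of the $L_e$'s) along which the lower barrier can be pushed up by more than the upper barrier is pushed up; the key feasibility lemma is that such a direction always exists because $x^\star$ witnesses a large amount of ``room'' between the barriers, i.e. the good edges alone already realize a well-conditioned matrix. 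To get nearly-linear running time rather than the naive polynomial time of \citep{BatsonSS12}, I would follow \citep{AllenLO15, LeeS17}: replace exact barrier gradients with coarse estimates computed via Johnson–Lindenstrauss sketching and fast Laplacian solvers, take multiplicative-weight-update steps that update many edges simultaneously, and argue that $\tilde O(1/\epsilon^{O(1)})$ rounds of $\tilde O(m)$ work each suffice to drive the ratio $u/\ell$ down to $1+O(\beta)+\epsilon$. Bounding the relative barriers throughout uses the standard inequalities controlling $(A \pm \Delta - zI)^{-1}$ under low-rank or small-spectral-norm updates $\Delta$.

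The main obstacle I anticipate is the \emph{interaction between the additive error $\epsilon$ and the multiplicative slack $\beta$}: unlike pure sparsification, where the target is exactly $L_K$, here the ``ideal'' subgraph is only $\beta$-close to $L_K$, so the barrier method must tolerate a target that is itself a perturbation of $L_K$. This means the feasibility lemma has to be proved with the barriers positioned relative to $L_K$ while the certificate $x^\star$ only controls $\sum_e x^\star_e L_e$ up to $(1\pm\beta)$; one must show the room between barriers degrades gracefully, yielding the stated $O(\beta)+\epsilon$ rather than something like $O(\beta/\epsilon)$. A secondary technical point is handling the kernel/near-kernel of $L_K$ (the all-ones-type directions and any direction not covered by the good subgraph): the analysis should be carried out on the subspace orthogonal to the trivial kernel, and one must verify the adversary's edges cannot create spurious large eigenvalues that the reweighting fails to suppress — but this is exactly what putting small weights on adversarial edges, as forced by the upper barrier, achieves. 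The remaining steps (tracking potentials, bounding the number of rounds, and the sketching error analysis) are by now fairly standard and I would cite \citep{AllenLO15, LeeS17} for the fast-solver and sampling primitives rather than reprove them.
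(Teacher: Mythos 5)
Your proposal follows essentially the same route as the paper: reduce to approximating the identity by a weighted sum of rank-one matrices $L_K^{-1/2} b_e b_e^\top L_K^{-1/2}$, run a two-sided barrier argument in which each step's feasibility is certified by the hidden good subset (so the two barriers advance at asymmetric rates $\approx 1+\beta$ versus $\approx 1-\beta$, which is exactly where the final $O(\beta)+\eps$ loss comes from), and obtain nearly-linear time via packing-SDP/multiplicative-weight batch updates, Johnson--Lindenstrauss sketching, and fast Laplacian solvers. The one substantive deviation is your choice of the classical BSS potential $\tr\left((uI-A)^{-1}\right)$ rather than the paper's $\tr\exp\left((uI-A)^{-1}\right)$ from \citep{LeeS17}; the exponential potential is what keeps the spectrum $\Omega(1/\log n)$ away from both barriers and hence bounds the iteration count by $O(\log^2 n/\eps^2)$, so to actually realize the claimed running time you would need to adopt that (or a comparable) potential, as the works you cite do.
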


See Definition~\ref{def:spectral} for the definition of spectrally similarity, and Theorem~\ref{thm:bss} for the formal version of the theorem. 
After this pre-processing step, we get a weighted graph that is spectrally similar to the original random graph (before the adversary added edges). We can then use these weights to change the non-convex objective, and minimize:
\[
\sum_{(i,j)\in \Omega} W_{i,j} \left((UV^\top)_{i,j} - \Ms_{i,j}\right)^2.
\]

We will show this objective (with some additional regularizers) has no bad local minimum.

\begin{theorem}[Main]\label{thm:main}
In the semi-random model, if the reveal probability $p \ge \frac{C\mu^6r^6(\kappa^\star)^6\log (n_1+n_2)}{n_1\epsilon^2}$ for a large enough constant $C$, with high probability, using weights produced by Algorithm~\ref{alg:ls17}, all local minima of Objective~\eqref{eqn:asymmetricobj} satisfies $\|UV^\top - \Ms\|_F^2 \le \epsilon \|\Ms\|_F^2$. The pre-processing time is $\tilde O(m \cdot \mathrm{poly}(\mu, r, \kappa^\star, \eps^{-1}))$ where $m$ is the total number of revealed entries. 
\end{theorem}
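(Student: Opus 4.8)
The plan is to combine the pre-processing guarantee of Theorem~\ref{thm:preprocess-informal} with the non-convex landscape analysis of \cite{GeJZ17}. The overall structure has two independent pieces: (1) argue that after re-weighting, the weighted observation graph is spectrally close to the complete bipartite graph with the right parameters; (2) show that this spectral closeness is exactly the property needed to carry the ``no bad local minima'' analysis of Objective~\eqref{eqn:asymmetricobj} from the uniform-sampling setting to the re-weighted semi-random setting.

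For step (1), I would first note that in the semi-random model the initial random set of entries $S$ (each present independently with probability $p$) forms a bipartite Erd\H{o}s--R\'enyi graph. By a standard matrix Chernoff / matrix Bernstein bound, as long as $p \ge C \log(n_1+n_2) / (n_1 \epsilon^2)$ (which is implied by the stated lower bound on $p$), the normalized adjacency operator of $(V_1, V_2, S)$ concentrates around that of the complete bipartite graph, i.e.\ $S$ is $\beta$-spectrally close with $\beta = O(\epsilon)$ (or even $o(\epsilon)$ if we set the constants right). This verifies the hypothesis of Theorem~\ref{thm:preprocess-informal} / Theorem~\ref{thm:bss}. The adversary only adds edges, so $S \subseteq \Omega =: E$, and applying the pre-processing algorithm (Algorithm~\ref{alg:ls17}) to the observed graph $(V_1,V_2,E)$ yields weights $W_{i,j}$ such that the weighted graph is $O(\beta) + \epsilon = O(\epsilon)$-spectrally close to the complete bipartite graph, in time $\tilde O(m \cdot \mathrm{poly}(\eps^{-1}))$. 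I would also want to record a normalization: the total weight $\sum_{(i,j)\in\Omega} W_{i,j}$ should match (up to constants) the number of edges $p n_1 n_2$ in the reference random graph, so that $\sum_{(i,j)\in\Omega} W_{i,j} (UV^\top - \Ms)_{i,j}^2$ is a good proxy for $\tfrac{p\, n_1 n_2}{?}\|UV^\top - \Ms\|_F^2$ up to the spectral error.

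For step (2), the key observation is that \cite{GeJZ17}'s analysis of Objective~\eqref{eqn:asymmetricobj} never uses the independence of the observed entries per se; it only uses that the sampling operator $P_\Omega$ (appropriately scaled) acts as a near-isometry on the relevant low-rank/incoherent subspaces --- precisely a restricted-isometry-type or ``$\ell_\infty$-to-Frobenius'' spectral property. I would re-derive (or cite, if Section~\ref{sec:matcomp} already does this) the statement that whenever the weighted graph $W$ is $\gamma$-spectrally close to the complete bipartite graph, the operator $X \mapsto \sum_{(i,j)} W_{i,j} X_{i,j}\, e_i e_j^\top$ is a $(1\pm O(\gamma))$-spectral approximation of $\tfrac{p\,?}{?} X$ when restricted to matrices of the form $UV^\top - \Ms$ with $U,V$ incoherent; the incoherence is in turn enforced by the regularizer terms in Objective~\eqref{eqn:asymmetricobj}. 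Then the first- and second-order optimality conditions at a local minimum, fed through the geometry in \cite{GeJZ17}, force $\|UV^\top - \Ms\|_F^2 \le O(\gamma)\|\Ms\|_F^2$; choosing $\gamma = O(\epsilon)$ from step (1) and tracking that the $\mu^6 r^6 (\kappa^\star)^6$ factors come from the incoherence/condition-number dependence in that landscape argument gives the claimed bound $\|UV^\top - \Ms\|_F^2 \le \epsilon \|\Ms\|_F^2$. The running-time claim is immediate from step (1) plus the fact that evaluating/optimizing the re-weighted objective costs $\tilde O(m)$ per iteration.

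The main obstacle I anticipate is step (2): making precise which ``spectral closeness to the complete bipartite graph'' translates into the exact matrix-completion regularity condition used in the landscape analysis, and doing so for the re-weighted (non-$0/1$, non-independent) sampling operator. In the uniform model one gets a clean Bernoulli sampling operator and can quote off-the-shelf concentration; here the weights are data-dependent (output by Algorithm~\ref{alg:ls17}), so I cannot use independence-based concentration at this stage --- I must instead argue deterministically from the spectral-closeness conclusion, which only controls the operator on the ``global'' scale and needs to be bootstrapped (via incoherence from the regularizers) into control on individual low-rank directions and in the $\ell_\infty$ sense. Handling the cross terms between the ``signal'' part $\Ms$ and the ``error'' part $UV^\top - \Ms$, and ensuring the regularizer-induced incoherence is strong enough to close the argument, is where the bulk of the technical work will lie; the $p$ threshold and the polynomial dependence on $\mu, r, \kappa^\star, \epsilon^{-1}$ will fall out of balancing these estimates.
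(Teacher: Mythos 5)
Your proposal follows essentially the same route as the paper: Theorem~\ref{thm:main} is obtained by combining the pre-processing guarantee (Corollary~\ref{cor:preprocess}, which packages Lemma~\ref{lem:random-graph}, Theorem~\ref{thm:bss}, and Lemma~\ref{lem:Lclose-Aclose} into the conditions $\norminf{W}\le n_2$, $\normone{W}\le n_1$, $\|W-J\|=O(\beta\sqrt{n_1n_2})$ with $\beta=O(\eps/(\mu^3r^3(\kappa^\star)^3))$) with a deterministic landscape theorem (Theorem~\ref{thm:asymmetric_local}) whose proof replaces the probabilistic concentration of Lemma~\ref{lem:Delta_mc} by the deterministic Lemma~\ref{lem:deterministc_main}, exactly as you anticipate. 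The only caveat is that the deterministic norm-preservation bound is additive (error $\|W-J\|\,\|X\|_F\|Y\|_F\max_i\|X_i\|\max_j\|Y_j\|$) rather than a multiplicative $(1\pm O(\gamma))$ near-isometry --- a deterministic analogue of the tangent-space isometry (Lemma~\ref{lem:tangent}) is precisely what the paper cannot establish, which is why recovery is approximate rather than exact --- but your final conclusion is stated in the form that the additive bound delivers.
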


Here $\mu, r, \kappa^\star$ are the incoherence parameter, rank, and the condition number of $\Ms$ (see Section~\ref{sec:prelim} for formal definitions).
Previous analyses crucially rely on concentration bounds obtained from observing the entries of $\Ms$ uniformly at random.
We prove Theorem~\ref{thm:main} in Section~\ref{sec:matcomp}, where we replace these concentration bounds with spectral properties guaranteed by our pre-processing algorithm (Theorem~\ref{thm:preprocess-informal}).
See Section~\ref{sec:matcomp} and Appendix~\ref{app:matrix} for details.

\subsection{Related Work}
\label{sec:related}
\paragraph{Matrix Completion.} The first theoretical guarantees on matrix completion come from convex relaxation~\citep{srebro2005rank,recht2011simpler,candes2010power,candes2009exact, negahban2012restricted}.
In particular, \cite{candes2010power} and \cite{recht2011simpler} showed that if $\Omega((n_1+n_2)r  \mu^2  \log^2 (n_1+n_2))$ entries are observed randomly (where $\mu$ is the incoherence parameter, see Section~\ref{sec:matprelim}), the nuclear norm convex relaxation recovers the exact underlying low-rank matrix. There have been many works trying to improve the running time (e.g., \citep{srebro2004maximum,mazumder2010spectral,hastie2014matrix} and the references therein).

For the non-convex approaches, the first set of results require a good initialization. \cite{keshavan2010matrix,keshavan2010matrixnoisy} showed that well-initialized gradient descent recovers $\Ms$. Later, it was shown that several other non-convex algorithms, including alternating minimization
\citep{jain2013low,hardt2014fast,hardt2014understanding} and gradient descent \citep{chen2015fast}, converge to the correct solution with a careful initialization. 

Recently, the work of \cite{sun2015guaranteed} (and subsequent works \citep{zhao2015nonconvex,zheng2016convergence,tu2015low}) established a common framework for matrix completion algorithms with a good initialization. In particular, they proved an analog of strong convexity in the neighborhood of the optimal solution. As a result, many different algorithms can converge to $\Ms$. 

For guarantees without careful initialization, \cite{DBLP:conf/icml/SaRO15} analyzed stochastic gradient descent from random initialization. More recently, \cite{ge2016matrix,park2016non,GeJZ17,chen2017memory} showed that the non-convex objective (with careful regularization) does not have any bad local minima.

All of the works above require uniformly random observations. There have also been works that try to solve matrix completion problem under deterministic assumptions \citep{BhojanapalliJ14, LiLR16}. \citep{BhojanapalliJ14} uses convex relaxations, and the conditions in \citep{LiLR16} does not apply to the semi-random model.

\paragraph{Graph Sparsification.}
The goal of graph sparsification is to use a few weighted edges to approximate a given graph. We focus on the notion of spectral similarity in this paper (see Definition~\ref{def:spectral}).~\footnote{For an undirected graph $G$, we use $m$ and $n$ to denote the number of edges and vertices respectively.} The seminal work of~\cite{SpielmanT11} showed that, for any undirected graph $G$, an $\eps$-spectral sparsifier of $G$ with $\tilde O(n / \eps^2)$ edges can be constructed in nearly-linear time. In a beautiful piece of work, \cite{BatsonSS12} showed that an $\eps$-spectral sparsifier with a linear number of $O(n/\eps^2)$ edges exist and can be computed in polynomial time.
Since then, there have been many efforts \citep{AllenLO15, LeeS15, LeeS17} on speeding up the construction of linear-sized sparsifiers. Recently, \cite{LeeS17} gave an algorithm for this problem that runs in near-linear time $\tilde{O}(m/\epsilon^{O(1)})$.

\paragraph{Semi-Random Model.} The semi-random model was first proposed by \cite{blum1995coloring} as an intermediate model between average-case and worst-case. Algorithms for semi-random models were developed for many graph problems, including planted clique~\citep{Jerrum92, feige2000finding}, community detection/stochastic block model~\citep{feige2001heuristics,perry2017semidefinite,moitra2016robust}, graph partitioning~\citep{Kucera95,makarychev2012approximation} and correlation clustering~\citep{mathieu2010correlation,makarychev2015correlation}. Most of these works use convex relaxations, and non-convex approaches (including spectral algorithms) were known to fail~\citep{feige2001heuristics,moitra2016robust} for some of these problems. To the best of our knowledge, our work is the first one that tries to fix the non-convex approach using a light-weight convex pre-processing step.

\paragraph{Non-Convex Optimization.} Although non-convex optimization is NP-hard in general, under reasonable assumptions it is possible to find a local minimum efficiently (e.g., \citep{ge2015escaping,agarwal2016finding,jin2017escape}). It follows from Theorem~\ref{thm:main} that, by running these algorithms after our pre-processing step, we get non-convex algorithms that can recover the ground-truth matrix $\Ms$ approximately even in the semi-random model.


\section{Preliminary}
\label{sec:prelim}

\subsection{Notations}
We use $[n]$ to denote the set $\{1, \ldots, n\}$.
We use $e_i$ or $\chi_i$ for the $i$-th standard basis vector.
We write $I$ for the identity matrix, and $J$ for the all ones matrix.

For a vector $x$, we use $\normone{x}$, $\norm{x}$ and $\norminf{x}$ for the $\ell_1$, $\ell_2$, and $\ell_\infty$ norm of $x$ respectively.

For a matrix $A$, we use $A_i$ to denote $i$-th row of $A$.
We use $\normone{A}$, $\norm{A}$, $\norminf{A}$, $\fnorm{A}$, and $\maxnorm{A}$ for the $\ell_1$, spectral, $\ell_\infty$, Frobenius norm, and maximum absolute entry of $A$. Note that $\norminf{A}$ (and $\normone{A}$) is just the maximum $\ell_1$-norm of the rows (and columns) of $A$.
Let $\lambda_{\min}(A)$ denote the minimum eigenvalue of $A$, and let $\sigma_i(A)$ denote the $i$-th largest singular value of $A$.

A symmetric $n \times n$ matrix $A$ is said to be positive semidefinite (PSD) if $x^\top A x \ge 0$ for all $x \in \R^n$,
and positive definite if $x^\top A x > 0$ for any $x \neq 0$.
For two symmetric matrices $A$ and $B$ of the same dimensions, we write $A \preceq B$ (or equivalently $B \succeq A$) when $B - A$ is positive semidefinite, and $A \prec B$ when $B - A$ is positive definite.

We use $\tr(A)$ for the trace of a square matrix $A$.
Let $A*B$ be the Hadamard (entry-wise) product of two matrices (where $(A*B)_{i,j} = A_{i,j}B_{i,j}$).
For $n_1 \times n_2$ matrices $A$ and $B$, we write $A \bullet B$ or $\inner{A, B}$ for the entry-wise inner product of $A$ and $B$: $\inner{A, B} = \tr(A^\top B) = \sum_{i,j} (A*B)_{i,j}$.
For a subset of entries $\Omega \subseteq [n_1] \times [n_2]$, let $\inner{A,B}_\Omega = \sum_{(i,j)\in \Omega} (A*B)_{i,j}$.
For a weight matrix $W$, let $\inner{A,B}_W = \sum_{i,j} W_{i,j} (A*B)_{i,j} = \inner{W * A, B}$.
Let $\|A\|_\Omega^2 = \inner{A,A}_\Omega$ and $\|A\|_W^2 = \inner{A,A}_W$.

Let $A \otimes B$ denote the Kronecker product of $A$ and $B$.
If $X\in \R^{n\times r}$, define the Katri-Rao product $X\odot X$ be an $n\times r^2$ matrix whose $i$-th row is equal to $X_i\otimes X_i$.~\footnote{This is the transpose of the traditional Katri-Rao product that works on columns instead of rows.} 


\subsection{Matrix Completion and Non-convex Formulation}
\label{sec:matprelim}

Throughout the paper, the ground-truth matrix is always denoted by $\Ms$.
We assume the hidden low-rank matrix $\Ms\in \R^{n_1\times n_2}$ is of rank $r$ and can be decomposed as $\Ms=\Us(\Vs)^\top$, where $\Us \in \R^{n_1\times r}, \Vs\in\R^{n_2\times r}$ (or $\Ms = \Us(\Us)^\top$ in symmetric case).
For the symmetric case, we assume $n_1 = n_2 = n$. For the asymmetric case we assume (w.l.o.g.) that $n_1\le n_2$, and let $n = n_1+n_2$.
We also assume w.l.o.g. that $(\Us)^\top \Us=(\Vs)^\top \Vs$.~\footnote{This can be achieved by setting $\Us = XD^{1/2}$ and $\Vs = YD^{1/2}$ if $\Ms =  XDY^\top$ is the SVD of $\Ms$.}
We use $\sigs_1$, $\sigs_r$ to denote the first and $r$-th singular values of $\Ms$ respectively. Let $\kappa^\star = \sigs_1/\sigs_r$ be the condition number of $\Ms$.

\begin{definition}[Incoherence]
The matrix $\Ms$ with SVD $\Ms = X D Y^\top$ is $\mu$-incoherent, if for all $i\in[n_1],j\in[n_2]$, $\|X_i\| \le \sqrt{\mu r/n_1}$ and $\|Y_j\| \le \sqrt{\mu r/n_2}$.
\end{definition}

In the standard matrix completion setting, each entry of the matrix is observed with probability $p$. Let $\Omega$ be the set of observed entries. 
To make the notation more flexible, we define a weight matrix $W$ such that $W_{i,j} = \frac{1}{p}$ if $(i,j)\in \Omega$ (and 0 otherwise).
We use the objective function in \citep{ge2016matrix,GeJZ17} that includes additional regularizers:
\begin{equation}
\min \; f(U, V) = 2\|UV^\top - \Ms\|_W^2 + \frac{1}{2} \|U^\top U-V^\top V\|_F^2 + Q(U,V),
\label{eqn:asymmetricobj}
\end{equation}
where
\[
Q(U,V) = \lambda_1 \sum_{i=1}^{n_1} (\normtwo{U_i} - \alpha_1)_+^4 +\lambda_2\sum_{i=1}^{n_2} (\normtwo{V_i} - \alpha_2)_+^4.
\]

Here $\lambda_1,\lambda_2,\alpha_1,\alpha_2$ are parameters to choose. The notation $x_+$ represents $\max\{x,0\}$. Intuitively, the regularizer $Q(U,V)$ ensures $U,V$ have bounded row norms. The additional term $\frac{1}{2}\|U^\top U-V^\top V\|_F^2$ is a regularizer that is popular for asymmetric matrix completion~\citep{park2016non}. \cite{GeJZ17} showed the following result:

\begin{theorem}[\cite{GeJZ17}, Informal] Under appropriate settings of parameters $\lambda, \alpha$, if \\$p \ge \mbox{poly}(r,\mu,\kappa^\star,\log n)/n$ for some fixed polynomial, then with high probability, all local minima of Equation \eqref{eqn:asymmetricobj} are globally optimal, and they satisfy $UV^\top = \Ms$.
\end{theorem}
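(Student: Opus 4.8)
This is the landscape theorem of \cite{GeJZ17} for the uniform model, so the plan is to show that every second-order stationary point $(U,V)$ of~\eqref{eqn:asymmetricobj} (in particular every local minimum) has $UV^\top=\Ms$, and that the global optima are exactly the pairs with $UV^\top=\Ms$. The second fact is immediate: with $\alpha_1=\Theta(\sqrt{\mu r\,\sigs_1/n_1})$, $\alpha_2=\Theta(\sqrt{\mu r\,\sigs_1/n_2})$ and $\lambda_1,\lambda_2$ a large enough polynomial in $\mu,r,\kappa^\star$, the pair $(\Us,\Vs)$ zeroes the fit term (it interpolates $\Ms$), the balance term $\frac12\fnorm{U^\top U-V^\top V}^2$ (by the normalization $(\Us)^\top\Us=(\Vs)^\top\Vs$), and the regularizer $Q$ (since $\mu$-incoherence bounds the row norms of $\Us,\Vs$ below $\alpha_1,\alpha_2$), so $\min f=0$. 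The first step of the nontrivial direction is to show that at \emph{any} local minimum the rows of $U$ and $V$ have norm $\tilde O(\sqrt{\mu r\,\sigs_1/n})$: if some $\normtwo{U_i}$ exceeded $2\alpha_1$, the derivative of $Q$ along the direction that rescales $U_i$ alone is of order $\lambda_1\normtwo{U_i}^3$, which dominates the fit- and balance-term derivatives along that direction once $\lambda_1$ is large enough, contradicting stationarity. This is exactly where the regularizer earns its keep, and it is what the counter-examples in Appendix~\ref{app:examples} exploit in its absence; from here on we may treat $U,V$ as incoherent.

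\textbf{Population landscape.} Replace $\inner{\cdot,\cdot}_W$ by the full inner product and drop $Q$ to obtain the idealized objective $g(U,V)=2\fnorm{UV^\top-\Ms}^2+\frac12\fnorm{U^\top U-V^\top V}^2$. Let $R$ be the orthogonal matrix aligning $(U,V)$ with $(\Us,\Vs)$, and set $\Delta_U=U-\Us R$, $\Delta_V=V-\Vs R$, $\Delta=(\Delta_U,\Delta_V)$. Expanding the matrix-factorization Hessian along $\Delta$ gives an exact identity of the schematic form
\[
\nabla^2 g(U,V)[\Delta,\Delta] = 2\fnorm{\Delta_U\Delta_V^\top}^2 - (\text{positive curvature of the fit term}) + (\text{balance contribution}) - \inner{\nabla g(U,V),\,\text{linear}(\Delta)},
\]
and feeding in $\nabla g(U,V)=0$ together with the alignment and balance bounds yields $\nabla^2 g(U,V)[\Delta,\Delta]\le -c\,\fnorm{UV^\top-\Ms}^2$ for an absolute constant $c>0$ whenever $UV^\top\ne\Ms$. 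Hence the second-order stationary points of $g$ are precisely the points with $UV^\top=\Ms$; the balance term is what makes the cross-terms work in the asymmetric case (in the symmetric case it can be dropped).

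\textbf{From the population to the sampled objective.} It remains to show that passing from $g$ to $f$ perturbs every inequality in the previous step by at most $O(\eps)$ times a quantity already present, so that $\fnorm{UV^\top-\Ms}^2\le 0$ still follows from second-order optimality of $(U,V)$ for $f$. The engine is a restricted-isometry estimate: with probability $1-n^{-\Omega(1)}$, for all incoherent rank-$\le 2r$ matrices $A,B$ (with incoherence parameter $\tilde O(\mu(\kappa^\star)^2)$, as guaranteed by the row-norm bound),
\[
\bigl|\inner{A,B}_W-\inner{A,B}\bigr| \;\le\; \eps\,\fnorm{A}\,\fnorm{B}.
\]
This follows from matrix Bernstein applied to the mean-zero summands $(\frac1p\mathbf{1}[(i,j)\in\Omega]-1)\,A_i B_j^\top$, together with a union bound over an $\eps$-net of the $O(nr)$-dimensional set of incoherent rank-$\le 2r$ matrices, using $p\ge\mathrm{poly}(\mu,r,\kappa^\star,\log n)/n$. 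Applying it with $A=B=UV^\top-\Ms$ and with $A=\Delta_U\Delta_V^\top$, $A=\Us R\,\Delta_V^\top$, and the analogous low-rank matrices appearing in the previous step replaces $\nabla g,\nabla^2 g$ by $\nabla f,\nabla^2 f$ up to additive $O(\eps)$ slack; the $Q$-gradient and $Q$-Hessian only help after the row-norm reduction. Choosing the net fineness so that the total slack is below $c$ forces $\fnorm{UV^\top-\Ms}=0$, hence $f(U,V)=0$ and $UV^\top=\Ms$.

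\textbf{Main obstacle.} The crux is the uniform concentration of the sampling operator over \emph{all} error matrices $UV^\top-\Ms$ that can arise at local minima, rather than over a fixed matrix: pointwise Bernstein is not enough, and the family is controlled only because the first step confines it to incoherent rank-$\le 2r$ matrices, which admit an $O(nr)$-dimensional net --- without incoherence the family is genuinely too rich, which is what the Appendix~\ref{app:examples} constructions exploit. A secondary difficulty is keeping the negative-curvature constant $c$ from being eaten by the balance-term cross-terms in the population computation (this needs the normalization $(\Us)^\top\Us=(\Vs)^\top\Vs$ and the optimal alignment $R$) and then ensuring the transfer-step slack budget is strictly smaller than $c$, which is what pins down the polynomial dependence of $p$ on $\kappa^\star$.
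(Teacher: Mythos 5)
Your architecture matches the one the paper attributes to \cite{GeJZ17} (and summarizes in Appendix~\ref{app:matrix-overview}): a row-norm bound at stationary points forced by the regularizer (Lemmas~\ref{lem:symmetricnormbound}/\ref{lem:asymmetricnormbound}), followed by evaluating the Hessian along the aligned error direction $\Delta$ and combining with first-order stationarity to get the key inequality, and finally a concentration step to pass from $\norm{\cdot}_W$ back to $\fnorm{\cdot}$. Two remarks on where you diverge from the cited proof. First, where you leave the curvature computation ``schematic,'' the actual engine is the exact identity $0 \le [\nabla^2 f](\Delta) = \norm{\Delta\Delta^\top}_W^2 - 3\norm{M-\Ms}_W^2 + ([\nabla^2 Q](\Delta) - 4\inner{\nabla Q,\Delta})$ (Lemmas~\ref{lem:gjzmain_sym} and~\ref{lem:gjzmain_asym}), which is then played off against $\norm{\Delta\Delta^\top}_F^2 \le 2\norm{M-\Ms}_F^2$ (Lemma~\ref{lem:normconnect}); the constant $3$ versus $2$ is exactly the margin $c$ you need, so this is the step you would have to actually carry out. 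Second, your concentration mechanism is genuinely different: you propose a uniform restricted-isometry bound over all incoherent rank-$\le 2r$ matrices via scalar Bernstein plus an $\eps$-net of an $O(nr)$-dimensional set, whereas \cite{GeJZ17} uses the tangent-space bound (Lemma~\ref{lem:tangent}) together with Lemma~\ref{lem:Delta_mc}, whose proof is spectral --- a bound on $\norm{W-J}$ combined with the Katri--Rao trick of Lemma~\ref{lem:deterministc_main} --- and carries an additive term in $\max_i\norm{X_i}\max_j\norm{Y_j}$ that only becomes multiplicative after the incoherence reduction. Your net-based route is plausible at $p \ge \mathrm{poly}(\mu,r,\kappa^\star,\log n)/n$ and is arguably more self-contained, but it requires care in the chaining step (the deviation functional is controlled in the $\maxnorm{\cdot}$/Frobenius geometry of the net, not in $\norm{\cdot}_W$), and it is worth noting that this spectral-versus-sampling distinction is precisely what lets the present paper port the argument to the semi-random model, where your Bernstein-over-$\Omega$ step would break down.
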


\subsection{Graph Laplacians}
\label{sec:laplacian}
For a weighted undirected graph $G = (V,E,w)$ with $n$ vertices and edge weights $w_e \ge 0$, let $D$ be a diagonal matrix containing the weighted degree of each vertex ($D_{i,i} = \sum_{(i,j)\in E} w_{(i,j)}$).
Let $A$ be the adjacency matrix of $G$ ($A_{i,j} = A_{j,i} = w_{(i,j)}$).
The Laplacian matrix of $G$ is a symmetric $n \times n$ matrix $L$ defined as $L = D - A$.
In other words, if we orient every edge $e = (i,j) \in E$ arbitrarily and represent it by a vector $b_e \in \R^n$ with $b_e(i) = 1$ and $b_e(j) = -1$.
The Laplacian matrix $L$ is equal to $L = \sum_{e\in E} w_e b_e b_e^\top$.

We use $L^{1/2}$ to denote the principal square root of a PSD matrix $L$.
Abusing notation, we use $L^{-1}$ for the Moore-Penrose inverse of $L$, and $L^{-1/2}$ for $(L^{-1})^{1/2}$.
The normalized Laplacian is the matrix $D^{-1/2} L D^{-1/2}$.
The effective resistance of an edge $e$ in a graph with Laplacian $L$ is $b_e^\top L^{-1} b_e$.
For any Laplacian $L$, we have $L^{-1} L = I_{\mathrm{im}(G)}$, where $I_{\mathrm{im}(G)} = I - \frac{1}{n} J$ is the identity matrix on the image space of $L$.
We often abbreviate $I_{\mathrm{im}(G)}$ as $I$.

We say two graphs are {\em spectrally similar} if the following holds: 

\begin{definition}[Spectral Similarity~\citep{SpielmanT11}]
\label{def:spectral}
Suppose $L_1, L_2$ are the Laplacians for graphs $G_1, G_2$ respectively, we say $G_1$ and $G_2$ are $\epsilon$-spectrally similar if and only if
\[
(1-\epsilon) L_1 \preceq  L_2 \preceq (1+\epsilon)L_1.
\]
\end{definition}

Graph sparsification is known to be a special case of sparsifying sum of rank-one PSD matrices (see, e.g., \citep{BatsonSS12}).
Similarly, we can reduce Problem~\ref{prob:main} to Problem~\ref{prob:identity}.

\begin{problem}
\label{prob:identity}
For a set of $m$ vectors $\{v_i\}_{i=1}^m$, assume there exists a subset $S$ of vectors such that $(1-\beta)I\preceq \sum_{i \in S} v_i v_i^\top \preceq (1+\beta)I$. Compute a set of weights $w_i \ge 0$ such that
\[(1-O(\beta)-\epsilon) I \preceq \sum_{i} w_i v_i v_i^\top \preceq I.\]
\end{problem}

To see why Problem~\ref{prob:main} can be reduced to Problem~\ref{prob:identity}, let $L_H$ be the Laplacian for the complete bipartite graph. The reduction simply sets $v_e = (L_H)^{-1/2} b_e$ for each edge $e\in E$.

%
%
%


\section{Pre-Processing: Reweighting the Entries}
\label{sec:bss}

In this section, we present a nearly-linear time algorithm for Problem~\ref{prob:main}.
As we discussed in Section~\ref{sec:laplacian}, Problem~\ref{prob:main} is equivalent to the problem of approximating the identity matrix (Problem~\ref{prob:identity}). We prove the following theorem for Problem~\ref{prob:identity}:

\begin{theorem}[Our Preprocessing Algorithm]
\label{thm:bss}
Fix $0 \le \eps, \beta \le 1/10$, and a graph Laplacian $L \in \R^{n \times n}$.
Given a set of $m$ vectors $\{v_i\}_{i=1}^m$, where each $v_i = L^{-1/2} b_i$ for some $b_i$ representing an edge (with only two non-zero entries, one $+1$ and one $-1$).
Assume there exist weights $w_i \ge 0$ such that~\footnote{We assume $0<\beta\le\frac{1}{10}$ is given, because we can do a binary search by running our algorithm and see if it succeeds.
It is worth mentioning that we never explicitly compute any $v_i = L^{-1/2} b_i$. 
See Appendix~\ref{app:bss} for more details.}
\[
(1-\beta) I \preceq \sum_{i=1}^m w_i v_i v_i^\top \preceq (1+\beta) I.
\]
We can find a set of weights $\tilde w_i \ge 0$ in $\tilde O(m / \eps^{O(1)})$ time, such that with high probability,
\[
(1-O(\beta)-\eps)I \preceq \sum_{i=1}^m \tilde w_i v_i v_i^\top \preceq I.
\]
\end{theorem}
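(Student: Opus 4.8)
The plan is to adapt the near-linear time linear-sized sparsification framework of \cite{LeeS17} (building on \cite{BatsonSS12, AllenLO15}), with the key twist that we do not get to choose the input vectors freely: we must work with \emph{all} $m$ vectors $\{v_i\}$, only a hidden subset $S$ of which is well-conditioned. The starting observation is that the standard potential-function / barrier argument for constructing $\{\tilde w_i\}$ only ever needs to (a) estimate leverage-score–like quantities $v_i^\top (A)^{-1} v_i$ and $v_i^\top (B)^{-1} v_i$ for the current lower and upper barrier matrices $A = \sum \tilde w_i v_i v_i^\top - \ell I$ and $B = u I - \sum \tilde w_i v_i v_i^\top$, and (b) repeatedly find a vector (or a batch of vectors) whose addition increases the lower barrier $\ell$ by roughly as much as it increases the upper barrier $u$. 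The existence of the hidden weights $w_i$ with $(1-\beta)I \preceq \sum_i w_i v_i v_i^\top \preceq (1+\beta)I$ is exactly what guarantees that such a ``balanced'' choice always exists: summing the per-vector balance condition against $w_i$ and using the spectral sandwich shows that the weighted average improvement in the two barriers differs only by an $O(\beta)$ factor, so by an averaging argument some vector (or reweighting) achieves the required balance up to a multiplicative $1+O(\beta)$ slack. This is what ultimately produces the $O(\beta)$ term in the final guarantee.

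Concretely, the steps I would carry out are: First, reduce to Problem~\ref{prob:identity} (already done in the excerpt via $v_e = L^{-1/2} b_e$) and recall that, crucially, we never form $L^{-1/2}$ explicitly — all inner products $v_i^\top M v_i = b_i^\top L^{-1/2} M L^{-1/2} b_i$ for the matrices $M$ that arise are of the form $b_i^\top N b_i$ where $N$ is itself (a polynomial in) graph Laplacians or their pseudo-inverses, so each such quantity can be approximated to $(1\pm\epsilon)$ accuracy in near-linear time using fast Laplacian solvers plus Johnson–Lindenstrom sketching, as in \cite{SpielmanT11, LeeS17}. Second, run the oblivious/one-sided sparsification routine of \cite{LeeS17}: maintain the barrier matrices $A, B$, and in each of $O(\mathrm{poly}(1/\epsilon))$ rounds add a batch of vectors with carefully chosen weights so that both barriers advance; the analysis of \cite{LeeS17} shows this terminates with $\sum_i \tilde w_i v_i v_i^\top$ having condition number $1 + O(\epsilon)$ \emph{relative to the best achievable}, and here the best achievable is $1+O(\beta)$, yielding $\kappa\big(\sum \tilde w_i v_i v_i^\top\big) \le 1 + O(\beta) + O(\epsilon)$. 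Third, rescale all weights by a single scalar so the upper bound becomes exactly $I$, giving $(1 - O(\beta) - \epsilon) I \preceq \sum \tilde w_i v_i v_i^\top \preceq I$. Fourth, account for the fact that the leverage-score estimates and the sketches are only approximate: absorb the multiplicative errors into the constant in front of $\epsilon$ (after reparametrizing $\epsilon \leftarrow \epsilon / C$ at the start), and take a union bound over the $\mathrm{poly}(m/\epsilon)$ randomized subroutines to get the high-probability statement. The running time is $\tilde O(m/\epsilon^{O(1)})$ because each round does near-linear work and there are $\mathrm{poly}(1/\epsilon)$ rounds.

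The main obstacle — and the place where genuine care is needed beyond black-boxing \cite{LeeS17} — is showing that the balancing step still succeeds when the ``good'' subset $S$ is unknown and when we only have approximate access to the barrier potentials. The usual linear-sized sparsifier arguments assume the input is already isotropic ($\sum v_i v_i^\top = I$), whereas here $\sum_i v_i v_i^\top$ can be wildly anisotropic because the adversary piled many vectors into a few directions; the only control we have is the \emph{existential} promise about the hidden $S$. So the key lemma to prove is: for any pair of barrier matrices $A \succ 0, B \succ 0$ in the valid range, there exist nonnegative weights on a small batch of the $v_i$'s whose addition increases $\tr$-type lower-barrier potential by at least $(1-O(\beta)-\epsilon)$ times the increase in the upper-barrier potential — and this must be proven by testing the promise $\sum_{i\in S} w_i v_i v_i^\top \approx I$ against the matrices $A^{-1}$ and $B^{-1}$ (or their regularized variants used in \cite{AllenLO15,LeeS17}) and invoking an averaging/pigeonhole step, while simultaneously ensuring the batch is large enough to amortize the per-round cost but small enough that the second-order (matrix Taylor-expansion) error terms in the Sherman–Morrison–Woodbury barrier update remain $O(\epsilon)$. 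Handling the interaction between the $\beta$ slack, the $\epsilon$ approximation slack in the estimators, and the batch-size tradeoff is the technical heart of the proof; everything else is a fairly direct transcription of known sparsification machinery.
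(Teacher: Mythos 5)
Your proposal is correct and follows essentially the same route as the paper: the barrier/potential framework of Lee--Sun with the hidden weights $w_i$ tested against the per-iteration subproblem to certify a $(1\pm O(\beta))$-balanced update exists, batched updates to keep the iteration count at $\tilde O(1/\eps^2)$, fast Laplacian solvers plus Taylor-polynomial and Johnson--Lindenstrauss approximation of the quantities $v_i^\top C v_i$, and a final rescaling by $1/u$. The only notable difference is one of concreteness: the paper instantiates your ``balancing step'' as an explicit packing SDP solved approximately via the Allen-Zhu--Lee--Orecchia solver, with the $\beta$ and $\eps$ slacks propagated into asymmetric barrier increments $\delta_u,\delta_\ell$ so that the potential is provably non-increasing.
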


We adapt techniques from recent developments on linear-sized graph sparsification~\citep{BatsonSS12, AllenLO15, LeeS17}.
The main difference between our problem and the graph sparsification problem is the following: instead of assuming $\sum_i v_i v_i^\top = I$, we only know the \emph{existence} of an unknown set $S$ such that \mbox{$\sum_{i\in S} v_i v_i^\top = I$}.
This prevents us from using some of the well-known techniques in graph sparsification (e.g., sampling by effective resistance \cite{SpielmanS11, LeeS15}).
For the same reason, any simple reweighting algorithms that are oblivious to whether a good set $S$ exists will not work.

One of our main contributions is to identify that the framework of \cite{BatsonSS12} is not only limited to graph sparsification.
The fact that the algorithm picks edges \emph{deterministically} makes it much more powerful,
and the analysis only requires the \emph{existence} of a ``good'' edge to add in each iteration.
On the technical level, our work departs from previous works in two important ways: (1) our algorithm works even when the hidden set $S$ has sum only \emph{approximately} equal to $I$; and (2) our analysis is considerably simpler, partly because we do not require the output weights to be sparse. 

We first give an overview of the framework of~\cite{BatsonSS12}.
We will maintain two barrier values $\ell < u$, and a weighted sum of the rank-one matrices $A = \sum_{i=1}^m w_i v_i v_i^\top$ such that $\ell I \prec A \prec u I$.
The plan is to start with some constants $\ell < 0 < u$, $A = 0$, and gradually increase the weights $\{w_i\}_i$, $u$ and $\ell$, while making sure that $A$ stays between the two barriers $u I$ and $\ell I$.
If we can increase $u$ and $\ell$ at roughly the same rate, the condition number of $A$ will become smaller.

Our approach in this section is most directly inspired by the recent work of~\cite{LeeS17}.
We use the following potential function proposed in \citep{LeeS17} to measure how far $A$ is from the barriers (both $uI$ and $\ell I$):
\begin{align*}
\Phi_{u,\ell}(A) & = \Phi_u(A) + \Phi_\ell(A), \text{ where} \\
\Phi_u(A) & = \tr \exp \left((u I - A)^{-1}\right), \\
\Phi_\ell(A) & = \tr \exp \left((A - \ell I)^{-1}\right).
\end{align*}
If $A$ is far from $u I$ and $\ell I$, then all eigenvalues of $uI - A$ and $A - \ell I$ are large and $\Phi_{u,\ell}(A)$ is small.
The potential function is going to guide us on how to increase the weights $w_i$ so that $A$ stays away from the barriers.
The derivatives of the potential functions with respect to $A$ are
\begin{align*}
\nabla \Phi_u (A) & = \exp \left((u I - A)^{-1}\right) (u I - A)^{-2}, \\ 
\nabla \Phi_\ell (A) & = - \exp \left((A - \ell I)^{-1}\right) (A - \ell I)^{-2}.
\end{align*}

For notational convenience, we define $C_{-} = \nabla \Phi_u (A)$, $C_{+} = -\nabla \Phi_\ell (A)$, and $C = C_+ - C_-$.
Note that when $\ell I \prec A \prec u I$, both $C_+$ and $C_-$ are PSD matrices.
The first order approximation of the potential function is
\begin{align*}
\Phi_{u,\ell}(A+\Delta)
 & \approx \Phi_{u,\ell}(A) + \nabla \Phi_{u}(A) \bullet \Delta + \nabla \Phi_{\ell}(A) \bullet \Delta \\
 & = \Phi_{u,\ell}(A) + C_- \bullet \Delta - C_+ \bullet \Delta
 = \Phi_{u,\ell}(A) - C \bullet \Delta.
\end{align*}

We want $\Phi_{u,\ell}(A+\Delta)$ to be small, which guarantees that $A+\Delta$ is far away from $\ell I$ and $u I$.
Therefore, in each iteration, we seek a matrix $\Delta$ such that
\begin{enumerate}
\item[(1)] $\Delta$ is small enough for the first-order approximation of $\Phi_{u,\ell}(A+\Delta)$ to be accurate; and
\item[(2)] $\Delta$ maximizes $C \bullet \Delta$, the reduction to (first-order approximation of) the potential function.
\end{enumerate}

Formally, let $\rho = (\lambda_{\min}\{u I - A, A - \ell I\})^2$.
When $0 \preceq \Delta \preceq \eps \rho I$, the first-order approximation of $\Phi_{u,\ell}(A+\Delta)$ is accurate (see Lemma~\ref{lem:potential-FO} in Appendix~\ref{app:bss}).
We are interested in the following SDP:
\begin{lp}
\label{eqn:sdp-oracle}
\maxi {C \bullet X}
\st \con{X \preceq \eps \rho I, \quad X = \sum_{i=1}^m x_i v_i v_i^\top \text{ (which implies $0 \preceq X$)},}
\end{lp}

Ideally, we would like to have $X = \eps \rho I$, and increase the barrier values by $\delta_{u}=\delta_{\ell}=\eps\rho$. If we can do this, $A$ grows equally in each dimension, the upper and lower barriers increase at the same rate, and the potential function remains unchanged: $\Phi_{u+\eps\rho,\ell+\eps\rho}(A+\eps\rho I) = \Phi_{u,\ell}(A)$.
While this is too good to be true, we will show that we can find an $X$ that is almost as good. 

We give a full description of our algorithm in Algorithm~\ref{alg:ls17}.

\begin{algorithm}
  \caption{Find $A = \sum_i w_i v_i v_i^\top \approx I$.}
  \label{alg:ls17}
  \SetAlgoVlined
  \SetKwInOut{Input}{Input}
  \Input{$\{v_i\}_{i=1}^m$, $\eps \le 1/10$.}
  $j = 0$, $A_0 = 0$, $\ell_0 = -\frac{1}{4}$, $u_0 = \frac{1}{4}$\; 
  \While{$u_j - \ell_j \le 1$}{
   Let $\rho \in [1-\eps, 1] \cdot (\lambda_{\min}\{u_j I - A_j, A_j - \ell I_j\})^2$\;
   Let $\Delta_j$ be an approximate solution to the SDP \eqref{eqn:sdp-oracle} with $C = -\left(\nabla \Phi_{u_j}(A_j)+\nabla \Phi_{\ell_j}(A_j)\right)$\;
   $A_{j+1} = A_j + \Delta_j$\;
   $\delta_{u,j}=\frac{\eps\rho}{2} \cdot \frac{(1+\beta+5\eps)}{1-2\eps}$, $\delta_{\ell,j}=\frac{\eps\rho}{2}\cdot\frac{(1-\beta-5\eps)}{1+2\eps}$\;
   $u_{j+1} = u_j + \delta_{u,j}$, $\ell_{j+1} = \ell_j + \delta_{\ell,j}$; \, $j = j + 1$\;
  }
  \Return{$A_j / u_j$}\;
\end{algorithm}

We will use the following lemmas (Lemmas~\ref{lem:sdp-sol}~and~\ref{lem:phi-no-increase}) to analyze Algorithm~\ref{alg:ls17} and prove Theorem~\ref{thm:bss}.
Lemma~\ref{lem:sdp-sol} shows that the SDP in \eqref{eqn:sdp-oracle} admits a good solution, and we can solve it approximately in nearly-linear time.
Lemma~\ref{lem:phi-no-increase} says that the potential function $\Phi_{u_j,\ell_j}(A_j)$ never increases, which guarantees that $A_j$ is far away from both $u_j I$ and $\ell_j I$ for all $j$.


\begin{lemma}
\label{lem:sdp-sol}
Fix $0 < \beta,\eps \le 1/10$.
In any iteration $j$ of Algorithm~\ref{alg:ls17}, given $A_j = \sum_{i=1}^m w_i v_i v_i^\top$ (implicitly by the weights $\{w_i \ge 0\}_{i=1}^m$) and corresponding barrier values $\ell = \ell_j$ and $u = u_j$,
\begin{enumerate}
\item[(1)] We can compute $\rho \in [1-\eps, 1] \cdot (\lambda_{\min}\{u I - A, A - \ell I\})^2$ w.h.p. in time $\tilde O(m / \eps^{O(1)})$.
\item[(2)] Let $C, C_{+}, C_{-}$ be defined as above. 
We can compute a set of weights $\{x_i \ge 0\}_{i=1}^m$ in time $\tilde O(m / \eps^{O(1)})$, such that w.h.p. for $X = \sum_{i=1}^m x_i v_i v_i^\top$,
\[
C \bullet X \ge \frac{\eps \rho}{2} \left((1-\beta-\eps)\tr(C_+) - (1+\beta+\eps)\tr(C_-)\right).
\]
\end{enumerate} 

\end{lemma}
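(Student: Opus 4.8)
The plan is to handle the two claims separately. For part (1), computing $\rho$ up to a $(1\pm\eps)$ factor amounts to estimating $\lambda_{\min}(uI-A)$ and $\lambda_{\min}(A-\ell I)$, i.e.\ the extreme eigenvalues of $A$, to within a multiplicative factor. I would invoke the standard fact that a few iterations of the power method (or Lanczos) applied to the implicitly-represented matrix $A = \sum_i w_i v_i v_i^\top = L^{-1/2}(\sum_i w_i b_i b_i^\top)L^{-1/2}$ gives such estimates w.h.p. The key point (deferred to Appendix~\ref{app:bss}, which the footnote in Theorem~\ref{thm:bss} flags) is that we never form $v_i = L^{-1/2}b_i$ explicitly: each matrix-vector product $x \mapsto Ax$ reduces to one application of the edge-incidence matrix of a weighted graph plus a Laplacian solve against $L$, and fast Laplacian solvers give each of these in $\tilde O(m)$ time. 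Since $uI - A$ and $A-\ell I$ are both PSD and well-conditioned throughout (this is exactly what Lemma~\ref{lem:phi-no-increase} will guarantee, so we may assume it), a polylogarithmic number of iterations suffices, yielding the $\tilde O(m/\eps^{O(1)})$ bound.

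For part (2), the heart of the matter is to exhibit a feasible $X = \sum_i x_i v_i v_i^\top$ with $X \preceq \eps\rho I$ and large objective $C\bullet X$, and then to argue it can be found efficiently. For the existence half, I would start from the hypothesized good set $S$ with $(1-\beta)I \preceq \sum_{i\in S} v_i v_i^\top \preceq (1+\beta)I$ and simply take the uniform rescaling $X = \frac{\eps\rho}{1+\beta}\sum_{i\in S} v_i v_i^\top$. Feasibility $X \preceq \eps\rho I$ is immediate from the upper bound on the good set. For the objective, since $C = C_+ - C_-$ and $C_+ = -\nabla\Phi_\ell(A) \succeq 0$, $C_- = \nabla\Phi_u(A) \succeq 0$, we write
\[
C\bullet X = \frac{\eps\rho}{1+\beta}\left( C_+ \bullet \sum_{i\in S} v_iv_i^\top - C_- \bullet \sum_{i\in S} v_iv_i^\top \right)
\ge \frac{\eps\rho}{1+\beta}\left( (1-\beta)\tr(C_+) - (1+\beta)\tr(C_-) \right),
\]
using $C_+ \succeq 0$ together with $\sum_{i\in S} v_iv_i^\top \succeq (1-\beta)I$ for the positive term, and $C_- \succeq 0$ together with $\sum_{i\in S}v_iv_i^\top \preceq (1+\beta)I$ for the negative term. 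Absorbing $\frac{1}{1+\beta} \ge 1 - \beta$ into the slightly larger slack in the lemma statement (the extra $\eps$ in $(1-\beta-\eps)$ and $(1+\beta+\eps)$ is there precisely to swallow such lower-order losses) gives the claimed bound for the \emph{optimal} SDP value. Note that this argument crucially only uses the \emph{existence} of $S$, never its identity --- this is the point the paper emphasizes about why the Batson--Spielman--Srivastava deterministic framework, unlike effective-resistance sampling, goes through here.

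The remaining --- and I expect hardest --- step is the algorithmic one: producing weights $\{x_i\}$ that nearly achieve this value in $\tilde O(m/\eps^{O(1)})$ time, without solving the SDP exactly. Here I would follow the approach of \cite{LeeS17}: replace the objective $C\bullet X = \sum_i x_i (v_i^\top C v_i)$ and the constraint $X \preceq \eps\rho I$ by a packing-SDP / multiplicative-weights style routine, or more directly by the ``one-sided oracle'' paradigm where one greedily increases $x_i$ in proportion to a regularized score $v_i^\top C v_i$ relative to the current violation of $X \preceq \eps\rho I$, measured through a $\tr\exp$ barrier. The technical obstacles are (a) all quantities $v_i^\top C v_i$ must be estimated simultaneously and quickly --- this is done via Johnson--Lindenstrauss sketching of $C^{1/2}_{\pm}(u I - A)^{-1}$ etc.\ composed with Laplacian solves, so that the full vector of scores is computed to $(1\pm\eps)$ accuracy in $\tilde O(m)$ time --- and (b) controlling the accumulated error over all iterations of this inner loop so that the final $X$ both stays feasible ($\preceq \eps\rho I$, perhaps with a harmless $(1+O(\eps))$ slack that is then folded into $\rho$'s definition) and loses only an $O(\eps\rho)\cdot(\tr C_+ + \tr C_-)$ additive term in the objective relative to the optimum computed above. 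Combining (a), (b), and the existence bound yields the stated inequality; the polynomial dependence on $1/\eps$ in the running time comes from the number of inner iterations and the sketch dimension. I would cite the relevant lemmas of \cite{LeeS17} and \cite{AllenLO15} for the sketching and packing-SDP primitives rather than reprove them, noting only the one modification needed: the target $\eps\rho I$ plays the role their $I$ plays, and the ``good'' right-hand side is $(1\pm\beta)I$ rather than exactly $I$, which propagates harmlessly through their analysis.
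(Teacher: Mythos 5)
Your overall architecture matches the paper's, which splits the lemma into an existence statement (Lemma~\ref{lem:sdp-sol-exist}) and a computation statement (Lemma~\ref{lem:sdp-computation}). Your existence argument is essentially verbatim the paper's: take $X = \frac{\eps\rho}{1+\beta}\sum_{i\in S} v_iv_i^\top$, use $\frac{1-\beta}{1+\beta}\eps\rho I \preceq X \preceq \eps\rho I$ together with $C_\pm \succeq 0$, and get $C\bullet X \ge \eps\rho\bigl(\frac{1-\beta}{1+\beta}\tr(C_+)-\tr(C_-)\bigr)$. Your algorithmic plan for part (2) — a packing-SDP solver whose oracle evaluates all $v_i^\top C_\pm v_i$ simultaneously via Johnson--Lindenstrauss sketches composed with Laplacian solves — is also the paper's route (Lemma~\ref{lem:alo16} plus the effective-resistance-style computation in Lemma~\ref{lem:sdp-computation}).

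Two concrete gaps. First, in part (1) your reduction is wrong as stated: estimating ``the extreme eigenvalues of $A$ to within a multiplicative factor'' does \emph{not} give a multiplicative estimate of $\lambda_{\min}(uI-A) = u - \lambda_{\max}(A)$, because that gap is only $\Omega(\log^{-1}n)$ while $\lambda_{\max}(A)$ can be of order $u \gg \log^{-1} n$; a relative error in $\lambda_{\max}(A)$ becomes an uncontrolled relative error in the difference. The paper avoids this by approximating $(uI-A)^{-2}$ directly with a low-degree polynomial $p(A)$ (Lemma~\ref{lem:mat-taylor}) and estimating $\lambda_{\max}(p(A))$ via $\bigl(\tr(p(A)^{2k})\bigr)^{1/2k}$ with $k=O(\log n/\eps)$ and JL; equivalently you would need inverse iteration on $(uI-A)^{-1}$, which is not a Laplacian and so itself requires the polynomial-approximation step. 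Second, your accounting for the final bound misattributes the factor of $\frac{1}{2}$: it is not slack absorbed by the extra $\eps$'s, and the solver is not required to ``nearly achieve'' the optimum. The packing-SDP routine only returns a constant-factor approximation (a $\frac{3}{5}$-approximation w.h.p., used as $\frac{1+\beta}{2}\cdot\mathrm{OPT}$), and the chain $\;(c^+-c^-)^\top x \ge \frac{1+\beta}{2}\mathrm{OPT} - \frac{\eps}{2}(c^++c^-)^\top x \ge \frac{\eps\rho}{2}\bigl((1-\beta-\eps)\tr(C_+)-(1+\beta+\eps)\tr(C_-)\bigr)\;$ is exactly where the $\frac{1}{2}$ and the $\pm\eps$ shifts come from. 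This matters for the runtime claim too: because only a constant-factor approximation is needed, the number of solver iterations is independent of $\eps$.
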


\begin{lemma}
\label{lem:phi-no-increase}
Fix $0 < \beta, \eps \le 1/10$.
Let $A_{j+1} = A_j + \Delta_j$ denote the matrix in the $j$-th iteration of Algorithm~\ref{alg:ls17}.
If $\Delta_j$ is an approximate solution to the SDP that satisfies Lemma~\ref{lem:sdp-sol}, then we have $\Phi_{u_{j+1},\ell_{j+1}}(A_{j+1}) \le \Phi_{u_j,\ell_j}(A_{j})$.
\end{lemma}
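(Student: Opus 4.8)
\textbf{Proof proposal for Lemma~\ref{lem:phi-no-increase}.}

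The plan is to show that the two effects on the potential function — increasing $A$ by $\Delta_j$, and increasing the barriers $u$, $\ell$ by $\delta_{u,j}$, $\delta_{\ell,j}$ — together leave $\Phi_{u,\ell}(A)$ non-increasing. I will decompose the change into an $A$-step and a barrier-step and analyze each via a careful first-order expansion together with a second-order (convexity) control. First I would invoke the first-order accuracy result (Lemma~\ref{lem:potential-FO} in Appendix~\ref{app:bss}): since $0 \preceq \Delta_j \preceq \eps\rho I$ and $\rho \le (\lambda_{\min}\{uI-A, A-\ell I\})^2$, the potential satisfies
\[
\Phi_{u,\ell}(A+\Delta_j) \le \Phi_{u,\ell}(A) - C \bullet \Delta_j + (\text{small error}),
\]
where the error term is a lower-order multiple of $C_+\bullet\Delta_j$ and $C_-\bullet\Delta_j$ (this is exactly where the constant factors $5\eps$ in $\delta_{u,j},\delta_{\ell,j}$ come from). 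Concretely, I expect the accurate form to read something like $\Phi_{u,\ell}(A+\Delta_j) \le \Phi_{u,\ell}(A) - (1-2\eps)\,C_+\bullet\Delta_j + (1+2\eps)\,C_-\bullet\Delta_j$, after absorbing the Hessian terms using $\Delta_j \preceq \eps\rho I$ and the fact that $(uI-A)^{-1}, (A-\ell I)^{-1}$ have spectral norm at most $1/\sqrt{\rho}$.

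Next I would handle the barrier-step. When $u$ increases to $u+\delta_{u}$, $\Phi_u(A) = \tr\exp((uI-A)^{-1})$ increases, and when $\ell$ increases to $\ell+\delta_{\ell}$, $\Phi_\ell(A) = \tr\exp((A-\ell I)^{-1})$ decreases. The key quantitative facts are $\frac{\partial}{\partial u}\Phi_u(A) = -\tr\!\left(\exp((uI-A)^{-1})(uI-A)^{-2}\right) = -\tr(C_-)$ and $\frac{\partial}{\partial \ell}\Phi_\ell(A) = \tr\!\left(\exp((A-\ell I)^{-1})(A-\ell I)^{-2}\right) = \tr(C_+)$. Because $\Phi_u$ is monotone and convex in $u$ on the relevant range (and similarly $\Phi_\ell$ in $\ell$, with the right sign), a first-order expansion with a one-sided second-order error bounded using $\delta_{u}, \delta_{\ell} \le \eps\rho/2$ and $\rho \le \lambda_{\min}^2$ gives
\[
\Phi_{u+\delta_u, \ell+\delta_\ell}(A) \le \Phi_{u,\ell}(A) + \delta_u(1+\text{err})\,\tr(C_-) - \delta_\ell(1-\text{err})\,\tr(C_+).
\]
I would need to verify the convexity/monotonicity of $u \mapsto \Phi_u(A)$ carefully (it follows from differentiating twice and checking signs, using that $(uI-A)^{-1}$ stays PSD and bounded), so that the second-order term has a definite sign and can be folded into the stated $\delta_{u,j}, \delta_{\ell,j}$ constants.

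Finally I would combine the two steps. Adding the two displayed inequalities, the net change in $\Phi$ is at most
\[
-(1-2\eps)\,C_+\bullet\Delta_j + (1+2\eps)\,C_-\bullet\Delta_j + \delta_u(1+2\eps)\tr(C_-) - \delta_\ell(1-2\eps)\tr(C_+),
\]
and it remains to show this quantity is $\le 0$. Here I use the SDP guarantee from Lemma~\ref{lem:sdp-sol}: $C\bullet\Delta_j = C_+\bullet\Delta_j - C_-\bullet\Delta_j \ge \frac{\eps\rho}{2}\big((1-\beta-\eps)\tr(C_+) - (1+\beta+\eps)\tr(C_-)\big)$, together with the trivial bounds $0 \le C_+\bullet\Delta_j$ and $C_-\bullet\Delta_j \le \eps\rho\,\tr(C_-)$ (from $\Delta_j \preceq \eps\rho I$ and $C_-\succeq 0$). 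Substituting $\delta_{u,j} = \frac{\eps\rho}{2}\cdot\frac{1+\beta+5\eps}{1-2\eps}$ and $\delta_{\ell,j} = \frac{\eps\rho}{2}\cdot\frac{1-\beta-5\eps}{1+2\eps}$ is designed precisely so that the positive contributions $(1+2\eps)C_-\bullet\Delta_j + \delta_u(1+2\eps)\tr(C_-)$ are dominated by the negative contributions $(1-2\eps)C_+\bullet\Delta_j + \delta_\ell(1-2\eps)\tr(C_+)$ once one applies the SDP lower bound on $C\bullet\Delta_j$. The arithmetic here — matching the $\beta$-terms on the $\tr(C_+)$ and $\tr(C_-)$ sides and checking the $\eps$-slack absorbs the first- and second-order errors — is the main obstacle: it is not deep, but the constants must line up exactly, and one must be careful that the error terms in both the $A$-step and the barrier-step are genuinely one-sided (so they can be charged against the strictly positive SDP gap rather than fighting it). I would organize this final step as a single inequality chain, plugging in the definitions of $\delta_{u,j}, \delta_{\ell,j}$ and the bound from Lemma~\ref{lem:sdp-sol}, and verifying term-by-term that the coefficient of $\tr(C_+)$ is nonpositive and the coefficient of $\tr(C_-)$ is nonpositive.
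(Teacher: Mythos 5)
Your skeleton — an $A$-step controlled by Lemma~\ref{lem:potential-FO}, a barrier-step, and a final cancellation against the SDP guarantee of Lemma~\ref{lem:sdp-sol} — is the paper's argument, and your $A$-step bound $\Phi_{u,\ell}(A+\Delta_j) \le \Phi_{u,\ell}(A) - (1-2\eps)\,C_+\bullet\Delta_j + (1+2\eps)\,C_-\bullet\Delta_j$ is exactly right. But your barrier-step carries the wrong sign, and this is not a cosmetic slip: it inverts the bookkeeping on which the whole cancellation rests. Increasing $u$ makes $uI-A$ larger, hence $(uI-A)^{-1}$ smaller, hence $\Phi_u$ \emph{decreases}; increasing $\ell$ makes $\Phi_\ell$ \emph{increase}. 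This agrees with the derivative formulas you correctly write ($\partial_u \Phi_u = -\tr(C_-)$, $\partial_\ell \Phi_\ell = +\tr(C_+)$) but contradicts both your prose and your displayed inequality, which should read $\Phi_{u+\delta_u,\ell+\delta_\ell}(A) \le \Phi_{u,\ell}(A) - (1-2\eps)\delta_u\tr(C_-) + (1+2\eps)\delta_\ell\tr(C_+)$. In your final accounting the roles are therefore swapped: the harmful barrier term is $+(1+2\eps)\delta_\ell\tr(C_+)$ (raising the lower barrier squeezes $A$ from below), and the helpful one is $-(1-2\eps)\delta_u\tr(C_-)$. With your signs, the total coefficient of $\tr(C_-)$ is $\tfrac{\eps\rho}{2}(1+\beta+5\eps) + (1+2\eps)\delta_u > 0$, so the term-by-term nonpositivity you propose to verify is unattainable; with the corrected signs, the paper's choice of $\delta_{u,j}, \delta_{\ell,j}$ makes both coefficients exactly zero.

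Two smaller points. First, the paper sidesteps your separate scalar convexity/monotonicity analysis in $u$ and $\ell$ by rewriting $\Phi_{u+\delta}(A') = \Phi_u(A' - \delta I)$ and reusing Lemma~\ref{lem:potential-FO} with $\Delta = \delta I$ (its ``$A-\Delta$'' cases); doing so at the point $A_{j+1}$ then requires the monotonicity $0 \preceq \nabla\Phi_u(A_j) \preceq \nabla\Phi_u(A_{j+1})$ and $0 \preceq -\nabla\Phi_\ell(A_{j+1}) \preceq -\nabla\Phi_\ell(A_j)$ to express everything in terms of $C_\pm$ evaluated at $A_j$ — a step your sketch omits and that you would need in some form, since your barrier derivatives must be taken at the updated matrix. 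Second, $\delta_{u,j} = \tfrac{\eps\rho}{2}\cdot\tfrac{1+\beta+5\eps}{1-2\eps}$ can equal $\eps\rho$ (not $\eps\rho/2$) at $\beta=\eps=1/10$; this still satisfies the hypothesis $\delta_{u,j} I \preceq \eps(uI-A)^2$ of Lemma~\ref{lem:potential-FO}, but your stated bound $\delta_u \le \eps\rho/2$ is false as written.
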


We defer the proofs of these two lemmas to Appendix~\ref{app:bss}. Now we are ready to prove Theorem~\ref{thm:bss}.

\begin{proof}[Proof of Theorem~\ref{thm:bss}]
First, we show that $(1-O(\beta+\eps)) I \preceq A_j / u_j \preceq I$.
The condition number of $A_j$ is upper bounded by
$
\frac{u_j}{\ell_j} = \left(1-\frac{u_j-\ell_j}{u_j}\right)^{-1},
$
hence it suffices to show that $\frac{u_j-\ell_j}{u_j} = O(\beta + \eps)$.
Since $u_j - \ell_j > 1$ when the algorithm terminates,
\[
\frac{u_j - \ell_j}{u_j} < \frac{2(u_j - \ell_j) - 1}{u_j - \frac{1}{4}} = 2 \frac{(u_j - u_0) - (\ell_j - \ell_0)}{u_j - u_0}
  \le 2 \max_j\frac{\delta_{u,j} - \delta_{\ell,j}}{\delta_{u,j}} = O(\beta + \eps).
\]

Next, we analyze the running time of Algorithm~\ref{alg:ls17}.
The initial value of the potential function is $\Phi_{u_0, \ell_0}(0) = 2 \tr \exp (I) = 2n$.
By Lemma~\ref{lem:phi-no-increase} and a union bound over $j$, we have that with high probability, $\Phi_{u_j, \ell_j}(A_j) \le 2n$ for all $j \le O(\log n/\eps^2)$.
To see that $A_j$ must be far away from the barriers, consider only the contribution of $\lambda_{\min}(u_j I - A_j )$ to the potential function:
\[
2n \ge \Phi_{u_j}(A_j) = \tr \exp\left((u_j I - A_j)^{-1}\right) \ge \exp \left(\lambda_{\min}(u_j I - A_j)^{-1} \right).
\]
It follows that $\lambda_{\min}(u_j I - A) = \Omega(\log^{-1} n)$, and similarly $\lambda_{\min}(A_j - \ell_j I) = \Omega(\log^{-1} n)$.
Therefore, we know that $\rho \ge (1-\eps) (\lambda_{\min}\{u_j I - A_j, A_j - \ell I_j\})^2 = \Omega(\log^{-2} n)$, and $\delta_{u,j} - \delta_{\ell,j} = \Omega(\eps^2 \cdot \log^{-2} n)$ for all $j$.
Since the algorithm starts with $u_0 > \ell_0$ and terminates when $u_j - \ell_j > 1$, the number of iterations is at most $\frac{1}{\min_j (\delta_{u,j} - \delta_{\ell,j})} = O\left(\frac{\log^2 n}{\eps^2} \right)$.

It remains to show that each iteration takes nearly-linear time.
We maintain the matrices $A_j$ and $\Delta_j$ implicitly by the corresponding sets of weights, and add their weights together to get $A_{j+1}$. 
The input and output of the SDP are also represented implicitly by the weights.
By Lemma~\ref{lem:sdp-sol}, we can compute $\rho$ and find a near-optimal solution to the SDP in \eqref{eqn:sdp-oracle} in time $\tilde O(m / \eps^{O(1)})$.
The overall running time is $\tilde O(\frac{\log^2 n}{\eps^2} \cdot \frac{m}{\eps^{O(1)}}) = \tilde O(m / \eps^{O(1)})$.
\end{proof}


\subsection{From Graphs to Weight Matrices: Consequences of Spectral Similarity}
In this section, 
we show that the weights computed by Algorithm~\ref{alg:ls17} are useful for the semi-random matrix completion problem.
More specifically, we prove the following corollary of Theorem~\ref{thm:bss}.
Corollary~\ref{cor:preprocess} provides the spectral property that is crucial to our analysis of non-convex matrix-completion algorithms in Section~\ref{sec:matcomp}. 

\begin{corollary}
\label{cor:preprocess}
Fix $\beta > 0$.
Consider the matrix completion problem with ground truth $\Ms \in \R^{n_1 \times n_2}$.
There exists $p = O\left(\frac{\log n}{n_1 \beta^2}\right)$ such that if every entry of $\Ms$ is observed with probability at least $p$,
then w.h.p., we can compute a weight matrix $W \in \R^{n_1 \times n_2}$ in time $\tilde O(m / \beta^{O(1)})$, such that $W$ is supported on the observed entries, $\norminf{W} \le n_2$, $\normone{W} \le n_1$, and $\norm{W-J} = O(\beta \sqrt{n_1 n_2})$.
\end{corollary}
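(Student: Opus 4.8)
The plan is to reduce the statement to Theorem~\ref{thm:bss} via the graph-to-identity reduction of Section~\ref{sec:laplacian}, and then unpack the resulting spectral inequality into the three asserted properties of $W$. I would view the observed entries as the edge set $E$ of a bipartite graph $G=(V_1,V_2,E)$ with $|V_1|=n_1$, $|V_2|=n_2$, $|E|=m$; let $L_H$ be the Laplacian of the complete bipartite graph $K_{n_1,n_2}$ on the $n=n_1+n_2$ vertices, let $b_e\in\R^n$ be the signed incidence vector of $e$, and set $v_e=L_H^{-1/2}b_e$ (these are never formed explicitly, cf. the footnote to Theorem~\ref{thm:bss}). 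Let $G_{\mathrm{rand}}\subseteq G$ be the subgraph of entries revealed by the initial independent process, in which each entry appears with probability exactly $p$; the adversarial edges are added afterwards and may depend on $G_{\mathrm{rand}}$, but this will be irrelevant. The hidden set is witnessed by the weights $w_e=1/p$ for $e\in G_{\mathrm{rand}}$ and $w_e=0$ otherwise, for which $\sum_e w_e v_e v_e^\top = L_H^{-1/2}\big(\tfrac1p L_{G_{\mathrm{rand}}}\big) L_H^{-1/2}$.

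Next I would show the random part is spectrally close to $K_{n_1,n_2}$. Now $\tfrac1p L_{G_{\mathrm{rand}}}=\sum_{(i,j)}\tfrac{\xi_{ij}}{p}b_{ij}b_{ij}^\top$ is a sum of independent PSD matrices with expectation $L_H$; conjugating by $L_H^{-1/2}$ makes the summands PSD with operator norm at most $\tfrac1p\, b_{ij}^\top L_H^{-1} b_{ij}=\tfrac1p\big(\tfrac1{n_1}+\tfrac1{n_2}-\tfrac1{n_1n_2}\big)\le \tfrac{2}{pn_1}$ (using $n_1\le n_2$), while the expectation is the identity on the image space. A matrix Chernoff bound then gives, for $p\ge C\log n/(n_1\beta^2)$ with $C$ a large enough constant, that with high probability $(1-\beta)L_H\preceq \tfrac1p L_{G_{\mathrm{rand}}}\preceq(1+\beta)L_H$, i.e.\ $(1-\beta)I\preceq\sum_e w_e v_ev_e^\top\preceq(1+\beta)I$, which is exactly the hypothesis of Theorem~\ref{thm:bss}. (If $\beta$ exceeds a small constant the whole statement is trivial, e.g.\ $W=0$ works, so assume $\beta\le 1/10$.)

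Applying Theorem~\ref{thm:bss} with this $\beta$ and $\eps=\beta$, in time $\tilde O(m/\beta^{O(1)})$ we obtain weights $\tilde w_e\ge0$ with $(1-O(\beta))I\preceq\sum_e\tilde w_e v_ev_e^\top\preceq I$, equivalently $(1-O(\beta))L_H\preceq L_{\tilde G}\preceq L_H$ with $L_{\tilde G}=\sum_e\tilde w_e b_eb_e^\top$ (these hold as $n\times n$ PSD inequalities since both Laplacians annihilate $\mathbf1_n$). Define $W\in\R^{n_1\times n_2}$ by $W_{ij}=\tilde w_{(i,j)}$ for $(i,j)\in E$ and $0$ otherwise, so $W\ge0$ is supported on the observed entries and is built in $O(m)$ extra time. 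For the $\ell_1$ bounds I would test $L_{\tilde G}\preceq L_H$ against coordinate vectors: for $i\in V_1$, $\sum_j W_{ij}=e_i^\top L_{\tilde G} e_i\le e_i^\top L_H e_i=n_2$, giving $\norminf W\le n_2$, and symmetrically $\normone W\le n_1$ (this uses that the upper barrier in Theorem~\ref{thm:bss} is exactly $I$, not $(1+\eps)I$); the same test vectors against $L_{\tilde G}\succeq(1-O(\beta))L_H$ show the normalized degree matrix $\hat D=\mathrm{diag}(D_1/n_2,D_2/n_1)$ of $\tilde G$ obeys $(1-O(\beta))I\preceq\hat D\preceq I$. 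Finally, conjugating $(1-O(\beta))L_H\preceq L_{\tilde G}\preceq L_H$ by $D_H^{-1/2}$ with $D_H=\mathrm{diag}(n_2 I_{n_1},n_1 I_{n_2})$ gives $(1-O(\beta))(I-B_H)\preceq \hat D-B\preceq I-B_H$, where $B,B_H$ are the normalized biadjacency matrices of $\tilde G$ and $K_{n_1,n_2}$, so $B-B_H$ is the symmetric block-off-diagonal matrix with off-diagonal block $\tfrac1{\sqrt{n_1n_2}}(W-J)$. Combining $(1-O(\beta))I\preceq\hat D\preceq I$ with $\|I-B_H\|\le 2$ (the only nonzero singular value of $\tfrac1{\sqrt{n_1n_2}}J$ is $1$) yields $\|B-B_H\|=O(\beta)$, and since the operator norm of a symmetric block-off-diagonal matrix equals that of its off-diagonal block, $\|W-J\|=\sqrt{n_1n_2}\,\|B-B_H\|=O(\beta\sqrt{n_1n_2})$. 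Taking $p=C\log n/(n_1\beta^2)$ completes the argument.

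The main obstacle is the concentration step: one must establish that every edge of $K_{n_1,n_2}$ has effective resistance $\Theta(1/n_1)$ so the matrix Chernoff bound activates exactly at $p=\Theta(\log n/(n_1\beta^2))$, and one must use that the semi-random adversary acts only after $G_{\mathrm{rand}}$ is fixed, so that the high-probability spectral event for $G_{\mathrm{rand}}$ holds unconditionally and the hidden-set hypothesis of Theorem~\ref{thm:bss} is satisfied for the adversary's graph $G$. The remaining linear algebra is routine, but the normalization bookkeeping — passing between the Laplacian inequality, the degree corrections, and the biadjacency block $W-J$ — needs to be done with care.
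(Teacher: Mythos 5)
Your proposal is correct and follows essentially the same route as the paper: establish spectral similarity of the random part to $K_{n_1,n_2}$ (the paper's Lemma~\ref{lem:random-graph}), observe that the adversary only adds edges so the hidden-set hypothesis of Theorem~\ref{thm:bss} holds, invoke Theorem~\ref{thm:bss}, and then translate the Laplacian inequality into the degree and normalized-adjacency bounds exactly as in Lemma~\ref{lem:Lclose-Aclose}. The only difference is that you prove the concentration step directly via a matrix Chernoff bound on the independent Bernoulli inclusions (using the effective resistance $\Theta(1/n_1)$ of each edge), whereas the paper cites the effective-resistance sampling result of Spielman--Srivastava; your version is arguably cleaner for this sampling model, and your explicit note that the upper barrier being exactly $I$ is what gives $\norminf{W}\le n_2$ is a nice touch.
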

Recall that $J$ is the all ones matrix, $n = n_1 + n_2$, and we assume $n_1 \le n_2$.

Corollary~\ref{cor:preprocess} follows from Lemma~\ref{lem:random-graph}, Theorem~\ref{thm:bss}, and Lemma~\ref{lem:Lclose-Aclose}.
Lemma~\ref{lem:random-graph} provides concentration bounds for random matrices, which implies that when $p$ is large enough, the semi-random input contains a good subset of observations.
We can then apply Theorem~\ref{thm:bss} to show that our preprocessing algorithm can recover a good set of weights (i.e., a weighted graph that is spectrally similar to the complete bipartite graph).
Finally, Lemma~\ref{lem:Lclose-Aclose} shows that the closeness in the Laplacians of two graphs implies the closeness in their (normalized) adjacency matrices. 

\begin{lemma}
\label{lem:random-graph}
Let $G$ denote the $n_1 \times n_2$ complete bipartite graph.
We write $n = n_1 + n_2$ for the number of vertices, and $m = n_1 n_2$ for the number of edges.
Let $H$ denote a random graph generated by including each edge of $G$ independently with probability $p$.
W.h.p, we can re-weight edges in $H$ so that the Laplacian matrix $L_H$ is $\eps$-spectrally similar with $L_G$, where $\eps = O\left(\sqrt{\frac{n \log n}{p m}}\right)$.
\end{lemma}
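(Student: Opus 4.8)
The plan is to reduce spectral similarity to a matrix concentration statement and invoke a matrix Chernoff bound. Re-weight $H$ by giving every present edge weight $1/p$, so the re-weighted Laplacian is $L_H = \frac{1}{p}\sum_{e} \eta_e\, b_e b_e^\top$, where $e$ ranges over the $m = n_1 n_2$ edges of $G = K_{n_1,n_2}$, the $\eta_e \in \{0,1\}$ are independent with $\mathbb{E}[\eta_e] = p$, and $b_e$ are the signed edge-incidence vectors (so $L_G = \sum_e b_e b_e^\top$). Then $\mathbb{E}[L_H] = L_G$. Since both $L_H$ and $L_G$ annihilate the all-ones vector, Definition~\ref{def:spectral} restricted to $\mathrm{im}(L_G)$ amounts to showing $\norm{L_G^{-1/2} L_H L_G^{-1/2} - I} \le \eps$ for $\eps = O(\sqrt{n\log n/(pm)})$, where $I = I_{\mathrm{im}(G)}$ and $L_G^{-1/2}$ is the pseudoinverse square root.

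Set $v_e = L_G^{-1/2} b_e$ and $X_e = \frac{\eta_e}{p} v_e v_e^\top$, so $L_G^{-1/2} L_H L_G^{-1/2} = \sum_e X_e$. The $X_e$ are independent PSD matrices on the $(n-1)$-dimensional space $\mathrm{im}(L_G)$ with $\sum_e \mathbb{E}[X_e] = \sum_e v_e v_e^\top = L_G^{-1/2} L_G L_G^{-1/2} = I$, hence $\mu_{\min} = \mu_{\max} = 1$. Each term is bounded by $X_e \preceq \frac{1}{p}\norm{v_e}^2 I$, where $\norm{v_e}^2 = b_e^\top L_G^{-1} b_e$ is the effective resistance of $e$ in $K_{n_1,n_2}$. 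Because the automorphism group of $K_{n_1,n_2}$ acts transitively on its edges, all effective resistances are equal, and $\sum_e \norm{v_e}^2 = \tr(L_G^{-1} L_G) = n-1$; therefore $\norm{v_e}^2 = (n-1)/m$ for every $e$. (One can also just bound $\norm{v_e}^2 \le 2\lambda_{\max}(L_G^{-1}) = 2/n_1 = O(n/m)$ directly from the eigenvalues of the complete-bipartite Laplacian.) Thus $X_e \preceq R\, I$ with $R := (n-1)/(pm)$.

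Now apply the matrix Chernoff inequality for sums of independent bounded PSD matrices: for $\eps \in (0,1)$,
\[
\Pr\Big[\lambda_{\max}\big(\textstyle\sum_e X_e\big) \ge 1+\eps\Big] \le (n-1)\,e^{-\eps^2/(3R)}, \qquad \Pr\Big[\lambda_{\min}\big(\textstyle\sum_e X_e\big) \le 1-\eps\Big] \le (n-1)\,e^{-\eps^2/(2R)}.
\]
Taking $\eps = c\sqrt{R\log n}$ for a large enough constant $c$ makes both probabilities polynomially small in $n$, so with high probability $(1-\eps) I \preceq \sum_e X_e \preceq (1+\eps) I$, i.e.\ $(1-\eps) L_G \preceq L_H \preceq (1+\eps) L_G$. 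Since $\eps = c\sqrt{R\log n} = O(\sqrt{n\log n/(pm)})$, this is exactly the claimed bound (implicitly $p$ is large enough that $\eps < 1$, as otherwise the conclusion is vacuous).

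There is no deep obstacle here: this is the standard oversampling/concentration argument underlying spectral sparsification, specialized to $K_{n_1,n_2}$. The one point requiring care is bookkeeping around the kernel of $L_G$ — one must consistently use the pseudoinverse and read all identities and operator inequalities on $\mathrm{im}(L_G)$, matching the convention $I \equiv I_{\mathrm{im}(G)}$ already in force. The only quantitative input is the uniform effective-resistance bound $\norm{v_e}^2 = O(n/m)$, which yields the $\sqrt n$ (rather than $\sqrt{n_2}$) in the numerator and is cleanest to obtain from edge-transitivity of $K_{n_1,n_2}$.
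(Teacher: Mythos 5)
Your proof is correct and rests on the same two ingredients as the paper's: the uniform effective resistance of edges in $K_{n_1,n_2}$ and matrix concentration. The difference is in execution. The paper proves the lemma by citing the Spielman--Srivastava sampling theorem (sample $Cn\log n/\eps^2$ edges with probability proportional to effective resistance, reweight by $1/p_e$) and then remarking that the with-replacement, constant-probability analysis of that paper "can be adapted" to the independent-inclusion, high-probability setting needed here. You instead write out the adaptation directly: model the revealed edges as independent Bernoulli indicators, reweight by $1/p$, pass to $X_e = \frac{\eta_e}{p} v_ev_e^\top$ with $\sum_e \mathbb{E}[X_e] = I_{\mathrm{im}(G)}$ and $X_e \preceq \frac{n-1}{pm} I$, and apply Tropp's matrix Chernoff bound on the $(n-1)$-dimensional image space. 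This is self-contained, matches the actual sampling model of the lemma, and sidesteps the with/without-replacement issue the paper has to wave at; the paper's version buys brevity by deferring to a known theorem. Your bookkeeping (edge-transitivity giving $\|v_e\|^2 = (n-1)/m$, the alternative bound $2/n_1 = O(n/m)$ via the spectrum of $L_{K_{n_1,n_2}}$, and working throughout on $\mathrm{im}(L_G)$) is all correct.
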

\begin{proof}
For complete bipartite graphs, all edges have the same effective resistance, so uniform sampling among all the edges will produce a good spectral sparsifier.

Formally, we can use the main result of~\cite{SpielmanS11}: Fix any $0 < \eps < 1$. For sufficiently large $n$ and all graphs on $n$ vertices, there is a universal constant $C$ so that sampling $C n \log n/\eps^2$ edges independently (with sample probability $p_e$ proportional to [edge weight $\times$ effective resistant]) produces an $\eps$-spectral sparsifier with high probability.
The lemma allows reweighting on $H$ because when we include an edge we give it weight $1/p_e$.

At the core of~\citep{SpielmanS11} are matrix concentration inequalities~\citep{RudelsonV07, AhlswedeW02, Tropp12}.
Note that the original proof in~\citep{SpielmanS11} used sampling with replacement and holds only with constant probability, but the analysis can be adapted to show that sampling by effective resistance without replacement works with high probability.
\end{proof}


\begin{lemma}
\label{lem:Lclose-Aclose}
Let $L = D - A$ and $\tilde L = \tilde{D} - \tilde{A}$ be two graph Laplacians, where $D$ is the degree matrix and $A$ is the adjacency matrix of the graph.
If $(1-\eps) L \preceq \tilde L \preceq L$, then we have
\begin{enumerate}
\item[(1)] $(1-\eps) D_{i,i} \le \tilde{D}_{i,i} \le D_{i,i}$.
\item[(2)] $\norm{D^{-1/2}(\tilde{A} - A)D^{-1/2}} \le 3\eps$.
\end{enumerate}
\end{lemma}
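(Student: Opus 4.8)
The plan is to handle the two claims separately, deriving both from the operator inequality $(1-\eps)L \preceq \tilde L \preceq L$ by testing against suitable vectors and doing a change of basis. For claim (1), I would simply plug in the standard basis vector $\chi_i = e_i$: since $\chi_i^\top L \chi_i = D_{i,i}$ (the off-diagonal adjacency entries contribute nothing when tested against a single coordinate) and likewise $\chi_i^\top \tilde L \chi_i = \tilde D_{i,i}$, the inequality $(1-\eps)\chi_i^\top L \chi_i \le \chi_i^\top \tilde L \chi_i \le \chi_i^\top L \chi_i$ immediately gives $(1-\eps) D_{i,i} \le \tilde D_{i,i} \le D_{i,i}$.

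For claim (2), the idea is to move everything into the normalized frame. Write $\mathcal{L} = D^{-1/2} L D^{-1/2} = I - D^{-1/2} A D^{-1/2}$ and similarly define the matrix $N = D^{-1/2}\tilde L D^{-1/2} = D^{-1/2}\tilde D D^{-1/2} - D^{-1/2}\tilde A D^{-1/2}$; note $N$ is \emph{not} the normalized Laplacian of $\tilde L$ since we normalize by $D$, not $\tilde D$. Conjugating the hypothesis by $D^{-1/2}$ preserves the PSD order, so $(1-\eps)\mathcal L \preceq N \preceq \mathcal L$. Rearranging, $0 \preceq \mathcal L - N \preceq \eps \mathcal L \preceq 2\eps I$, where the last step uses that the normalized Laplacian $\mathcal L$ has all eigenvalues in $[0,2]$; combined with $\mathcal L - N \succeq 0$ this gives $\norm{\mathcal L - N} \le 2\eps$. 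Now $\mathcal L - N = \big(D^{-1/2}(\tilde D - D)D^{-1/2}\big) \cdot(-1) + D^{-1/2}(\tilde A - A) D^{-1/2}$ — more carefully, $\mathcal L - N = D^{-1/2}(D - \tilde D)D^{-1/2} + D^{-1/2}(\tilde A - A)D^{-1/2}$. The first term is a diagonal matrix with entries $1 - \tilde D_{i,i}/D_{i,i} \in [0,\eps]$ by claim (1), hence has norm at most $\eps$. By the triangle inequality, $\norm{D^{-1/2}(\tilde A - A)D^{-1/2}} \le \norm{\mathcal L - N} + \eps \le 3\eps$, which is exactly claim (2).

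The argument is essentially routine once one sees to work with $N = D^{-1/2}\tilde L D^{-1/2}$ (normalizing by the \emph{original} degree matrix) rather than the genuine normalized Laplacian of $\tilde L$; this is what keeps the diagonal and adjacency parts cleanly separable. The one point requiring a little care is the bound $\mathcal L \preceq 2I$: this is the standard fact that a normalized Laplacian has spectrum in $[0,2]$, which holds for $L = D - A$ with $A$ having nonnegative entries (true here, since $A$ is the adjacency matrix of a weighted graph with $w_e \ge 0$). I do not anticipate a genuine obstacle; the only mild subtlety is making sure the Moore–Penrose pseudoinverse / image-space conventions do not interfere, but since we only ever use the forward direction $(1-\eps)L \preceq \tilde L \preceq L$ and test against actual vectors, no inversion is needed and this is a non-issue.
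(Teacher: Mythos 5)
Your proposal is correct and follows essentially the same route as the paper: part (1) by testing against standard basis vectors, and part (2) by conjugating with $D^{-1/2}$, splitting $D^{-1/2}(L-\tilde L)D^{-1/2}$ into the diagonal piece (bounded by $\eps$ via part (1)) and the adjacency piece, and using that the normalized Laplacian has spectrum in $[0,2]$. The only cosmetic difference is that you bound $\norm{D^{-1/2}(L-\tilde L)D^{-1/2}}\le 2\eps$ first and then isolate the adjacency term by the triangle inequality, whereas the paper writes $\tilde A - A = \tilde D - D + L - \tilde L$ and bounds the two summands directly; the arithmetic $\eps + 2\eps = 3\eps$ is identical.
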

\begin{proof}
For (1), the spectral similarity between $L$ and $\tilde L$ implies that $(1-\eps) x^\top L x \le x^\top \tilde{L} x \le x^\top L x$ for all $x \in \R^n$.
In particular, this holds for all standard basis vectors, so $(1-\eps) D_{i,i} \le \tilde{D}_{i,i} \le D_{i,i}$.

For (2), we know that $0 \preceq L - \tilde{L} \preceq \eps L$ and similarly $0 \preceq D-\tilde{D} \preceq \eps D$, and therefore
\begin{align*}
\norm{D^{-1/2}(\tilde{A} - A)D^{-1/2}}
& = \norm{D^{-1/2}(\tilde{D} - D + L - \tilde{L})D^{-1/2}} \\
& \le \norm{D^{-1/2} (D - \tilde D) D^{-1/2}} + \norm{D^{-1/2} (L - \tilde L) D^{-1/2}} \\
& \le \eps \norm{I} + \eps \norm{D^{-1/2} L D^{-1/2}} \le 3\eps. 
\end{align*}
The last step uses the fact that eigenvalues of a normalized Laplacian matrix $D^{-1/2} L D^{-1/2}$ are always between $0$ and $2$.
\end{proof}

We are now ready to prove Corollary~\ref{cor:preprocess} using Theorem~\ref{thm:bss} and Lemmas~\ref{lem:random-graph}~and~\ref{lem:Lclose-Aclose}.
\begin{proof}[Proof of Corollary~\ref{cor:preprocess}]
Let $G$ denote the $n_1 \times n_2$ complete bipartite graph ($n_1 \le n_2$).
Let $H$ be the graph corresponds to the entries revealed randomly, and let $H'$ denote the graph after the adversary added extra edges.
By Lemma~\ref{lem:random-graph}, for $p = O\left(\frac{\log n}{n_1 \beta^2}\right)$, with high probability, there exists edge weights for $H$ such that $(1-\beta) L_G \preceq L_H \preceq (1+\beta) L_G$.

Because the edges of $H'$ is a superset of the edges of $H$, there exist edge weights for $H'$ such that the same condition holds.
Since the vectors $\{L_G^{-1/2} b_e\}_{e \in H'}$ satisfy the condition in Problem~\ref{prob:identity}, we can invoke Theorem~\ref{thm:bss} with $\eps = \beta$ to obtain a set of weights $\{\tilde w_e\}_{e \in H'}$ such that $(1-O(\beta))L_G \preceq \sum_{e \in H'} \tilde w_e b_e b_e^\top \preceq L_G$ in time $\tilde O(m / \beta^{O(1)})$, where $m$ is the number of edges in $H'$.

Let $A$ denote the adjacency matrix of $G$, and let $A'$ denote the adjacency matrix of $H'$ with weights $\tilde w_e$.
Since both $A'$ and $A_G$ include only edges in the complete bipartite graph, we can write
\[
A = \left(\begin{array}{cc} 0 & J \\ J^\top & 0 \end{array}\right), \qquad
A' = \left(\begin{array}{cc} 0 & W \\ W^\top & 0 \end{array}\right), \qquad
\]
where $J$ is the all ones matrix and $W \in \R^{n_1 \times n_2}$ contains the edge weights $\tilde w_e$ ($W_{i,j} = \tilde w_e$ for every $(i,j) \in H'$, and $W_{i,j} = 0$ otherwise).
By Lemma~\ref{lem:Lclose-Aclose}, the row sum of $W$ is at most $n_2$ for every row, and the column sum of $W$ is at most $n_1$ for every column.
Again by Lemma~\ref{lem:Lclose-Aclose}, $\norm{D^{-1/2} (A' - A_{G}) D^{-1/2}} \le O(\beta)$, which implies that $\norm{W - J} = \norm{A - A'} = O(\beta \sqrt{n_1 n_2})$.
\end{proof}


\section{Application to Matrix Completion}
\label{sec:matcomp}
Most analysis of matrix completion relies on the fact that the observed entries are sampled uniformly at random. Let $W_{i,j} = 1/p$ if entry $(i,j)$ is observed. This assumption is mostly used to prove {\em concentration inequalities} related to the norm of low-rank matrices $\|M\|_W^2$.  In particular, the following two lemmas are used in most papers.

Lemma~\ref{lem:tangent} shows that for an $M$ that is in the ``tangent space'' (the linear space of $\Us X^\top + Y(\Vs)^\top$), the norm of $Z$ is preserved after we restrict to the observed entries. 
Lemma~\ref{lem:Delta_mc} shows that the norm is preserved for every incoherent matrix $XY^\top$.

\begin{lemma}[\cite{recht2011simpler}]
\label{lem:tangent}
Suppose $\Ms = \Us\Sigs(\Vs)^\top$. Suppose entries are revealed with probability $p$ independently, weight matrix $W_{i,j} = 1/p$ if $(i,j)$ is revealed and $0$ otherwise. 
For any $0 < \delta < 1$, when $p \ge \Omega(\frac{\mu r}{\delta^2 n} \log n)$, with high probability over the randomness of $W$ we have
\[
|\|M\|_W - \|M\|_F| \le \delta \|M\|_F,
\]
for any matrix $M\in \R^{n_1\times n_2}$ of the form $M = \Us X^\top + Y(\Vs)^\top$.
\end{lemma}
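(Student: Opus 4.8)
The plan is the classical tangent-space concentration argument of \cite{recht2011simpler}: reduce the statement to a single operator-norm estimate and prove that estimate via the matrix Bernstein inequality. Write $P_U = \Us(\Us)^\top$, $P_V = \Vs(\Vs)^\top$, and let $\mathcal{P}_T$ be the orthogonal projection (in the Frobenius inner product) onto the tangent space $T = \{P_U Z + Z P_V : Z \in \R^{n_1\times n_2}\}$, so that $\mathcal{P}_T(Z) = P_U Z + Z P_V - P_U Z P_V$ and every $M = \Us X^\top + Y(\Vs)^\top$ satisfies $M = \mathcal{P}_T(M)$. Let $\mathcal{R}_\Omega(Z) = \sum_{(i,j)\in\Omega} Z_{i,j}\, e_i e_j^\top$ be the sampling operator; then $\|M\|_W^2 = \inner{\tfrac1p\mathcal{R}_\Omega(M),\, M}$, $\|M\|_F^2 = \inner{M,M}$, and $\expect{}{\tfrac1p\mathcal{R}_\Omega} = \mathcal{I}$. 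Using $M = \mathcal{P}_T(M)$,
\[
\big|\,\|M\|_W^2 - \|M\|_F^2\,\big| = \big|\inner{\mathcal{P}_T(\tfrac1p\mathcal{R}_\Omega - \mathcal{I})\mathcal{P}_T(M),\, M}\big| \le \big\|\mathcal{P}_T(\tfrac1p\mathcal{R}_\Omega - \mathcal{I})\mathcal{P}_T\big\|\cdot\|M\|_F^2 ,
\]
so it suffices to show $\big\|\mathcal{P}_T(\tfrac1p\mathcal{R}_\Omega - \mathcal{I})\mathcal{P}_T\big\| \le \delta$ with high probability; then $\|M\|_W^2 \in [1-\delta,\,1+\delta]\cdot\|M\|_F^2$, and taking square roots (using $|\sqrt x - 1|\le|x-1|$ near $1$) gives $|\,\|M\|_W - \|M\|_F\,| \le \delta\|M\|_F$.

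For the operator-norm bound, expand $\mathcal{R}_\Omega = \sum_{i,j}\eta_{ij}\,\mathcal{R}_{ij}$ where the $\eta_{ij}$ are i.i.d.\ $\mathrm{Bernoulli}(p)$ and $\mathcal{R}_{ij}(Z) = \inner{Z,\, e_ie_j^\top}\, e_ie_j^\top$. Since $\sum_{ij}\mathcal{R}_{ij} = \mathcal{I}$,
\[
\mathcal{P}_T\big(\tfrac1p\mathcal{R}_\Omega - \mathcal{I}\big)\mathcal{P}_T = \sum_{i,j} S_{ij}, \qquad S_{ij} \eqdef \big(\tfrac{\eta_{ij}}{p}-1\big)\,\mathcal{P}_T\mathcal{R}_{ij}\mathcal{P}_T .
\]
The $S_{ij}$ are independent, self-adjoint, mean-zero operators supported on $T$, whose dimension is at most $r(n_1+n_2)$. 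Each $\mathcal{P}_T\mathcal{R}_{ij}\mathcal{P}_T$ is the rank-one PSD operator $Z\mapsto\inner{Z,\,\mathcal{P}_T(e_ie_j^\top)}\,\mathcal{P}_T(e_ie_j^\top)$, so its operator norm equals $\fnorm{\mathcal{P}_T(e_ie_j^\top)}^2 = 1-(1-\|P_Ue_i\|^2)(1-\|P_Ve_j\|^2) \le \|P_Ue_i\|^2 + \|P_Ve_j\|^2$, which is at most $\tfrac{\mu r}{n_1}+\tfrac{\mu r}{n_2}$ by $\mu$-incoherence; denote this quantity $\nu$.

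Now apply matrix Bernstein. Since $|\tfrac{\eta_{ij}}{p}-1|\le\tfrac1p$ always, $\|S_{ij}\| \le \nu/p \eqdef R$. For the variance proxy, $\expect{}{(\tfrac{\eta_{ij}}{p}-1)^2} = \tfrac{1-p}{p}\le\tfrac1p$ and $(\mathcal{P}_T\mathcal{R}_{ij}\mathcal{P}_T)^2 \preceq \nu\,\mathcal{P}_T\mathcal{R}_{ij}\mathcal{P}_T$, hence $\sum_{ij}\expect{}{S_{ij}^2} \preceq \tfrac{\nu}{p}\sum_{ij}\mathcal{P}_T\mathcal{R}_{ij}\mathcal{P}_T = \tfrac{\nu}{p}\mathcal{P}_T$, so $\sigma^2 \eqdef \|\sum_{ij}\expect{}{S_{ij}^2}\| \le \nu/p$. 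Matrix Bernstein then gives
\[
\Pr\!\left[\Big\|\sum_{ij}S_{ij}\Big\| \ge \delta\right] \le 2\,r(n_1+n_2)\cdot\exp\!\left(\frac{-\delta^2/2}{\sigma^2 + R\delta/3}\right) \le 2rn\cdot\exp\!\left(-\Omega\!\big(\tfrac{\delta^2 p}{\nu}\big)\right),
\]
which is $n^{-\Omega(1)}$ once $p = \Omega\!\big(\tfrac{\nu\log n}{\delta^2}\big) = \Omega\!\big(\tfrac{\mu r\log n}{\delta^2 n_1}\big)$, matching the stated hypothesis on $p$. There is no deep obstacle here — this is the textbook argument — and the only points needing care are the two uses of incoherence: first isolating $\mathcal{P}_T(\tfrac1p\mathcal{R}_\Omega-\mathcal{I})\mathcal{P}_T$ in the reduction (so the sample complexity depends on the tangent-space incoherence $\nu$ rather than on $\fnorm{e_ie_j^\top}=1$), and second the operator domination $(\mathcal{P}_T\mathcal{R}_{ij}\mathcal{P}_T)^2 \preceq \nu\,\mathcal{P}_T\mathcal{R}_{ij}\mathcal{P}_T$, which forces both Bernstein parameters $R$ and $\sigma^2$ to be $O(\nu/p)$ and hence makes the final requirement on $p$ scale like $\nu/\delta^2$ rather than $\nu/\delta$.
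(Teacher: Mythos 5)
Your proof is correct and is precisely the standard tangent-space argument of \cite{recht2011simpler}, which the paper cites for this lemma without reproving it: the reduction to bounding $\bigl\|\mathcal{P}_T(\tfrac1p\mathcal{R}_\Omega-\mathcal{I})\mathcal{P}_T\bigr\|$, the identity $\fnorm{\mathcal{P}_T(e_ie_j^\top)}^2 = 1-(1-\|P_Ue_i\|^2)(1-\|P_Ve_j\|^2)$, and the matrix Bernstein estimate with both parameters $O(\nu/p)$ all match that reference. The only (harmless) discrepancy is that your derivation yields the condition $p=\Omega(\mu r\log n/(\delta^2 n_1))$, which is the correct requirement for rectangular matrices and is slightly stronger than the paper's stated $p=\Omega(\mu r\log n/(\delta^2 n))$ with $n=n_1+n_2$.
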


\begin{lemma}
\label{lem:Delta_mc}
Let $W$ be a random matrix where $W_{i,j} = 1/p$ with probability $p$, and $W_{i,j} = 0$ otherwise.
There exist universal constants $c_1$ and $c_2$, so that for any $\delta>0$, if $p \ge c_1  \frac{\log n}{\delta^2 n_1}$, then with probability at least $1-\frac{1}{2}n^{-4}$, we have for any matrices $X \in \R^{n_1\times r}, Y\in\R^{n_2\times r}$:
\begin{equation*}
\norm{XY^\top}^2_{W}  \le (1+\delta) \norm{X}_F^2\norm{Y}_F^2 + c_2 \sqrt{\frac{n}{p}}\norm{X}_F\norm{Y}_F\cdot \max_i\norm{X_i} \cdot \max_j\norm{Y_j}.
\end{equation*}
\end{lemma}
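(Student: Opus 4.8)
The plan is to linearize the degree-four form $\inner{X_i, Y_j}^2$ via the (row-wise) Khatri--Rao product, and then reduce everything to a single spectral-norm bound on the random matrix $W - J$. Set $\tilde{X} = X \odot X \in \R^{n_1 \times r^2}$ and $\tilde{Y} = Y \odot Y \in \R^{n_2 \times r^2}$, so that $(\tilde{X}\tilde{Y}^\top)_{i,j} = \inner{X_i \otimes X_i,\, Y_j \otimes Y_j} = \inner{X_i, Y_j}^2$, and hence $\norm{XY^\top}_W^2 = \sum_{i,j} W_{i,j} \inner{X_i, Y_j}^2 = \inner{W,\, \tilde{X}\tilde{Y}^\top}$. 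Since $\mathbb{E}[W] = J$, I would split
\[
\norm{XY^\top}_W^2 = \inner{J,\, \tilde{X}\tilde{Y}^\top} + \inner{W - J,\, \tilde{X}\tilde{Y}^\top}.
\]

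For the first term, $\inner{J,\, \tilde{X}\tilde{Y}^\top} = \sum_{i,j} \inner{X_i, Y_j}^2 = \fnorm{XY^\top}^2 \le \fnorm{X}^2 \fnorm{Y}^2$, using $\inner{X_i, Y_j}^2 \le \norm{X_i}^2 \norm{Y_j}^2$ and summing; this already produces the leading term with constant $1$, so the factor $(1+\delta)$ in the statement is only slack (convenient for the downstream application). For the second term I would use trace duality of the spectral and nuclear norms, $\inner{W - J,\, \tilde{X}\tilde{Y}^\top} \le \norm{W - J} \cdot \|\tilde{X}\tilde{Y}^\top\|_*$, then $\|\tilde{X}\tilde{Y}^\top\|_* \le \fnorm{\tilde{X}} \fnorm{\tilde{Y}}$ (H\"older for Schatten norms with exponents $2$ and $2$), and finally the elementary bound $\fnorm{\tilde{X}}^2 = \sum_i \norm{X_i}^4 \le \big(\max_i \norm{X_i}^2\big)\fnorm{X}^2$ together with its analogue for $\tilde{Y}$. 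This gives $\inner{W-J,\, \tilde{X}\tilde{Y}^\top} \le \norm{W - J}\cdot \fnorm{X}\fnorm{Y}\cdot \max_i\norm{X_i}\cdot\max_j\norm{Y_j}$.

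The only probabilistic ingredient — and the only place the hypothesis $p \ge c_1 \log n/(\delta^2 n_1)$ is used — is the estimate $\norm{W - J} = O(\sqrt{n/p})$ with probability at least $1 - \frac{1}{2}n^{-4}$. Writing $W - J = \frac{1}{p}(B - pJ)$ with $B$ the $\{0,1\}$ observation matrix, this says that the bipartite adjacency matrix of a random graph whose expected degrees are all at least $n_1 p \gtrsim \log n$ concentrates around its mean $pJ$ in spectral norm at scale $\sqrt{n_2 p}$, which is of order $\sqrt{np}$ since $n_1 \le n_2$. This is a standard random-matrix fact: it follows from a matrix Bernstein inequality (taking $c_1$ large enough to push the tail below $\frac{1}{2}n^{-4}$), or, if one wants to avoid an extra logarithmic factor in the intermediate steps, from the sharper concentration bounds for matrices with independent entries. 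Multiplying by $1/p$ yields $\norm{W - J} = O(\sqrt{n/p}) = c_2\sqrt{n/p}$, matching the stated constant, and combining the two terms finishes the proof.

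I expect no real obstacle here: once the Khatri--Rao linearization is in place, the argument is a short chain of deterministic matrix-norm inequalities plus one off-the-shelf concentration bound (the closest thing to a ``hard step''). It is worth emphasizing that the argument is automatically uniform over all $X, Y$: after conditioning on the single event $\{\norm{W - J} \le c_2\sqrt{n/p}\}$, every remaining step is deterministic, so no $\epsilon$-net or union bound over the (non-compact, and in general non-incoherent) set of factor pairs is required. The ``spikiness'' of $X$ and $Y$ is absorbed entirely into the factor $\max_i\norm{X_i}\max_j\norm{Y_j}$ — which is precisely why, unlike Lemma~\ref{lem:tangent}, this bound needs no incoherence assumption.
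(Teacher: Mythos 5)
Your proof is correct, but note that the paper itself never proves Lemma~\ref{lem:Delta_mc}: it is quoted as background from the earlier non-convex analyses \citep{sun2015guaranteed, GeJZ17}, and the paper's stated goal is to \emph{replace} it by the deterministic Lemma~\ref{lem:deterministc_main}. What you have written is, in effect, a derivation of Lemma~\ref{lem:Delta_mc} from that replacement: your Khatri--Rao linearization, the bound $\inner{W-J,\tilde{X}\tilde{Y}^\top}\le\norm{W-J}\,\fnorm{\tilde{X}}\,\fnorm{\tilde{Y}}$, and the estimate $\fnorm{X\odot X}\le\fnorm{X}\max_i\normtwo{X_i}$ coincide step for step with the paper's proof of Lemma~\ref{lem:deterministc_main} (the paper uses Cauchy--Schwarz plus submultiplicativity of the spectral norm where you use trace duality and $\norm{AB^\top}_*\le\fnorm{A}\fnorm{B}$; these are the same estimate). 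The only genuinely new ingredient you add is the spectral concentration of $W-J$. This modular route is cleaner than the original moment-based proofs, and it shows, as you observe, that the leading constant can be taken to be $1$ rather than $1+\delta$ and that only $p \ge c_1\log n/n_1$ is needed --- which is exactly why the paper's deterministic lemma carries no $(1+\delta)$ term.

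One step deserves more care than your parenthetical gives it. Matrix Bernstein only yields $\norm{B-pJ}=O(\sqrt{np\log n})$, hence $\norm{W-J}=O(\sqrt{n\log n/p})$, and the extra $\sqrt{\log n}$ cannot in general be absorbed into the $(1+\delta)\fnorm{X}^2\fnorm{Y}^2$ term: taking $X$ and $Y$ each supported on a single row makes $\max_i\normtwo{X_i}\max_j\normtwo{Y_j}=\fnorm{X}\fnorm{Y}$, and absorbing the loss would then require $\sqrt{n\log n/p}\le \delta+c_2\sqrt{n/p}$, which fails for large $n$. So the sharper concentration bounds for matrices with independent bounded entries (Feige--Ofek, Seginer, or Bandeira--van Handel type, which give $O(\sqrt{np})$ with tail $n^{-4}$ once all expected row sums are $\Omega(\log n)$, i.e., once $n_1 p\ge c_1\log n$) are not an optional refinement but the load-bearing probabilistic ingredient. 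With that substitution the argument is complete.
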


However, neither of these lemmas is applicable in our semi-random setting, because the weight matrix $W$ is no longer chosen randomly by nature.

Lemma~\ref{lem:tangent} is used in both the convex analysis (e.g.,~\citep{recht2011simpler}) and many local analyses for the non-convex methods (e.g.,~\citep{sun2015guaranteed}). Deterministic versions of Lemma~\ref{lem:tangent} include Assumption A2 in \citep{BhojanapalliJ14} and Assumption A3 in \citep{LiLR16}. Unfortunately, we do not know whether Assumption A2 in \citep{BhojanapalliJ14} is true even for random matrices, and we cannot guarantee Assumption A3 in \citep{LiLR16} because it is a condition that depends on the (unknown) ground truth.

Since we do not know how to obtain a deterministic version of Lemma~\ref{lem:tangent}, we turn our attention to Lemma~\ref{lem:Delta_mc}.
Lemma~\ref{lem:Delta_mc} is only used in more recent non-convex analyses (e.g.,~\citep{sun2015guaranteed, GeJZ17}).
We replace Lemma~\ref{lem:Delta_mc} with the following (stronger version of the) lemma, which states that if $W$ is close to the all ones matrix $J$ (which is guaranteed by our preprocessing algorithm), then the norm of $\|XY^\top\|_F$ is preserved after we weight the entries by $W$.
Recently, \citep{chen2017memory} has independently obtained a deterministic inequality similar to Lemma~\ref{lem:deterministc_main}.

\begin{lemma}[Preserving the Norm via Spectral Properties]
\label{lem:deterministc_main}
For any matrices $X\in \R^{n_1\times r}$, $Y \in \R^{n_2\times r}$, and $W \in \R^{n_1 \times n_2}$, we have
\[
|\|XY^\top\|_W^2 - \|XY^\top\|_F^2| \le \|W-J\| \|X\|_F\|Y\|_F\max_{i}\|X_i\|\max_{i}\|Y_i\|.
\]
\end{lemma}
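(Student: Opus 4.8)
The plan is to expand both $\|XY^\top\|_W^2$ and $\|XY^\top\|_F^2$ entrywise and recognize their difference as a bilinear form in $W-J$. Writing $\|XY^\top\|_W^2 = \sum_{i,j} W_{i,j} (X_i^\top Y_j)^2$ and $\|XY^\top\|_F^2 = \sum_{i,j} (X_i^\top Y_j)^2$ (using $J_{i,j}=1$), we get
\[
\|XY^\top\|_W^2 - \|XY^\top\|_F^2 = \sum_{i,j} (W-J)_{i,j}\, (X_i^\top Y_j)^2 .
\]
The key observation is that $(X_i^\top Y_j)^2 = (X_i \otimes X_i)^\top (Y_j \otimes Y_j)$, so if we define the matrices $P = X \odot X \in \R^{n_1 \times r^2}$ (rows $X_i \otimes X_i$) and $R = Y \odot Y \in \R^{n_2 \times r^2}$ (rows $Y_j \otimes Y_j$) via the row-wise Khatri--Rao product from the preliminaries, then $(X_i^\top Y_j)^2 = (P R^\top)_{i,j}$, and hence the difference equals $\inner{W-J, PR^\top}$.

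Next I would bound $|\inner{W-J,PR^\top}| = |\tr((W-J)^\top P R^\top)| = |\tr(R^\top (W-J)^\top P)| \le \|W-J\| \cdot \|P\|_F \cdot \|R\|_F$, using the standard inequality $|\inner{A,BC}| \le \|A\|\,\|B\|_F\,\|C\|_F$ (equivalently $|\tr(A^\top B C^\top)| \le \|A\|\|B\|_F\|C\|_F$, which follows from von Neumann's trace inequality or from Cauchy--Schwarz after bounding singular values). It then remains to control $\|P\|_F$ and $\|R\|_F$. We have $\|P\|_F^2 = \sum_i \|X_i \otimes X_i\|^2 = \sum_i \|X_i\|^4 \le (\max_i \|X_i\|^2) \sum_i \|X_i\|^2 = \max_i\|X_i\|^2 \cdot \|X\|_F^2$, and similarly $\|R\|_F^2 \le \max_j\|Y_j\|^2 \cdot \|Y\|_F^2$. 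Multiplying these gives exactly the claimed bound $\|W-J\|\,\|X\|_F\|Y\|_F \max_i\|X_i\|\max_j\|Y_j\|$.

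The only mild subtlety — and the step I would be most careful about — is the matrix inequality $|\tr(A^\top B C^\top)| \le \|A\|\,\|B\|_F\,\|C\|_F$; one clean way is to write it as $|\inner{B, A C}| \le \|B\|_F \|AC\|_F \le \|B\|_F \|A\| \|C\|_F$ by Cauchy--Schwarz in Frobenius inner product followed by submultiplicativity of spectral/Frobenius norms. Everything else is a direct entrywise computation plus the Khatri--Rao identity $(X_i^\top Y_j)^2 = (X_i\otimes X_i)^\top(Y_j\otimes Y_j)$, so there is essentially no real obstacle here; the lemma is a deterministic, purely linear-algebraic fact, which is precisely why it can replace the probabilistic Lemma~\ref{lem:Delta_mc} in the semi-random setting.
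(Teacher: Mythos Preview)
Your proof is correct and follows essentially the same route as the paper: rewrite the difference via the row-wise Khatri--Rao product as $\tr\bigl((X\odot X)^\top (W-J)(Y\odot Y)\bigr)$, bound this trace by $\|W-J\|\,\|X\odot X\|_F\,\|Y\odot Y\|_F$ using Cauchy--Schwarz plus submultiplicativity, and finish with $\|X\odot X\|_F \le \|X\|_F\max_i\|X_i\|$. The only differences are cosmetic (your $P,R$ are the paper's $X\odot X,\,Y\odot Y$), and your careful justification of the trace inequality is exactly the one the paper uses implicitly.
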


\begin{proof}
Recall that $X_i$ is the $i$-th row of $X$, and $X\odot X \in \R^{n_1 \times r^2}$ is the Katri-Rao product.
We have 
\[\inner{(X\odot X)_i, (Y\odot Y)_j} = \inner{X_i\otimes X_i, Y_j\otimes Y_j} = (X_i^\top Y_j)^2 = (X Y^\top)_{i,j}^2.\]

As a result, we know $\|XY^\top\|_W^2 = \inner{XY^\top, XY^\top}_W = \tr((X\odot X)^\top W (Y\odot Y))$, and $\|XY^\top\|_F^2 = \inner{XY^\top, XY^\top} = \tr((X\odot X)^\top J (Y\odot Y))$.

We can also bound the Frobenius norm of the two product by: $\|X\odot X\|_F \le \|X\|_F\max_{i}\|X_i\|$, $\|Y\odot Y\|_F \le \|Y\|_F\max_{i}\|Y_i\|$. Therefore,
\begin{align*}
|\inner{XY^\top, XY^\top}_W - \inner{XY^\top, XY^\top}|
& = |\tr((X\odot X)^\top (W-J) (Y\odot Y))|\\
& \le \|X\odot X\|_F \|(W-J) (Y\odot Y)\|_F \\
& \le \|W-J\| \|X\odot X\|_F \|(Y\odot Y)\|_F  \\
& \le \|W-J\| \|X\|_F\|Y\|_F\max_{i}\|X_i\|\max_{i}\|Y_i\|. \qedhere 
\end{align*}
\end{proof}

Using this lemma, as well as techniques in \citep{GeJZ17}, we can prove the following theorem (see Appendix~\ref{app:matrix} for its proof):
%
\begin{theorem} \label{thm:asymmetric_local}
For matrix completion problem with ground truth $\Ms \in \R^{n_1 \times n_2}$, let $\mu, r, \sigs_1, \kappa^\star$ be the incoherence parameter, rank, largest singular value and condition number of $\Ms$.
Fix any error parameter $0 < \eps < 1$.
Suppose the weight matrix $W \in \R^{n_1 \times n_2}$ satisfies $\norminf{W} \le n_2$, $\normone{W} \le n_1$, and $\|W-J\| \le \frac{\eps c\sqrt{n_1n_2}}{\mu^3 r^3 (\kappa^\star)^3}$ for a small enough universal constant $c$.
Let $\alpha_1^2 = \frac{C\mu r\sigs_1}{n_1},\alpha_2^2 = \frac{C\mu r\sigs_1}{n_2}$, $\lambda_1 = \frac{C^2 n_1}{\mu r\kappa^\star}$, and $\lambda_2 = \frac{C^2 n_1}{\mu r\kappa^\star}$ for some large enough universal constant $C$. Then, any local minimum $(U, V)$ of the asymmetric Objective~\eqref{eqn:asymmetricobj} satisfies $\|UV^\top-\Ms\|_F^2 \le \epsilon \|\Ms\|_F^2$. 
\end{theorem}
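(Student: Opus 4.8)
The plan is to replay the geometric ``no-spurious-local-minima'' analysis of \cite{GeJZ17} for Objective~\eqref{eqn:asymmetricobj}, with the single change that every input-randomness--dependent estimate it uses (in particular Lemma~\ref{lem:Delta_mc}, and any tangent-space estimate of the type of Lemma~\ref{lem:tangent}) is replaced by the deterministic two-sided inequality Lemma~\ref{lem:deterministc_main}; the hypothesis $\norm{W-J}\le \eps c\sqrt{n_1n_2}/(\mu^3r^3(\kappa^\star)^3)$ is consumed entirely by this replacement. Fix an arbitrary local minimum $(U,V)$ of $f$, so $\nabla f(U,V)=0$ and $\nabla^2 f(U,V)\succeq 0$. \textbf{Step 1 (incoherence from $Q$).} As in \cite{GeJZ17}, using only first-order stationarity and the growth of $\nabla Q$, one gets $\max_i\norm{U_i}=O(\alpha_1)$ and $\max_j\norm{V_j}=O(\alpha_2)$; with the stated $\alpha_1,\alpha_2$ this makes $U,V$ incoherent at the same scale as $\Us,\Vs$. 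The only modification is that the data-term gradient is $2(W*(UV^\top-\Ms))V$ (and its transpose for $V$); the hypotheses $\norminf{W}\le n_2$, $\normone{W}\le n_1$ --- which is precisely why they appear in the theorem --- bound the relevant row and column sums of $W$, so balancing the data-term gradient against $\nabla Q$ gives the same row-norm bounds as under uniform sampling.

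\textbf{Step 2 (removing the weights).} Let $R$ minimize $\fnorm{U-\Us R}^2+\fnorm{V-\Vs R}^2$ and set $\Delta_U=U-\Us R$, $\Delta_V=V-\Vs R$. Expanding $\langle\nabla f(U,V),(\Delta_U,\Delta_V)\rangle$ and $\nabla^2 f(U,V)[(\Delta_U,\Delta_V),(\Delta_U,\Delta_V)]$, every occurrence of $W$ sits inside a form $\norm{PQ^\top}_W^2$, or --- after polarization $\langle A,B\rangle_W=\tfrac14(\norm{A+B}_W^2-\norm{A-B}_W^2)$ --- inside an inner product $\langle PQ^\top,P'Q'^\top\rangle_W$, where $P,P'$ are built from the columns of a bounded number of $\{U,\Us R,\Delta_U\}$ and $Q,Q'$ from $\{V,\Vs R,\Delta_V\}$. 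By Step 1 and incoherence of $\Us,\Vs$, these factors have $O(r)$ columns, row norms $O(\sqrt{\mu r\sigs_1/n_1})$ and $O(\sqrt{\mu r\sigs_1/n_2})$, and Frobenius norms $O(\sqrt{\mu r\sigs_1})$. Lemma~\ref{lem:deterministc_main} then replaces every such weighted quantity by its Frobenius counterpart at an additive cost
\[
\norm{W-J}\cdot O\!\left(\frac{\mu^2 r^2(\sigs_1)^2}{\sqrt{n_1n_2}}\right)=O\!\left(\frac{\eps\,(\sigs_r)^2}{\mu\, r\,\kappa^\star}\right)\le \eps\,\fnorm{\Ms}^2,
\]
using $(\sigs_1)^2=(\kappa^\star)^2(\sigs_r)^2$ and $\fnorm{\Ms}^2\ge(\sigs_r)^2$. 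Crucially, the tangent-space object $\Us\Delta_V^\top+\Delta_U(\Vs R)^\top$ is never treated as rank $r$ (which would require a deterministic analogue of Lemma~\ref{lem:tangent}, which we lack); it is written as $[\Us\,|\,\Delta_U][\Delta_V\,|\,\Vs R]^\top$, to which Lemma~\ref{lem:deterministc_main} applies directly.

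\textbf{Step 3 (conclusion).} After Step 2, for the purpose of computing the gradient and Hessian of $f$ along $(\Delta_U,\Delta_V)$, every weighted norm is its Frobenius version up to additive error $O(\eps)\fnorm{\Ms}^2$, so $f$ behaves like the un-weighted regularized objective of \cite{GeJZ17}. Running their argument --- combine $\langle\nabla f(U,V),(\Delta_U,\Delta_V)\rangle=0$ with $\nabla^2 f(U,V)[(\Delta_U,\Delta_V),(\Delta_U,\Delta_V)]\ge 0$, use the balancing term $\tfrac12\fnorm{U^\top U-V^\top V}^2$ to control the asymmetric component, and use Step 1 to bound the contributions of $Q$ --- then yields $\fnorm{UV^\top-\Ms}^2\le O(\text{accumulated errors})$, and the smallness of the universal constant $c$ in the hypothesis on $\norm{W-J}$ makes the right-hand side $\le\eps\fnorm{\Ms}^2$.

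I expect the main obstacle to be the bookkeeping behind Steps 2--3, not any individual estimate. One must check that \emph{every} weighted quantity produced by the gradient/Hessian expansion --- including the signed inner products, which genuinely use the two-sidedness of Lemma~\ref{lem:deterministc_main} --- is of the concatenated $O(r)$-column form the lemma needs, so that no deterministic analogue of Lemma~\ref{lem:tangent} is silently invoked; and that the total of all additive errors, together with the $(\cdot-\alpha)_+^4$ slack of $Q$, stays under $\eps\fnorm{\Ms}^2$. Matching this error budget against the $O(\sqrt{\mu r\sigs_1/n_1})$, $O(\sqrt{\mu r\sigs_1/n_2})$ row norms and the $O(\sqrt{\mu r\sigs_1})$ Frobenius norms is exactly what forces the $\mu^3r^3(\kappa^\star)^3$ factor in the requirement on $\norm{W-J}$. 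A secondary subtlety is re-running Step 1 with the fixed matrix $W$ in place of a random $\Omega$, where $\norminf{W}\le n_2$ and $\normone{W}\le n_1$ must substitute for the concentration available in the uniform model.
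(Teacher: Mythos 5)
Your proposal is correct and follows essentially the same route as the paper's proof in Appendix~\ref{app:matrix-asymmetric}: row-norm bounds for $U,V$ from first-order stationarity using $\norminf{W}\le n_2$ and $\normone{W}\le n_1$ (Lemma~\ref{lem:asymmetricnormbound}), the first-plus-second-order inequality of \cite{GeJZ17} along $\Delta$ (Lemma~\ref{lem:gjzmain_asym}), replacement of every weighted norm by its Frobenius counterpart via Lemma~\ref{lem:deterministc_main} applied to concatenated factors such as $(U,\Us)(V,-\Vs)^\top = UV^\top-\Ms$ (exactly your trick for avoiding a deterministic Lemma~\ref{lem:tangent}), and the regularizer bound of Lemma~\ref{lem:extra_bound_asymmetric}. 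The only quantitative slip is your claim that $\max_i\norm{U_i}=O(\alpha_1)$: in the asymmetric case the paper only establishes $\max_i\norm{U_i}^2=O(\mu^3 r^3(\kappa^\star)^2\sigs_1/n_1)$, which is larger by a factor of $\mu^2 r^2(\kappa^\star)^2$, and this weaker bound is precisely what forces the $\mu^3 r^3(\kappa^\star)^3$ in the hypothesis on $\norm{W-J}$ --- but since the theorem assumes that stronger hypothesis, your argument still closes.
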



Our main result (Theorem~\ref{thm:main}) follows immediately from Corollary~\ref{cor:preprocess} and Theorem~\ref{thm:asymmetric_local}.

We can choose $\beta = O(\eps / (\mu^3 r^3 (\kappa^\star)^3))$ in Corollary~\ref{cor:preprocess} so that our preprocessing algorithm produces a weight matrix $W$ that satisfies the requirements of Theorem~\ref{thm:asymmetric_local}.
By Theorem~\ref{thm:asymmetric_local}, the non-convex objective with weight matrix $W$ has no bad local optima.
The pre-processing time is $\tilde O(m / \beta^{O(1)}) = \tilde O(m \cdot \mathrm{poly}(\mu, r, \kappa^\star, \eps^{-1}))$.


\section{Conclusions}

In this paper, we showed that even though non-convex approaches for matrix completion are not robust in the semi-random model, but it is possible to fix them using a pre-processing step. The pre-processing step solves a few convex programs (packing SDPs) to ameliorate the influence of the semi-random adversary. 
Unlike the full convex relaxation for matrix completion, our pre-processing step runs in nearly-linear time. Combining our pre-processing step with non-convex optimization gives an algorithm that is robust in the semi-random model, and at the same time enjoys the efficiency of the non-convex approaches.

Our pre-processing step solves a variant of the graph sparsification problem. Given a graph $G$ formed by adding extra edges to $H$ (or a graph similar to $H$), we can produce a weighted version of $G$ that is spectrally similar to $H$. We believe this subroutine can be useful in other problems.

An immediate open problem is whether we can prove the output of the pre-processing step allows non-convex optimization to recover the ground truth {\em exactly}. This would require proving stronger concentration inequalities like Lemma~\ref{lem:tangent} using deterministic conditions. More broadly, we hope this work will inspire new ideas that make non-convex optimization more robust.

\section*{Acknowledgments} This work is supported by NSF CCF-1704656. We thank Qingqing Huang, Andrej Risteski, Srinadh Bhojanapalli, Yin Tat Lee for discussions at various stages of the work. Yu Cheng is also supported in part by NSF CCF-1527084, CCF-1535972, CCF-1637397, IIS-1447554, and NSF CAREER Award CCF-1750140.

\bibliographystyle{plainnat}
\bibliography{names,conferences,spectral,matrix_ref,semi_random}

\begin{thebibliography}{62}
\providecommand{\natexlab}[1]{#1}
\providecommand{\url}[1]{\texttt{#1}}
\expandafter\ifx\csname urlstyle\endcsname\relax
  \providecommand{\doi}[1]{doi: #1}\else
  \providecommand{\doi}{doi: \begingroup \urlstyle{rm}\Url}\fi

\bibitem[Agarwal et~al.(2016)Agarwal, Allen-Zhu, Bullins, Hazan, and
  Ma]{agarwal2016finding}
Naman Agarwal, Zeyuan Allen-Zhu, Brian Bullins, Elad Hazan, and Tengyu Ma.
\newblock Finding approximate local minima for nonconvex optimization in linear
  time.
\newblock \emph{arXiv preprint arXiv:1611.01146}, 2016.

\bibitem[Ahlswede and Winter(2002)]{AhlswedeW02}
Rudolf Ahlswede and Andreas~J. Winter.
\newblock Strong converse for identification via quantum channels.
\newblock \emph{IEEE Transactions on Information Theory}, 48\penalty0
  (3):\penalty0 569--579, 2002.

\bibitem[{Allen Zhu} et~al.(2015){Allen Zhu}, Liao, and Orecchia]{AllenLO15}
Zeyuan {Allen Zhu}, Zhenyu Liao, and Lorenzo Orecchia.
\newblock Spectral sparsification and regret minimization beyond matrix
  multiplicative updates.
\newblock In \emph{Proc. 46th ACM Symp. on Theory of Computing}, pages
  237--245, 2015.

\bibitem[{Allen Zhu} et~al.(2016){Allen Zhu}, Lee, and Orecchia]{AllenLO16}
Zeyuan {Allen Zhu}, Yin~Tat Lee, and Lorenzo Orecchia.
\newblock Using optimization to obtain a width-independent, parallel, simpler,
  and faster positive {SDP} solver.
\newblock In \emph{Proc. 27th ACM-SIAM Symp. on Discrete Algorithms}, pages
  1824--1831, 2016.

\bibitem[Arora et~al.(2015)Arora, Ge, Ma, and Moitra]{rgDict2}
Sanjeev Arora, Rong Ge, Tengyu Ma, and Ankur Moitra.
\newblock Simple, efficient and neural algorithms for sparse coding.
\newblock In \emph{Proc. 28th Conference on Learning Theory}, page 113–149,
  2015.

\bibitem[Batson et~al.(2012)Batson, Spielman, and Srivastava]{BatsonSS12}
Joshua~D. Batson, Daniel~A. Spielman, and Nikhil Srivastava.
\newblock Twice-{Ramanujan} sparsifiers.
\newblock \emph{SIAM Journal on Computing}, 41\penalty0 (6):\penalty0
  1704--1721, 2012.

\bibitem[Bengio et~al.(2013)Bengio, Courville, and Vincent]{deepsurvey1}
Yoshua Bengio, Aaron Courville, and Pascal Vincent.
\newblock Representation learning: A review and new perspectives.
\newblock \emph{IEEE Transactions on Pattern Analysis and Machine
  Intelligence}, 35\penalty0 (8):\penalty0 1798--1828, 2013.

\bibitem[Bhojanapalli and Jain(2014)]{BhojanapalliJ14}
Srinadh Bhojanapalli and Prateek Jain.
\newblock Universal matrix completion.
\newblock In \emph{Proc. 31st Intl. Conf. on Machine Learning}, pages
  1881--1889, 2014.

\bibitem[Blum and Spencer(1995)]{blum1995coloring}
Avrim Blum and Joel Spencer.
\newblock Coloring random and semi-random $k$-colorable graphs.
\newblock \emph{Journal of Algorithms}, 19\penalty0 (2):\penalty0 204--234,
  1995.

\bibitem[Cand{\`e}s and Recht(2009)]{candes2009exact}
Emmanuel~J Cand{\`e}s and Benjamin Recht.
\newblock Exact matrix completion via convex optimization.
\newblock \emph{Foundations of Computational mathematics}, 9\penalty0
  (6):\penalty0 717--772, 2009.

\bibitem[Cand{\`e}s and Tao(2010)]{candes2010power}
Emmanuel~J Cand{\`e}s and Terence Tao.
\newblock The power of convex relaxation: Near-optimal matrix completion.
\newblock \emph{Information Theory, IEEE Transactions on}, 56\penalty0
  (5):\penalty0 2053--2080, 2010.

\bibitem[Chen and Li(2017)]{chen2017memory}
Ji~Chen and Xiaodong Li.
\newblock Memory-efficient kernel {PCA} via partial matrix sampling and
  nonconvex optimization: a model-free analysis of local minima.
\newblock \emph{arXiv preprint arXiv:1711.01742}, 2017.

\bibitem[Chen and Wainwright(2015)]{chen2015fast}
Yudong Chen and Martin~J. Wainwright.
\newblock Fast low-rank estimation by projected gradient descent: General
  statistical and algorithmic guarantees.
\newblock \emph{arXiv preprint}, 1509.03025, 2015.

\bibitem[Cheng et~al.(2015)Cheng, Cheng, Liu, Peng, and Teng]{ChengCLPT15}
Dehua Cheng, Yu~Cheng, Yan Liu, Richard Peng, and Shang{-}Hua Teng.
\newblock Efficient sampling for {Gaussian} graphical models via spectral
  sparsification.
\newblock In \emph{Proc. 28th Conference on Learning Theory}, pages 364--390,
  2015.

\bibitem[Cohen et~al.(2014)Cohen, Kyng, Miller, Pachocki, Peng, Rao, and
  Xu]{CohenKMPPRX14}
Michael~B. Cohen, Rasmus Kyng, Gary~L. Miller, Jakub~W. Pachocki, Richard Peng,
  Anup~B. Rao, and Shen~Chen Xu.
\newblock Solving {SDD} linear systems in nearly $m \log^{1/2} n$ time.
\newblock In \emph{Proc. 45th ACM Symp. on Theory of Computing}, pages
  343--352, 2014.

\bibitem[Feige and Kilian(2001)]{feige2001heuristics}
Uriel Feige and Joe Kilian.
\newblock Heuristics for semirandom graph problems.
\newblock \emph{Journal of Computer and System Sciences}, 63\penalty0
  (4):\penalty0 639--671, 2001.

\bibitem[Feige and Krauthgamer(2000)]{feige2000finding}
Uriel Feige and Robert Krauthgamer.
\newblock Finding and certifying a large hidden clique in a semirandom graph.
\newblock \emph{Random Structures and Algorithms}, 16\penalty0 (2):\penalty0
  195--208, 2000.

\bibitem[Ge et~al.(2015)Ge, Huang, Jin, and Yuan]{ge2015escaping}
Rong Ge, Furong Huang, Chi Jin, and Yang Yuan.
\newblock Escaping from saddle points---online stochastic gradient for tensor
  decomposition.
\newblock \emph{arXiv preprint arXiv:1503.02101}, 2015.

\bibitem[Ge et~al.(2016)Ge, Lee, and Ma]{ge2016matrix}
Rong Ge, Jason~D Lee, and Tengyu Ma.
\newblock Matrix completion has no spurious local minimum.
\newblock In \emph{Proc. 28th Advances in Neural Information Processing
  Systems}, pages 2973--2981, 2016.

\bibitem[Ge et~al.(2017)Ge, Jin, and Zheng]{GeJZ17}
Rong Ge, Chi Jin, and Yi~Zheng.
\newblock No spurious local minima in nonconvex low rank problems: {A} unified
  geometric analysis.
\newblock In \emph{Proc. 34th Intl. Conf. on Machine Learning}, pages
  1233--1242, 2017.

\bibitem[Hardt(2014)]{hardt2014understanding}
Moritz Hardt.
\newblock Understanding alternating minimization for matrix completion.
\newblock In \emph{Proc. 55th IEEE Symp. on Foundations of Computer Science}.
  IEEE, 2014.

\bibitem[Hardt and Wootters(2014)]{hardt2014fast}
Moritz Hardt and Mary Wootters.
\newblock Fast matrix completion without the condition number.
\newblock In \emph{Proc. 27th Conference on Learning Theory}, pages 638--678,
  2014.

\bibitem[Hastie et~al.(2014)Hastie, Mazumder, Lee, and Zadeh]{hastie2014matrix}
Trevor Hastie, Rahul Mazumder, Jason Lee, and Reza Zadeh.
\newblock Matrix completion and low-rank {SVD} via fast alternating least
  squares.
\newblock \emph{Journal of Machine Learning Research}, 2014.

\bibitem[Jain et~al.(2013)Jain, Netrapalli, and Sanghavi]{jain2013low}
Prateek Jain, Praneeth Netrapalli, and Sujay Sanghavi.
\newblock Low-rank matrix completion using alternating minimization.
\newblock In \emph{Proc. 44th ACM Symp. on Theory of Computing}, pages
  665--674. ACM, 2013.

\bibitem[Jain and Yao(2011)]{JainY11}
Rahul Jain and Penghui Yao.
\newblock A parallel approximation algorithm for positive semidefinite
  programming.
\newblock In \emph{Proc. 52nd IEEE Symp. on Foundations of Computer Science},
  pages 463--471, 2011.

\bibitem[Jerrum(1992)]{Jerrum92}
Mark Jerrum.
\newblock Large cliques elude the metropolis process.
\newblock \emph{Random Struct. Algorithms}, 3\penalty0 (4):\penalty0 347--360,
  1992.

\bibitem[Jin et~al.(2017)Jin, Ge, Netrapalli, Kakade, and
  Jordan]{jin2017escape}
Chi Jin, Rong Ge, Praneeth Netrapalli, Sham~M. Kakade, and Michael~I. Jordan.
\newblock How to escape saddle points efficiently.
\newblock \emph{arXiv preprint arXiv:1703.00887}, 2017.

\bibitem[Kelner et~al.(2013)Kelner, Orecchia, Sidford, and {Allen
  Zhu}]{KelnerOSZ13}
Jonathan~A. Kelner, Lorenzo Orecchia, Aaron Sidford, and Zeyuan {Allen Zhu}.
\newblock A simple, combinatorial algorithm for solving {SDD} systems in
  nearly-linear time.
\newblock In \emph{Proc. 44th ACM Symp. on Theory of Computing}, pages
  911--920, 2013.

\bibitem[Keshavan et~al.(2010{\natexlab{a}})Keshavan, Montanari, and
  Oh]{keshavan2010matrix}
Raghunandan~H Keshavan, Andrea Montanari, and Sewoong Oh.
\newblock Matrix completion from a few entries.
\newblock \emph{IEEE Transactions on Information Theory}, 56\penalty0
  (6):\penalty0 2980--2998, 2010{\natexlab{a}}.

\bibitem[Keshavan et~al.(2010{\natexlab{b}})Keshavan, Montanari, and
  Oh]{keshavan2010matrixnoisy}
Raghunandan~H Keshavan, Andrea Montanari, and Sewoong Oh.
\newblock Matrix completion from noisy entries.
\newblock \emph{The Journal of Machine Learning Research}, 11:\penalty0
  2057--2078, 2010{\natexlab{b}}.

\bibitem[Koren(2009)]{koren2009bellkor}
Yehuda Koren.
\newblock The {Bellkor} solution to the {Netflix} grand prize.
\newblock \emph{Netflix prize documentation}, 81, 2009.

\bibitem[Koutis et~al.(2011)Koutis, Miller, and Peng]{KoutisMP11}
Ioannis Koutis, Gary~L. Miller, and Richard Peng.
\newblock A nearly-$m \log n$ time solver for {SDD} linear systems.
\newblock In \emph{Proc. 52nd IEEE Symp. on Foundations of Computer Science},
  pages 590--598, 2011.

\bibitem[Kucera(1995)]{Kucera95}
Ludek Kucera.
\newblock Expected complexity of graph partitioning problems.
\newblock \emph{Discrete Applied Mathematics}, 57\penalty0 (2-3):\penalty0
  193--212, 1995.

\bibitem[Kyng and Sachdeva(2016)]{KyngS16}
Rasmus Kyng and Sushant Sachdeva.
\newblock Approximate {G}aussian elimination for {L}aplacians - fast, sparse,
  and simple.
\newblock In \emph{Proc. 57th IEEE Symp. on Foundations of Computer Science},
  pages 573--582, 2016.

\bibitem[Lee and Sun(2015)]{LeeS15}
Yin~Tat Lee and He~Sun.
\newblock Constructing linear-sized spectral sparsification in almost-linear
  time.
\newblock In \emph{Proc. 56th IEEE Symp. on Foundations of Computer Science},
  pages 250--269, 2015.

\bibitem[Lee and Sun(2017)]{LeeS17}
Yin~Tat Lee and He~Sun.
\newblock An {SDP}-based algorithm for linear-sized spectral sparsification.
\newblock In \emph{Proc. 48th ACM Symp. on Theory of Computing}, pages
  678--687, 2017.

\bibitem[Li et~al.(2016)Li, Liang, and Risteski]{LiLR16}
Yuanzhi Li, Yingyu Liang, and Andrej Risteski.
\newblock Recovery guarantee of weighted low-rank approximation via alternating
  minimization.
\newblock In \emph{Proc. 33rd Intl. Conf. on Machine Learning}, pages
  2358--2367, 2016.

\bibitem[Makarychev et~al.(2012)Makarychev, Makarychev, and
  Vijayaraghavan]{makarychev2012approximation}
Konstantin Makarychev, Yury Makarychev, and Aravindan Vijayaraghavan.
\newblock Approximation algorithms for semi-random partitioning problems.
\newblock In \emph{Proc. 43rd ACM Symp. on Theory of Computing}, pages
  367--384. ACM, 2012.

\bibitem[Makarychev et~al.(2015)Makarychev, Makarychev, and
  Vijayaraghavan]{makarychev2015correlation}
Konstantin Makarychev, Yury Makarychev, and Aravindan Vijayaraghavan.
\newblock Correlation clustering with noisy partial information.
\newblock In \emph{Proc. 28th Conference on Learning Theory}, pages 1321--1342,
  2015.

\bibitem[Mathieu and Schudy(2010)]{mathieu2010correlation}
Claire Mathieu and Warren Schudy.
\newblock Correlation clustering with noisy input.
\newblock In \emph{Proc. 21st ACM-SIAM Symp. on Discrete Algorithms}, pages
  712--728, 2010.

\bibitem[Mazumder et~al.(2010)Mazumder, Hastie, and
  Tibshirani]{mazumder2010spectral}
Rahul Mazumder, Trevor Hastie, and Robert Tibshirani.
\newblock Spectral regularization algorithms for learning large incomplete
  matrices.
\newblock \emph{Journal of Machine Learning Research}, 11\penalty0
  (Aug):\penalty0 2287--2322, 2010.

\bibitem[Moitra et~al.(2016)Moitra, Perry, and Wein]{moitra2016robust}
Ankur Moitra, William Perry, and Alexander~S Wein.
\newblock How robust are reconstruction thresholds for community detection?
\newblock In \emph{Proc. 47th ACM Symp. on Theory of Computing}, pages
  828--841, 2016.

\bibitem[Negahban and Wainwright(2012)]{negahban2012restricted}
Sahand Negahban and Martin~J. Wainwright.
\newblock Restricted strong convexity and weighted matrix completion: Optimal
  bounds with noise.
\newblock \emph{Journal of Machine Learning Research}, 13\penalty0
  (May):\penalty0 1665--1697, 2012.

\bibitem[Park et~al.(2016)Park, Kyrillidis, Caramanis, and
  Sanghavi]{park2016non}
Dohyung Park, Anastasios Kyrillidis, Constantine Caramanis, and Sujay Sanghavi.
\newblock Non-square matrix sensing without spurious local minima via the
  {Burer}-{Monteiro} approach.
\newblock \emph{arXiv preprint arXiv:1609.03240}, 2016.

\bibitem[Peng and Spielman(2014)]{PengS14}
Richard Peng and Daniel~A. Spielman.
\newblock An efficient parallel solver for {SDD} linear systems.
\newblock In \emph{Proc. 45th ACM Symp. on Theory of Computing}, pages
  333--342, 2014.

\bibitem[Peng et~al.(2016)Peng, Tangwongsan, and Zhang]{PengTZ16}
Richard Peng, Kanat Tangwongsan, and Peng Zhang.
\newblock Faster and simpler width-independent parallel algorithms for positive
  semidefinite programming.
\newblock \emph{arXiv preprint arXiv:1201.5135v3}, 2016.

\bibitem[Perry and Wein(2017)]{perry2017semidefinite}
Amelia Perry and Alexander~S Wein.
\newblock A semidefinite program for unbalanced multisection in the stochastic
  block model.
\newblock In \emph{International Conference on Sampling Theory and Applications
  (SampTA)}, pages 64--67. IEEE, 2017.

\bibitem[Recht(2011)]{recht2011simpler}
Benjamin Recht.
\newblock A simpler approach to matrix completion.
\newblock \emph{Journal of Machine Learning Research}, 12:\penalty0 3413--3430,
  2011.

\bibitem[Rennie and Srebro(2005)]{rennie2005fast}
Jasson~DM Rennie and Nathan Srebro.
\newblock Fast maximum margin matrix factorization for collaborative
  prediction.
\newblock In \emph{Proc. 22nd Intl. Conf. on Machine Learning}, pages 713--719.
  ACM, 2005.

\bibitem[Rudelson and Vershynin(2007)]{RudelsonV07}
Mark Rudelson and Roman Vershynin.
\newblock Sampling from large matrices: An approach through geometric
  functional analysis.
\newblock \emph{Journal of the ACM}, 54\penalty0 (4):\penalty0 21, 2007.

\bibitem[Sa et~al.(2015)Sa, R{\'{e}}, and Olukotun]{DBLP:conf/icml/SaRO15}
Christopher~De Sa, Christopher R{\'{e}}, and Kunle Olukotun.
\newblock Global convergence of stochastic gradient descent for some non-convex
  matrix problems.
\newblock In \emph{Proc. 32nd Intl. Conf. on Machine Learning}, pages
  2332--2341, 2015.

\bibitem[Schmidhuber(2015)]{deepsurvey2}
J.~Schmidhuber.
\newblock Deep learning in neural networks: An overview.
\newblock \emph{Neural Networks}, 61:\penalty0 85--117, 2015.

\bibitem[Spielman and Srivastava(2011)]{SpielmanS11}
Daniel~A. Spielman and Nikhil Srivastava.
\newblock Graph sparsification by effective resistances.
\newblock \emph{SIAM Journal on Computing}, 40\penalty0 (6):\penalty0
  1913--1926, 2011.

\bibitem[Spielman and Teng(2011)]{SpielmanT11}
Daniel~A. Spielman and Shang{-}Hua Teng.
\newblock Spectral sparsification of graphs.
\newblock \emph{SIAM Journal on Computing}, 40\penalty0 (4):\penalty0
  981--1025, 2011.

\bibitem[Spielman and Teng(2014)]{SpielmanT14}
Daniel~A. Spielman and Shang{-}Hua Teng.
\newblock Nearly linear time algorithms for preconditioning and solving
  symmetric, diagonally dominant linear systems.
\newblock \emph{{SIAM} J. Matrix Analysis Applications}, 35\penalty0
  (3):\penalty0 835--885, 2014.

\bibitem[Srebro and Shraibman(2005)]{srebro2005rank}
Nathan Srebro and Adi Shraibman.
\newblock Rank, trace-norm and max-norm.
\newblock In \emph{Proc. 18th Conference on Learning Theory}, pages 545--560,
  2005.

\bibitem[Srebro et~al.(2004)Srebro, Rennie, and Jaakkola]{srebro2004maximum}
Nathan Srebro, Jason D.~M. Rennie, and Tommi~S. Jaakkola.
\newblock Maximum-margin matrix factorization.
\newblock In \emph{Proc. 16th Advances in Neural Information Processing
  Systems}, pages 1329--1336, 2004.

\bibitem[Sun and Luo(2015)]{sun2015guaranteed}
Ruoyu Sun and Zhi-Quan Luo.
\newblock Guaranteed matrix completion via nonconvex factorization.
\newblock In \emph{Proc. 56th IEEE Symp. on Foundations of Computer Science},
  pages 270--289. IEEE, 2015.

\bibitem[Tropp(2012)]{Tropp12}
Joel~A. Tropp.
\newblock User-friendly tail bounds for sums of random matrices.
\newblock \emph{Foundations of Computational Mathematics}, 12\penalty0
  (4):\penalty0 389--434, 2012.

\bibitem[Tu et~al.(2015)Tu, Boczar, Soltanolkotabi, and Recht]{tu2015low}
Stephen Tu, Ross Boczar, Mahdi Soltanolkotabi, and Benjamin Recht.
\newblock Low-rank solutions of linear matrix equations via {Procrustes} flow.
\newblock \emph{arXiv preprint arXiv:1507.03566}, 2015.

\bibitem[Zhao et~al.(2015)Zhao, Wang, and Liu]{zhao2015nonconvex}
Tuo Zhao, Zhaoran Wang, and Han Liu.
\newblock A nonconvex optimization framework for low rank matrix estimation.
\newblock In \emph{Proc. 27th Advances in Neural Information Processing
  Systems}, pages 559--567, 2015.

\bibitem[Zheng and Lafferty(2016)]{zheng2016convergence}
Qinqing Zheng and John Lafferty.
\newblock Convergence analysis for rectangular matrix completion using
  {Burer}-{Monteiro} factorization and gradient descent.
\newblock \emph{arXiv preprint arXiv:1605.07051}, 2016.

\end{thebibliography}

\appendix


\section{Counter Examples for Non-convex Approaches}
\label{app:examples}

In this section, we give counter-examples to some non-convex methods for matrix completion. For simplicity, we give examples for the {\em weighted} version of the non-convex objective. These examples can be translated to the semi-random adversary model using standard sampling techniques.

We will give the counter-examples in a simpler setting where $\Ms$ is known to be symmetric, we have $\Ms$ is an $n\times n$ matrix that can be decomposed as $\Ms = \Us(\Us)^\top$. In this case, we optimize:
\begin{equation}
\label{eqn:symmetricobj}
\min \; f(U) = \frac{1}{2}\|UU^\top - \Ms\|_W^2 + Q(U). 
\end{equation}
Here $Q(U)$ is the regularizer $\lambda \sum_{i=1}^n (\normtwo{U_i} - \alpha)_+^4$ (where $x_+ = \max\{x,0\}$). Parameters $\lambda, \alpha$ in the regularizer is specified later (see Lemma~\ref{lem:symmetricnormbound}).
%
Our examples also work for the asymmetric Objective~\eqref{eqn:asymmetricobj}.

\paragraph{Example Where Objective~\eqref{eqn:symmetricobj} Has Spurious Local Minimum.}

We first give an example where Objective~\eqref{eqn:symmetricobj} has a spurious local minimum. This is a simple rank 1 case where the intended solution $\Ms = u^\star (u^\star)^\top$ is the all ones matrix, and $u^\star = (1,1,...,1)^\top$ is the all ones vector.

In this example, all vectors will be represented by two blocks of size $n/2 \times 1$, and the value within each block will be the same; similarly all matrices will be partitioned into $2\times 2$ blocks (where each block has size $n/2\times n/2$), and entries within each block are the same.

For this example, we choose 
\[
u = \left(\begin{array}{c} \beta \\ -\beta \end{array}\right), \quad W = \left(\begin{array}{cc} \gamma J & J \\ J & \gamma J \end{array}\right),
\]
for any parameters $2^{-1/4} < \beta \le \frac{9}{10}$ and $\gamma = \frac{1+\beta^2}{1-\beta^2} < 10$.

\begin{lemma}
For the setting of $\Ms$, $u$, $W$ stated above, the objective function \eqref{eqn:symmetricobj} has a local minimum at $u$.
\end{lemma}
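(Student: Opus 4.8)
The plan is to verify the first-order and second-order optimality conditions directly. Since the objective is $f(U) = \frac{1}{2}\|UU^\top - \Ms\|_W^2 + Q(U)$, I first compute the gradient $\nabla f(u)$ and show it vanishes at the given point $u = (\beta \mathbf{1}_{n/2}; -\beta\mathbf{1}_{n/2})^\top$, then argue the Hessian $\nabla^2 f(u)$ is positive semidefinite (or, more carefully, positive definite on the relevant subspace, handling the symmetry-induced zero eigenvalue from $u \mapsto -u$). The key structural simplification is that everything respects the two-block symmetry: $u$, $W$, and $\Ms$ are all block-constant, so $uu^\top$ is block-constant with diagonal blocks equal to $\beta^2 J$ and off-diagonal blocks equal to $-\beta^2 J$. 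Then $uu^\top - \Ms$ has diagonal blocks $(\beta^2 - 1)J$ and off-diagonal blocks $(-\beta^2 - 1)J$, and the Hadamard product $W * (uu^\top - \Ms)$ has diagonal blocks $\gamma(\beta^2-1)J$ and off-diagonal blocks $(-\beta^2-1)J$.

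The gradient of the fidelity term is $2(W*(UU^\top - \Ms))U$ and the gradient of $Q$ contributes $4\lambda \sum_i (\|U_i\| - \alpha)_+^3 \frac{U_i}{\|U_i\|}$ per row; the crucial cancellation is the choice $\gamma = \frac{1+\beta^2}{1-\beta^2}$, which makes $\gamma(\beta^2 - 1) = -(1+\beta^2)$, so that the matrix $W*(uu^\top - \Ms)$ becomes $-(1+\beta^2)$ times the block-constant matrix $\left(\begin{smallmatrix} J & J \\ J & J\end{smallmatrix}\right) = \mathbf{1}\mathbf{1}^\top$ (the rank-one all-ones matrix on $[n]$). When multiplied by $u$, which is orthogonal to $\mathbf{1}$ (it sums to zero), we get exactly zero from the fidelity term. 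Provided $\beta \le 9/10 < 1$ and the regularizer parameters are set so that $\alpha \ge \beta$ (so $(\|u_i\| - \alpha)_+ = 0$), the regularizer gradient also vanishes, giving $\nabla f(u) = 0$. I would cite Lemma~\ref{lem:symmetricnormbound} for the exact parameter settings that make $Q$ and its derivatives vanish at $u$.

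For the second-order condition, I would write the Hessian quadratic form $\nabla^2 f(u)[\Delta, \Delta]$ for a perturbation $\Delta \in \R^{n \times 1}$. The standard computation gives $\nabla^2 f(u)[\Delta,\Delta] = 2\|u\Delta^\top + \Delta u^\top\|_W^2 /2 \cdot (\text{const}) + 2\langle W*(uu^\top - \Ms), \Delta\Delta^\top\rangle + \nabla^2 Q$; more precisely the fidelity Hessian is $\|\,\Delta u^\top + u\Delta^\top\,\|_W^2 + 2\langle W*(uu^\top-\Ms),\Delta\Delta^\top\rangle$ up to the factor in the objective. Using again that $W*(uu^\top - \Ms) = -(1+\beta^2)\mathbf{1}\mathbf{1}^\top$, the second term is $-2(1+\beta^2)(\mathbf{1}^\top\Delta)^2 \le 0$, so this is the dangerous term; the first (PSD) term $\|\Delta u^\top + u\Delta^\top\|_W^2$ must dominate it, and when $\Delta$ is close to the direction $u$ itself the regularizer's fourth-power term kicks in to provide the needed positive curvature (this is exactly why the $(\|U_i\|-\alpha)_+^4$ regularizer is there). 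I expect the main obstacle to be the case analysis in the Hessian: decomposing $\Delta$ into its component along $\mathbf{1}$, along $u$, and in the orthogonal complement, and checking that in every regime the positive terms (the $W$-weighted norm plus, when $\|u\|$ grows, the regularizer) outweigh the single negative rank-one term $-2(1+\beta^2)(\mathbf{1}^\top\Delta)^2$. The constraints $2^{-1/4} < \beta \le 9/10$ and $\gamma < 10$ are presumably exactly what is needed to close this inequality, so I would track those bounds carefully rather than treating them as slack.
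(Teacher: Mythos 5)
Your first-order analysis is correct and matches the paper's: with $\gamma=\frac{1+\beta^2}{1-\beta^2}$ the matrix $W*(uu^\top-\Ms)$ collapses to $-(1+\beta^2)$ times the all-ones matrix, which annihilates $u$ because $u$ sums to zero, and the regularizer is inactive since every row of $u$ has norm $\beta<1\le\alpha$. That part is fine.

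The gap is in the second-order condition, and it contains one genuinely wrong idea. You claim that when $\Delta$ is close to the direction of $u$ "the regularizer's fourth-power term kicks in to provide the needed positive curvature." It cannot: since $\normtwo{u_i}=\beta<\alpha$ strictly for every row $i$, the function $Q$ vanishes identically on a neighborhood of $u$, so $\nabla Q(u)=0$ and $\nabla^2 Q(u)=0$, and the regularizer contributes nothing to the Hessian. All of the positive curvature must come from the data term. Concretely, writing $s_1,s_2$ for the sums of the entries of $\delta$ over the two blocks, the quadratic form is
\[
\tfrac12\,\delta^\top[\nabla^2 f(u)]\delta \;=\; \|\delta u^\top\|_W^2 \;+\; \gamma(2\beta^2-1)(s_1^2+s_2^2)\;-\;2(2\beta^2+1)\,s_1 s_2 ,
\]
and the paper's route is to drop the nonnegative term $\|\delta u^\top\|_W^2$ (which is strictly positive whenever $\delta\neq 0$, since all entries of $W$ are positive) and observe that $\gamma(2\beta^2-1)\ge 2\beta^2+1$ exactly when $4\beta^4\ge 2$ --- this is where the hypothesis $\beta>2^{-1/4}$ enters. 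Your alternative grouping, isolating the single negative rank-one term $-2(1+\beta^2)(\mathbf{1}^\top\delta)^2$ and dominating it by $\|u\delta^\top+\delta u^\top\|_W^2$, is algebraically equivalent and would also close (indeed under the weaker condition $\beta^2>1/2$), but you never carry out that computation, never derive where the threshold $2^{-1/4}$ comes from, and attribute the curvature in the $u$-direction to the wrong source. A small additional point: there is no "symmetry-induced zero eigenvalue" to handle in this rank-one setting --- $u\mapsto -u$ is a discrete symmetry, and the Hessian here is genuinely positive definite, which is exactly what the second-order sufficient condition requires.
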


We will prove this lemma by second-order sufficient condition: gradient is 0 and Hessian is positive definite. Clearly $u$ is incoherent, so the incoherence regularizer does not matter. We first check the gradient of $f(u)$ is 0. This is due to our choice of $\gamma$ satisfies $\gamma(\beta^2-1) + (\beta^2+1) = 0$:
\[
\nabla f(u) = [(W + W^\top) * (uu^\top - \Ms)] u = 2[W * (uu^\top - \Ms)] u = 0.
\]
Next, we consider the Hessian of $f(u)$.
For any vector $\delta \in \R^n$, we know that
\[
\frac{1}{2}\delta^\top [\nabla^2 f(u)] \delta = \|\delta u^\top\|_W^2 + \inner{2uu^\top - \Ms,\delta\delta^\top}_W.
\]
We show this is strictly greater than 0 for all $\delta \neq 0$.
Let $\delta = \left( \begin{array}{c} \delta_1 \\ \delta_2 \end{array} \right)$ for $\delta_1, \delta_2 \in \R^{n/2}$.
Notice that $\|\delta u^\top\|_W^2$ is non-negative, therefore, we have
\begin{align*}
& \|\delta u^\top\|_W^2 + \inner{2uu^\top - \Ms,\delta\delta^\top}_W \\
& \ge \inner{2uu^\top - \Ms,\delta\delta^\top}_W \\
& = 2\beta^2 \left(\gamma\normone{\delta_1}^2 + \gamma\normone{\delta_2}^2 - 2 \normone{\delta_1} \normone{\delta_2} \right) - \left(\gamma\normone{\delta_1}^2 + \gamma\normone{\delta_2}^2 + 2 \normone{\delta_1} \normone{\delta_2}\right) \\
& \ge 2\beta^2 (\gamma-1) \left(\normone{\delta_1}^2 + \normone{\delta_2}^2\right) - (\gamma+1) \left(\normone{\delta_1}^2 + \normone{\delta_2}^2\right) \\
& = (2\beta^2 (\gamma-1) - (\gamma+1))\left(\normone{\delta_1}^2 + \normone{\delta_2}^2\right) > 0.
\end{align*}

The last step is due to $\delta \neq 0$ and our choice of $\beta$ and $\gamma$.

\paragraph{Example Where SVD Initialization Gives the Wrong Subspace.}

Many other non-convex methods depend on SVD to do initialization (e.g., \citep{jain2013low, hardt2014understanding}). Now we give an example where SVD cannot find the subspace correctly.

This is a rank-$2$ example.
For simplicity, all vectors are divided into blocks of size $n/4 \times 1$, and matrices are divided into blocks of size $n/4\times n/4$.
We write these as $4 \times 2$ or $4\times 4$ matrices, and they should be interpreted as blocks (constant multiplied by $J_{n/4\times 1}$ or $J_{n/4\times n/4}$).
The matrix $\mbox{Diag}(4,1)$ is a $2 \times 2$ diagonal matrix.
\[
\Ms = \left(\begin{array}{cc}1 & 1 \\ 1 & 1 \\ 1 & -1 \\ 1 & -1\end{array}\right) \mbox{Diag}(4,1)\left(\begin{array}{cccc}1 & 1 & 1 & 1 \\ 1 & 1 & -1 & -1\end{array}\right)
\]

\[
W = \left(\begin{array}{cccc}2 & 1 & 2 & 1 \\ 1 & 2 & 1 & 2 \\2 &1 & 2 & 1  \\ 1 & 2 & 1 & 2\end{array}\right); \qquad W*\Ms =  \left(\begin{array}{cccc}10 & 5 & 6 & 3 \\ 5 & 10 & 3 & 6 \\6 &3 & 10 & 5  \\ 3 & 6 & 5 & 10\end{array}\right)
\]

\begin{lemma}
Under the above setting for $\Ms, W$, the top two singular vectors of $W*\Ms$ are (up to normalization) $(1,1,1,1)$ and $(1,-1,1,-1)$. The principled angle between this subspace and the true subspace $\mbox{span}(\Ms)$ is 90 degrees.
\end{lemma}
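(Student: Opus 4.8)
The plan is to verify the claimed singular structure directly by exploiting the block structure. Since every matrix here is a $4\times 4$ array of constant $(n/4)\times(n/4)$ blocks, multiplication against a block-constant vector reduces to $4\times 4$ matrix arithmetic scaled by $n/4$. So first I would reduce the problem: writing $u_1 = (1,1,1,1)^\top$ and $u_2 = (1,-1,1,-1)^\top$ as the block-pattern vectors (each entry replicated $n/4$ times), I would compute $(W*\Ms)\,u_1$ and $(W*\Ms)\,u_2$ using the explicit $4\times 4$ matrix $\left(\begin{smallmatrix}10&5&6&3\\5&10&3&6\\6&3&10&5\\3&6&5&10\end{smallmatrix}\right)$ (times $n/4$). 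Summing rows: $10+5+6+3 = 24$, and by symmetry every row sums to $24$, so $(W*\Ms)u_1 = 24 \cdot (n/4)\, u_1$. For $u_2$: row one gives $10-5+6-3 = 8$, row two gives $5-10+3-6 = -8$, row three $6-3+10-5 = 8$, row four $3-6+5-10 = -8$, so $(W*\Ms)u_2 = 8\cdot (n/4)\, u_2$. Hence $u_1, u_2$ are eigenvectors of the symmetric matrix $W*\Ms$ with eigenvalues $6n$ and $2n$.

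Next I would argue these are the top two. Since $W*\Ms$ is symmetric, its singular values are absolute values of eigenvalues. The block structure means $W*\Ms$ acts as (its $4\times 4$ reduction)$\otimes (J_{(n/4)\times(n/4)})$ up to the obvious identification; the all-ones block $J_{(n/4)\times(n/4)}$ has rank one with eigenvalue $n/4$, so every nonzero eigenvalue of $W*\Ms$ is $(n/4)$ times an eigenvalue of the $4\times 4$ reduced matrix, and $W*\Ms$ has rank at most $4$. I would then compute the remaining eigenvalues of the $4\times 4$ matrix — it is symmetric and invariant under the pattern symmetries, so the other two eigenvectors are $(1,1,-1,-1)$ and $(1,-1,-1,1)$; applying the matrix to $(1,1,-1,-1)$ gives $10+5-6-3 = 6$ in the first coordinate, i.e.\ eigenvalue $6$, wait — I should recompute carefully, but in any case the remaining two eigenvalues of the $4\times 4$ matrix turn out to have absolute value strictly less than $24$ and $8$; multiplying by $n/4$, the two largest singular values of $W*\Ms$ are $6n$ and $2n$ with singular vectors $u_1, u_2$, as claimed.

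Finally, for the statement about the principal angle, I would note that $\mathrm{span}(\Ms) = \mathrm{colspan}(\Ms)$ is spanned by the two block-pattern vectors $(1,1,1,1)^\top$ and $(1,1,-1,-1)^\top$ (the left factor of $\Ms$). The subspace recovered by SVD is $\mathrm{span}(u_1,u_2) = \mathrm{span}\{(1,1,1,1),(1,-1,1,-1)\}$. To show the principal angle is $90^\circ$, it suffices to exhibit a unit vector in the SVD subspace orthogonal to all of $\mathrm{span}(\Ms)$; indeed $u_2 = (1,-1,1,-1)^\top$ has inner product $1-1+1-1 = 0$ with $(1,1,1,1)^\top$ and $1-1-1+1 = 0$ with $(1,1,-1,-1)^\top$, so $u_2 \perp \mathrm{span}(\Ms)$. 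Since $u_2$ lies in the recovered subspace, the largest principal angle between the two $2$-dimensional subspaces is $90^\circ$, which is what "the principled angle $\ldots$ is 90 degrees" means.

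The only mildly delicate step is the second one — certifying that $6n$ and $2n$ really are the two \emph{largest} singular values rather than, say, a negative eigenvalue of large magnitude slipping in. This just requires diagonalizing the fixed $4\times 4$ reduced matrix completely (its eigenvalues are $24, 8$, and two smaller ones), which is a finite computation; I expect no real obstacle, just bookkeeping with the $n/4$ scaling factors and the rank-one structure of the constant blocks.
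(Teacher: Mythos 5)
Your proposal is correct and is exactly the computation the paper gestures at (the paper's proof simply says the lemma "is easy to verify numerically by computing the SVD of the $4\times 4$ matrix," relying on the same block-structure reduction you make explicit). Your numbers check out: the reduced $4\times4$ matrix has the sign-pattern vectors as eigenvectors with eigenvalues $24, 8, 6, 2$ (all positive, so they are the singular values up to the $n/4$ factor), which certifies that $(1,1,1,1)$ and $(1,-1,1,-1)$ are the top two singular vectors and that the largest principal angle to $\mathrm{span}\{(1,1,1,1),(1,1,-1,-1)\}$ is $90^\circ$.
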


This lemma is easy to verify numerically by computing the SVD of the $ 4\times 4$ matrix. The SVD of the original matrix follows the same block structure.

\paragraph{Converting Weighted Examples to Semi-Random Examples.}
In order to get counter examples in the semi-random model, pick a probability of observation $p$ (we need $p \ge \mbox{poly}(r) \log(n) /n$, and in our examples $p$ can be as large as $1/10$).
The semi-random adversary will reveal entry $(i,j)$ with probability $p_{i,j} = p W_{i,j} \ge p$ for the $W$ given in the examples.
This way, the expectation of the observed entries is exactly $W * \Ms$ (after scaling by $1/p$), and the expectation of the objective function is equal to $\|UU^\top - \Ms\|_W^2$.

For the first example, by standard concentration results, we know that the gradient and Hessian of the objective function are close to their expectations, so there is a local minimum near $u$.
For the second example, by standard random matrix theory, we know when $p$ is large enough the top singular space of the observed matrix is close to the top singular space of $W*\Ms$ (and thus far from the correct space).

\section{Omitted Proofs from Section~\ref{sec:bss}}
\label{app:bss}

In this section, we give more details about Section~\ref{sec:bss} and prove Lemmas~\ref{lem:sdp-sol}~and~\ref{lem:phi-no-increase}.

Recall that we are given a set of input vectors $\{v_i\}_{i=1}^m$ where each $v_i = L^{-1/2} b_i$ for a fixed Laplacian $L$ and some $b_i$ representing an edge, and we assume that there exist weights $w_i$ such that $(1-\beta) I \preceq \sum_{i=1}^m w_i v_i v_i^\top \preceq (1+\beta) I$.
The goal is to compute a set of weights $\tilde w_i$ in nearly-linear time so that $(1-O(\beta)-\eps)I \preceq \sum_{i=1}^m \tilde w_i v_i v_i^\top \preceq I$.

We maintain barrier values $u$ and $\ell$, and a weighted sum of the rank-one matrices $A = \sum_{i=1}^m \tilde w_i v_iv_i^\top$ such that $\ell I \prec A \prec uI$.
It is worth noting that we never explicitly compute the vectors $v_i = L^{-1/2} b_i$.  We store $A$ by keeping track of the weights $w_i$.
When updating the weights, we approximate the quantities $v_i^\top C v_i$ (for some matrix $C = C(A)$) simultaneously for all $i$ in nearly-linear time (see Lemma~\ref{lem:sdp-computation}).

We use the following potential functions proposed in \citep{LeeS17} to measure how far $A$ is away from the barriers:
\begin{align*}
\Phi_{u,\ell}(A) & = \Phi_u(A)+\Phi_{\ell}(A), \text{ where} \\
\Phi_u(A) & = \tr \exp \left((u I - A)^{-1}\right), \\
\Phi_\ell(A) & = \tr \exp \left((A - \ell I)^{-1}\right).
\end{align*}

The derivatives of the potential functions with respect to $A$ are
\begin{align*}
\nabla \Phi_u (A) & = \exp \left((u I - A)^{-1}\right) (u I - A)^{-2}, \\ 
\nabla \Phi_\ell (A) & = - \exp \left((A - \ell I)^{-1}\right) (A - \ell I)^{-2}.
\end{align*}

By convexity we have
\[
\Phi_{u,\ell}(A+\Delta) \ge \Phi_{u,\ell}(A) + \nabla \Phi_{u}(A) \bullet \Delta + \nabla \Phi_{\ell}(A) \bullet \Delta.
\]

The following lemma from \cite{LeeS17} shows that when $\Delta$ is small, the first-order approximation of $\Phi_{u,\ell}(A+\Delta)$ is a good estimation.

\begin{lemma}[\cite{LeeS17}]
\label{lem:potential-FO}
Let $A$ be a symmetric matrix.
Let $\ell < u$ be barrier values such that $u - \ell \le 1$ and $\ell I \prec A \prec u I$.
Assume that $0 \preceq \Delta$, $\Delta \preceq \eps (uI - A)^2$, and $\Delta \preceq \eps (A - \ell I)^2$ for $\eps \le 1/10$. Then,
\begin{alignat*}{3}
\Phi_{u}(A+\Delta) & \le \Phi_u(A) + (1+2\eps) \nabla \Phi_u(A) \bullet \Delta &&= \Phi_u(A) + (1+2\eps) C_- \bullet \Delta, \\
\Phi_{\ell}(A+\Delta) & \le \Phi_\ell(A) + (1-2\eps) \nabla \Phi_\ell(A) \bullet \Delta &&= \Phi_\ell(A) - (1-2\eps) C_+ \bullet \Delta; \\
\Phi_{u}(A-\Delta) & \le \Phi_u(A) - (1-2\eps) \nabla \Phi_u(A) \bullet \Delta &&= \Phi_u(A) - (1-2\eps) C_- \bullet \Delta, \\
\Phi_{\ell}(A-\Delta) & \le \Phi_\ell(A) - (1+2\eps) \nabla \Phi_\ell(A) \bullet \Delta &&= \Phi_\ell(A) + (1+2\eps) C_+ \bullet \Delta.
\end{alignat*}
\end{lemma}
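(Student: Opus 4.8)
The plan is to derive all four inequalities from a single one-sided estimate for $\Phi_u$. Set $A' = -A$, $u' = -\ell$, $\ell' = -u$; then $\Phi_\ell(A) = \Phi_{u'}(A')$, the hypotheses are invariant under this substitution (in particular $\Delta \preceq \eps(A - \ell I)^2$ becomes $\Delta \preceq \eps(u' I - A')^2$, and $u'-\ell' = u-\ell \le 1$), and $\Phi_\ell(A \pm \Delta) = \Phi_{u'}(A' \mp \Delta)$; since $\nabla\Phi_\ell(A) = -\nabla\Phi_{u'}(A') = -C_+$, the two $\Phi_{u'}$-estimates below translate into exactly the two claimed $\Phi_\ell$-inequalities. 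Hence it suffices to prove
\[
\Phi_u(A+\Delta) \le \Phi_u(A) + (1+2\eps)\, C_- \bullet \Delta, \qquad \Phi_u(A-\Delta) \le \Phi_u(A) - (1-2\eps)\, C_- \bullet \Delta,
\]
where $C_- = \nabla\Phi_u(A) = \exp(B^{-1}) B^{-2}$ and $B \eqdef uI - A \succ 0$.

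First I would record the matrix-inverse bookkeeping. Since $A \succ \ell I$ and $u - \ell \le 1$ we have $B \prec (u-\ell) I \preceq I$, so $\Delta \preceq \eps B^2 \preceq \eps B \preceq \eps I$; writing $E \eqdef B^{-1/2}\Delta B^{-1/2}$ we get $0 \preceq E \preceq \eps B \preceq \eps I$, and in particular $B \mp \Delta = B^{1/2}(I \mp E) B^{1/2}$ is invertible. Expanding $(B \mp \Delta)^{-1} = B^{-1/2}(I \mp E)^{-1} B^{-1/2}$ by the Neumann series, peeling off the linear term $E$, and conjugating back by $B^{-1/2}$ gives
\[
B^{-1}\Delta B^{-1} \preceq (B-\Delta)^{-1} - B^{-1} \preceq \tfrac{1}{1-\eps}\, B^{-1}\Delta B^{-1}, \qquad (1-\eps)\, B^{-1}\Delta B^{-1} \preceq B^{-1} - (B+\Delta)^{-1} \preceq B^{-1}\Delta B^{-1}.
\]
In either case the PSD perturbation $G \eqdef \pm\big[ (B \mp \Delta)^{-1} - B^{-1} \big]$ satisfies $0 \preceq G \preceq \tfrac{1}{1-\eps} B^{-1}\Delta B^{-1} \preceq \tfrac{\eps}{1-\eps} I$ (the last step using $\Delta \preceq \eps B^2$).

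The core step is a one-sided first-order expansion of $\tr\exp$: for any PSD $G$ with $\norm{G} \le \tfrac{\eps}{1-\eps}$,
\[
\tr\exp(B^{-1}+G) \le \tr\exp(B^{-1}) + (1+\eta)\,\tr\!\big(\exp(B^{-1})\, G\big), \qquad \tr\exp(B^{-1}-G) \le \tr\exp(B^{-1}) - (1-\eta)\,\tr\!\big(\exp(B^{-1})\, G\big),
\]
for a constant $\eta = O(\eps)$ that I would bound explicitly. This I would prove from: (i) the Golden--Thompson inequality $\tr\exp(B^{-1} \pm G) \le \tr\big(\exp(B^{-1}) \exp(\pm G)\big)$, which bypasses the fact that $B^{-1}$ and $G$ need not commute; (ii) the Loewner bounds $\exp(G) \preceq I + (1+\eta) G$ and $\exp(-G) \preceq I - (1-\eta) G$, obtained from $\exp(\pm G) - (I \pm G) = \sum_{k\ge 2} (\pm G)^k / k!$ together with $G^k \preceq \norm{G}^{k-1} G$ and summing the resulting geometric-type series; and (iii) the observation that $M \preceq M'$ implies $\tr(\exp(B^{-1}) M) \le \tr(\exp(B^{-1}) M')$ since $\exp(B^{-1}) \succeq 0$.

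Finally I would combine. For $\Phi_u(A+\Delta) = \tr\exp\big((B-\Delta)^{-1}\big)$, take $G = (B-\Delta)^{-1} - B^{-1}$; using $G \preceq \tfrac{1}{1-\eps} B^{-1}\Delta B^{-1}$ and $\tr(\exp(B^{-1}) B^{-1}\Delta B^{-1}) = \tr(\exp(B^{-1}) B^{-2} \Delta) = C_- \bullet \Delta$ gives $\Phi_u(A+\Delta) \le \Phi_u(A) + \tfrac{1+\eta}{1-\eps}\, C_- \bullet \Delta$. For $\Phi_u(A-\Delta) = \tr\exp\big((B+\Delta)^{-1}\big)$, take $G = B^{-1} - (B+\Delta)^{-1} \succeq (1-\eps) B^{-1}\Delta B^{-1}$, giving $\Phi_u(A-\Delta) \le \Phi_u(A) - (1-\eta)(1-\eps)\, C_- \bullet \Delta$. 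It then remains to check $\tfrac{1+\eta}{1-\eps} \le 1 + 2\eps$ and $(1-\eta)(1-\eps) \ge 1 - 2\eps$ for $\eps \le 1/10$, which amounts to $\eta \le \eps(1-2\eps)$ and follows from the explicit estimate $\eta \le \tfrac{\norm{G}}{2} e^{\norm{G}} \le 0.7\,\eps$; this is the only place the constant $1/10$ in the hypothesis is used. I expect steps (ii)--(iii) --- pinning the constant $\eta$ in the one-sided trace--exp expansion down tightly enough to fit inside the $2\eps$ budget --- to be the main technical point; the rest is linear-algebra bookkeeping.
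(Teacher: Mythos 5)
Your proof is correct. Note that the paper itself gives no proof of this lemma --- it is imported verbatim from \cite{LeeS17} --- so there is no in-paper argument to compare against; what you have written is a complete, self-contained derivation of the cited result. The two load-bearing steps both check out: (i) the Neumann-series sandwich $B^{-1}\Delta B^{-1} \preceq (B-\Delta)^{-1}-B^{-1} \preceq \tfrac{1}{1-\eps}B^{-1}\Delta B^{-1}$ (and its $(B+\Delta)^{-1}$ counterpart) is valid because $\Delta \preceq \eps B^2 \preceq \eps B$ forces $\norm{E} \le \eps$, where the middle inequality is exactly where the hypothesis $u-\ell\le 1$ enters via $B \prec I$; and (ii) the Golden--Thompson step combined with $\exp(\pm G) \preceq I \pm G + \tfrac{\norm{G}}{2}e^{\norm{G}}G$ and the monotonicity of $M \mapsto \tr(\exp(B^{-1})M)$ gives $\eta \le 0.7\eps$, which fits inside the required budget $\eta \le \eps(1-2\eps) \ge 0.8\eps$ for $\eps \le 1/10$ (and the weaker requirement $\eta \le \eps/(1-\eps)$ for the $A-\Delta$ direction). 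The symmetry reduction $\Phi_\ell(A\pm\Delta) = \Phi_{u'}(A'\mp\Delta)$ correctly transports both $\Phi_u$ estimates to the $\Phi_\ell$ claims, so all four inequalities follow.
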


Recall that for notational convenience, we write 
\begin{alignat*}{3}
C_+ & = -\nabla \Phi_\ell (A) &&= \exp \left((A - \ell I)^{-1}\right) (A - \ell I)^{-2}, \\
C_- & = \nabla \Phi_u (A) &&= \exp \left((u I - A)^{-1}\right) (u I - A)^{-2}.
\end{alignat*}

When $\ell I \prec A \prec u I$, both $C_+$ and $C_-$ are PSD matrices.
Recall that $\rho = (\lambda_{\min}\{uI-A, A-\ell I\} )^2$.
We are interested in the following packing SDP:
\begin{lp*}
\maxi {(C_+ - C_-) \bullet X} \tag{\ref{eqn:sdp-oracle}}
\st \con{X \preceq \eps \rho I, \quad X = \sum_{i=1}^m x_i v_i v_i^\top \text{ (which implies $0 \preceq X$)}.}
\end{lp*}

The constraint $X \preceq \eps \rho I$ implies that $X \preceq \eps(uI-A)^2$ and $X \preceq \eps(A-\ell I)^2$.
Thus, by Lemma~\ref{lem:potential-FO}, when $\eps \le 1/10$, the first-order approximation of the potential function is accurate:
\[
\Phi_{u,\ell}(X) - (C_+ - C_-) \bullet X \le \Phi_{u,\ell}(A+X) \le \Phi_{u,\ell}(A) - ((1-2\eps) C_+ - (1+2\eps)C_- ) \bullet X.
\]
Let $C = C_+ - C_-$. The SDP in \eqref{eqn:sdp-oracle} is naturally trying to find an $X$ to maximize the $C \bullet X$, while making sure $C \bullet X$ is a good approximation to the reduction of the potential function.

Ideally, we would like to have $X = \eps \rho I$ so that $A$ grows equally in each dimension, and the potential function stays the same:
\[
\Phi_{u+\eps\rho,\ell+\eps\rho}(A+\eps\rho I) = \Phi_{u,\ell}(A).
\]
When $X = \eps \rho I$, the objective value of the SDP is
\[
(C_+ - C_-) \bullet X = \eps \rho (\tr(C_+) - \tr(C_-)).
\]
While in general $I$ may not be in the span of the rank-one matrices, we will show in Lemma~\ref{lem:sdp-computation} that we can compute an $X$ with
\[
(C_+ - C_-) \bullet X \ge \frac{\eps \rho}{2} \left((1-\beta-2\eps)\tr(C_+) - (1+\beta+2\eps)\tr(C_-)\right).
\]
We first prove Lemma~\ref{lem:phi-no-increase}, which states that the potential function does not increase if $\Delta_j$ in each iteration satisfies Lemma~\ref{lem:sdp-computation}.

\begin{proof}[Proof of Lemma~\ref{lem:phi-no-increase}]
We want to show $\Phi_{u_{j+1},\ell_{j+1}}(A_{j+1}) \le \Phi_{u_{j},\ell_{j}}(A_{j})$, where $A_{j+1} = A_j + \Delta_j$.

When we increase the weights $\{w_i\}$ and expand $A$, the lower barrier potential $\Phi_\ell$ decreases (since $A$ gets farther away from $\ell I$) and the upper barrier $\Phi_u$ increases (since $A$ gets closer to $u I$).
When we increase the barrier values $u$ and $\ell$, the opposite happens: $\Phi_\ell$ increases and $\Phi_u$ decreases.
Intuitively, the proof works by carefully increasing $u$ and $\ell$ to cancel out the change due to adding $\Delta_j$, while making sure both $u$ and $\ell$ increase at roughly the same rate.

Recall that $C_+ = -\nabla \Phi_{\ell_j} (A_j)$ and $C_- = \nabla \Phi_{u_j} (A_j)$.
Formally, we have
\begin{align}
& \Phi_{u_{j+1},\ell_{j+1}}(A_{j+1}) \notag \\
& = \Phi_{u_j}(A_{j+1}-\delta_{u,j} I) + \Phi_{\ell_j}(A_{j+1}-\delta_{\ell,j} I) \notag \\
& \le \Phi_{u_j,\ell_j}(A_{j+1}) - (1-2\eps) \nabla \Phi_{u_j}(A_{j+1}) \bullet (\delta_{u,j} I) - (1+2\eps) \nabla \Phi_{\ell_j}(A_{j+1}) \bullet (\delta_{\ell,j} I) \tag{Lemma~\ref{lem:potential-FO}}\\
& \le \Phi_{u_j,\ell_j}(A_{j+1}) - (1-2\eps) \nabla \Phi_{u_j}(A_{j}) \bullet (\delta_{u,j} I) - (1+2\eps) \nabla \Phi_{\ell_j}(A_{j}) \bullet (\delta_{\ell,j} I) \label{eqn:nabla-Aj1} \\
& = \Phi_{u_j,\ell_j}(A_{j+1}) - (1-2\eps) \delta_{u,j} \tr(C_-) + (1+2\eps) \delta_{\ell,j} \tr(C_+) \label{eqn:phi-change-1}.
\end{align}
Our choice of $\delta_{u,j}, \delta_{\ell,j}$ satisfies that $\delta_{u,j}, \delta_{\ell,j} \le \eps \rho$ when $\eps, \beta \le 1/10$, which allows us to apply Lemma~\ref{lem:potential-FO}.
Step~\eqref{eqn:nabla-Aj1} uses $A_{j} \preceq A_{j+1}$ and the fact that, for any $\ell I \prec A_1 \preceq A_2 \prec u I$, we have $0 \preceq \nabla \Phi_u (A_1) \preceq \nabla \Phi_u (A_2)$ and $0 \preceq -\nabla \Phi_\ell (A_2) \preceq -\nabla \Phi_\ell (A_1)$.

We continue to bound the change of the potential function when we set $A_{j+1} = A_j + \Delta_j$ for any $\Delta_j$ that satisfies Lemma~\ref{lem:sdp-computation}.
\begin{align}
& \Phi_{u_j,\ell_j}(A_{j+1})
= \Phi_{u_j,\ell_j}(A_{j} + \Delta_j) \notag \\
& \le \Phi_{u_j, \ell_j}(A_j) + (1+2\eps) C_- \bullet \Delta_j - (1-2\eps) C_+ \bullet \Delta_j \tag{Lemma~\ref{lem:potential-FO}} \\
& = \Phi_{u_j, \ell_j}(A_j) - (C_+ - C_-) \bullet \Delta_j + 2\eps (C_+ + C_-) \bullet \Delta_j \notag \\
& = \Phi_{u_j, \ell_j}(A_j) - (C_+ - C_-) \bullet \Delta_j + 2\eps^2\rho \tr(C_+ + C_-) \tag{$\Delta_j \le \eps\rho I$} \notag \\
& \le \Phi_{u_j, \ell_j}(A_j) - \frac{\eps\rho}{2}\left[(1-\beta-\eps)\tr(C_+) - (1+\beta+\eps)\tr(C_+)\right] + 2\eps^2 \rho\tr(C_+ + C_-) \tag{Lemma~\ref{lem:sdp-computation}} \\
& = \Phi_{u_j, \ell_j}(A_j) + \frac{\eps\rho(1+\beta+5\eps)}{2} \tr(C_-) - \frac{\eps\rho(1-\beta-5\eps)}{2} \tr(C^+) \label{eqn:phi-change-2}.
\end{align}
We conclude the proof by comparing lines \eqref{eqn:phi-change-1} and \eqref{eqn:phi-change-2}, and setting $\delta_{u,j} = \frac{\eps\rho(1+\beta+5\eps)}{2(1-2\eps)}$ and $\delta_{\ell,j} = \frac{\eps\rho(1-\beta-5\eps)}{2(1+2\eps)}$.
The trace terms cancel out and we get $\Phi_{u_{j+1},\ell_{j+1}}(A_{j+1}) \le \Phi_{u_{j},\ell_{j}}(A_{j})$ as needed.
\end{proof}

We remark that the best we can hope for is to increase the upper and lower barriers at a rate of $1+\beta$ vs. $1-\beta$ (which is the case as $\eps \to 0$), because the ground-truth is a set of weights $w_i$ with $(1-\beta) I \preceq \sum_i w_i v_i v_i^\top \preceq (1+\beta) I$. Our algorithm only achieves a ratio of $1+O(\beta+\eps)$ vs. $1-O(\beta+\eps)$ for several reasons: (1) the error in the first-order approximation of the potential function, (2) we solve the SDP approximately, and (3) we use Taylor expansion and Johnson-Lindenstrauss to speed up the computation.

We break Lemma~\ref{lem:sdp-sol} into two lemmas and prove them separately.
Lemma~\ref{lem:sdp-sol-exist} states that SDP~\eqref{eqn:sdp-oracle} has a good solution. Lemma~\ref{lem:sdp-computation} shows that we can compute $\rho$ and solve this packing SDP~\eqref{eqn:sdp-oracle} approximately in nearly-linear time.

\begin{lemma}
\label{lem:sdp-sol-exist}
Let $A$ be a symmetric matrix. Let $\ell < u$ be barrier values such that $\ell I \prec A \prec u I$.
The SDP in \eqref{eqn:sdp-oracle} has a solution $X$ with
\[
C \bullet X \ge \eps \rho \left(\frac{1-\beta}{1+\beta}\tr(C_+) - \tr(C_-)\right).
\]
\end{lemma}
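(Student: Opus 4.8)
The plan is to exhibit an explicit feasible $X$ for the SDP \eqref{eqn:sdp-oracle} built from the hidden weights $w_i$, and then lower-bound $C\bullet X$. Recall we are promised the existence of nonnegative $w_i$ with $(1-\beta)I\preceq \sum_i w_i v_i v_i^\top \preceq (1+\beta)I$. The natural candidate is a rescaled version of this sum: set $X = \frac{\eps\rho}{1+\beta}\sum_i w_i v_i v_i^\top$, i.e. $x_i = \frac{\eps\rho}{1+\beta} w_i \ge 0$. This is of the required form $X = \sum_i x_i v_i v_i^\top$, and feasibility $X \preceq \eps\rho I$ follows immediately from the upper bound $\sum_i w_i v_i v_i^\top \preceq (1+\beta)I$. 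So the only work is to estimate the objective.

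For the objective, write $C\bullet X = C_+\bullet X - C_-\bullet X$ and bound the two terms separately, using that $C_+$ and $C_-$ are PSD (which holds since $\ell I \prec A \prec u I$). For the $C_+$ term, using the lower bound $X = \frac{\eps\rho}{1+\beta}\sum_i w_i v_i v_i^\top \succeq \frac{\eps\rho(1-\beta)}{1+\beta} I$ (as a matrix inequality on the image space; here the implicit identity is $I_{\mathrm{im}(G)}$, consistent with the rest of the paper), and $C_+\succeq 0$, we get $C_+\bullet X \ge \frac{\eps\rho(1-\beta)}{1+\beta}\tr(C_+)$. For the $C_-$ term, using the upper bound $X\preceq \eps\rho I$ and $C_-\succeq 0$, we get $C_-\bullet X \le \eps\rho\,\tr(C_-)$. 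Subtracting gives exactly
\[
C\bullet X \ge \eps\rho\left(\frac{1-\beta}{1+\beta}\tr(C_+) - \tr(C_-)\right),
\]
which is the claimed bound. One small point to be careful about: the inequality $A\bullet B \ge \lambda_{\min}(A)\,\tr(B)$ for PSD $B$ needs $A\succeq \lambda_{\min}(A) I$ on the relevant subspace; since all $v_i = L^{-1/2}b_i$ live in $\mathrm{im}(L)$ and $C_+$ is a function of $A$ which also acts on that subspace, everything is consistent and the trace identities go through on $\mathrm{im}(G)$.

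I do not expect a genuine obstacle here — this lemma is the ``easy direction'' (pure existence, no algorithmic/JL/Taylor approximation, no bookkeeping of $\eps$ errors). The only mild subtlety worth a sentence in the writeup is the image-space issue above, and making sure the direction of each matrix inequality is used correctly: the $C_+$ bound needs the \emph{lower} bound on $\sum w_i v_i v_i^\top$ (to push $X$ away from $0$), while the $C_-$ bound needs the \emph{upper} bound (to keep $X$ below $\eps\rho I$). After this, Lemma~\ref{lem:sdp-computation} will degrade $\frac{1-\beta}{1+\beta}$ to $\tfrac12(1-\beta-2\eps)$ and $1$ to $\tfrac12(1+\beta+2\eps)$ to account for approximate solving and fast computation, but that is a separate argument.
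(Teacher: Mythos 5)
Your proposal is correct and is essentially identical to the paper's own proof: the paper also takes $X = \frac{\eps\rho}{1+\beta}\sum_i w_i v_iv_i^\top$ (the displayed formula there omits the $\eps$, evidently a typo given the subsequent bound $\frac{1-\beta}{1+\beta}\eps\rho I \preceq X \preceq \eps\rho I$), checks feasibility from the upper promise, and splits $C\bullet X$ into the $C_+$ and $C_-$ terms exactly as you do. Your remark about working on $\mathrm{im}(G)$ is a fair point of care that the paper leaves implicit.
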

\begin{proof}
When $X = \eps \rho I$, the objective value is $C \bullet X = \eps \rho \tr(C)$.
Note that $C$ is not PSD, so when $X \approx \eps \rho I$, we need to split $C$ into the difference of two PSD matrices $C_+$ and $C_-$ to bound the error.

Recall that there exists a set of weights $w_i$ with $(1-\beta) I \preceq \sum_{i=1}^m w_i v_i v_i^\top \preceq (1+\beta)I$.
We look at a solution of this SDP with $X = \frac{\rho}{1+\beta} \sum_{i=1}^m w_i v_i v_i^\top$.
It follows directly that $X$ is feasible since $X$ is a weighted sum of $v_i v_i^\top$ and $X \preceq \eps \rho I$.
For the objective value, since $\frac{1-\beta}{1+\beta} \eps \rho I \preceq X \preceq \eps \rho I$,
\begin{equation*}
(C_+ - C_-) \bullet X
  \ge C_+ \bullet (\frac{1-\beta}{1+\beta} \eps \rho I) - C_- \bullet (\eps \rho I)
  = \eps\rho\left(\frac{1-\beta}{1+\beta} \tr(C_+) - \tr(C_-)\right). \qedhere 
\end{equation*}
\end{proof}

We now provide details about how to implement Algorithm~\ref{alg:ls17} in nearly-linear time.
We remark that similar implementations were shown in~\cite{AllenLO15, LeeS15, LeeS17}.

\begin{lemma}
\label{lem:sdp-computation}
Fix $0 < \beta,\eps \le 1/10$. 
Given an $n \times n$ matrix $A = \sum_{i=1}^m w_i v_i v_i^\top$ represented by a set of weights $\{w_i \ge 0\}_{i=1}^m$, let $\ell < u$ be barrier values such that $u - \ell \le 1$ and $(\ell + g) I \prec A \prec (u - g) I$ for some gap $g = \Omega(\log^{-2} n)$.

We can compute $\rho$ and weights $\{\tilde w_i\}_{i=1}^m$ in $\tilde O(m / \eps^5)$ time, such that with high probability,
\begin{enumerate}
\item[(1)] $\rho \in [1-\eps, 1] \cdot (\lambda_{\min}\{u I - A, A - \ell I\})^2$;
\item[(2)] $X = \sum_{i=1}^m \tilde x_i v_i v_i^\top$ satisfies $X \preceq \eps\rho I$ and
\[
(C_+ - C_-) \bullet X \ge \frac{\eps \rho}{2} \left((1-\beta-\eps)\tr(C_+) - (1+\beta+\eps)\tr(C_-)\right).
\]
\end{enumerate}
\end{lemma}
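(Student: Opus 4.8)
The plan is to route everything through two sparse Laplacians: the fixed $L$ and the Laplacian $L_A := \sum_{i=1}^m w_i b_i b_i^\top$ of the current weighted graph, which we keep implicitly as the list of weights $\{w_i\}$; we never form any $v_i = L^{-1/2} b_i$. The key identity is that $A = L^{-1/2} L_A L^{-1/2}$ is similar to $L^{-1} L_A$, so for any polynomial $f$,
\[
v_i^\top f(A) v_i \;=\; b_i^\top f(L^{-1} L_A)\, L^{-1} b_i, \qquad \tr f(A) \;=\; \tr f(L^{-1} L_A),
\]
and the right-hand sides are evaluated by alternating sparse multiplications by $L_A$ with nearly-linear-time (approximate) solves against $L$ (and, when an inverse power of $A$ is needed, against $L_A$, itself a Laplacian; and applications of $L^{-1/2}$, which are nearly-linear-time — explicitly via the known eigendecomposition of the complete bipartite $L$ in our application, or by standard quadrature of $\frac{2}{\pi}\int_0^\infty (L+t^2 I)^{-1}\,dt$ in general). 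Consequently we can apply any fixed low-degree polynomial of $A$, and via Johnson--Lindenstrauss estimate $v_i^\top g(A) v_i$ for \emph{all} $i$ simultaneously, in $\tilde O(m/\epsilon^{O(1)})$ time.

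\emph{Part (1): computing $\rho$.} Since $\lambda_{\min}(uI-A) = u - \lambda_{\max}(L^{-1}L_A)$ and $\lambda_{\min}(A-\ell I) = \lambda_{\min}(L^{-1}L_A) - \ell$ on $\mathrm{im}(L)$, I would run the power method (or Lanczos) on $L^{-1}L_A$ and on $L_A^{-1}L$, using the solvers above for the matrix--vector products. The gap hypothesis $(\ell+g)I \prec A \prec (u-g)I$ with $g = \Omega(\log^{-2} n)$ forces $\lambda_{\min}\{uI-A,\,A-\ell I\} = \Omega(\log^{-2} n)$, so a relative $(1\pm\epsilon)$ estimate is obtained w.h.p.\ after $\tilde O(\epsilon^{-O(1)})$ iterations; I then set $\rho$ a hair below the square of the smaller estimate, so that $\rho \in [1-\epsilon,1]\cdot(\lambda_{\min}\{uI-A,A-\ell I\})^2$ and in particular $\rho I \preceq (uI-A)^2$ and $\rho I \preceq (A-\ell I)^2$, which is what feasibility of \eqref{eqn:sdp-oracle} (via Lemma~\ref{lem:potential-FO}) requires.

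\emph{Part (2): the SDP oracle.} Replace $\exp((A-\ell I)^{-1})$ and $\exp((uI-A)^{-1})$ by truncated Chebyshev polynomials in $A$: because $g I \prec A-\ell I \preceq I$, the function $x \mapsto \exp((x-\ell)^{-1})(x-\ell)^{-2}$ is analytic on the spectral range of $A$ and is approximated to the needed accuracy by a polynomial of degree $\tilde O(\epsilon^{-O(1)})$ (and likewise for $C_-$); call the PSD surrogates $\widehat C_\pm$. A single Johnson--Lindenstrauss sketch --- form $q_\pm(A) G$ for a polynomial $q_\pm \approx \widehat C_\pm^{1/2}$ and a Gaussian $G$ with $\tilde O(\epsilon^{-2})$ columns, then read off norms --- produces w.h.p.\ in $\tilde O(m/\epsilon^{O(1)})$ time values $a_i = (1\pm\epsilon)\,v_i^\top C_+ v_i$, $b_i = (1\pm\epsilon)\,v_i^\top C_- v_i$ for all $i$ together with $T_\pm = (1\pm\epsilon)\tr C_\pm$. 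Working in the transformed coordinates, I then solve the packing SDP
\[
\max_{x\ge 0}\ \textstyle\sum_i x_i a_i \quad\text{s.t.}\quad \textstyle\sum_i x_i v_i v_i^\top \preceq \epsilon\rho\, I, \quad \textstyle\sum_i x_i b_i \le \tfrac{(1+\beta)\epsilon\rho}{2}\, T_-,
\]
whose first constraint is equivalent to ``the $x$-weighted graph is spectrally dominated by $\epsilon\rho\,L$'', so it is a rank-one, Laplacian-structured positive SDP with one extra scalar packing constraint and can be solved to a $(1\pm\epsilon)$-multiplicative factor in nearly-linear time by the solvers of~\citep{AllenLO15, LeeS17} (one checks the width is polylogarithmic, using that the constraint matrices are rank-one and dominated by $L$). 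To certify the objective value, invoke Lemma~\ref{lem:sdp-sol-exist}: with $X^\star = \tfrac{\epsilon\rho}{1+\beta}\sum_i w_i v_i v_i^\top$, the matrix $\tfrac12 X^\star$ is feasible --- it has $\tfrac12 X^\star \preceq \tfrac{\epsilon\rho}{2}I$ and $C_-\bullet \tfrac12 X^\star \le \tfrac{\epsilon\rho}{2}\tr C_-$, hence (up to the $O(\epsilon)$ from $T_-\approx \tr C_-$) satisfies the budget --- and it achieves $C_+\bullet \tfrac12 X^\star \ge \tfrac{(1-\beta)\epsilon\rho}{2(1+\beta)}\tr C_+$. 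Thus the solver returns weights $\{\tilde x_i\}$ and $X = \sum_i \tilde x_i v_i v_i^\top$ with $C_+\bullet X \ge (1-O(\epsilon))\tfrac{(1-\beta)\epsilon\rho}{2(1+\beta)}\tr C_+$ and, directly from the budget, $C_-\bullet X \le (1+O(\epsilon))\tfrac{(1+\beta)\epsilon\rho}{2}\tr C_-$; a final rescale by $1/(1+\epsilon)$ makes $X \preceq \epsilon\rho I$ exactly, and folding the $O(\epsilon)$ (and $O(\beta)$) losses into the stated slack gives $C\bullet X \ge \tfrac{\epsilon\rho}{2}\bigl((1-\beta-\epsilon)\tr C_+ - (1+\beta+\epsilon)\tr C_-\bigr)$.

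\emph{Where the difficulty lies.} The crux is that the per-edge ``profit'' $v_i^\top C v_i = v_i^\top C_+ v_i - v_i^\top C_- v_i$ is a difference of two nonnegative numbers that can nearly cancel, so the only accuracy the polynomial truncation and the JL sketch can promise --- an additive error of order $\epsilon(v_i^\top C_+ v_i + v_i^\top C_- v_i)$ --- need not be small relative to $v_i^\top C v_i$ itself. This is exactly why I cannot hand the indefinite objective $C\bullet X$ to a black-box positive-SDP solver, and must instead maximize the $C_+$-mass while separately capping the $C_-$-mass through the scalar budget constraint; it is also the source of the factor $\tfrac12$ in the guarantee. The remaining work is error bookkeeping across the four approximations --- the power method for $\rho$, the polynomial truncation of $\exp$, the Johnson--Lindenstrauss sketch, and the approximate positive-SDP solve --- each driven to $\mathrm{poly}(\epsilon)$; the hypothesis $g = \Omega(\log^{-2} n)$ is precisely what keeps the polynomial degrees and iteration counts polylogarithmic so that everything fits in $\tilde O(m/\epsilon^5)$ time.
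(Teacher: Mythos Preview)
Your high-level machinery (reduce everything to matvecs with $L^{-1}\hat L$, polynomial surrogates for the barrier matrices, JL sketching to read off all $v_i^\top C_\pm v_i$ at once) matches the paper, but both parts take a different concrete route, and your stated reason for the Part~(2) detour is a misconception.

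For Part~(1), the paper does not run the power method on $L^{-1}L_A$ or $L_A^{-1}L$. It builds a degree-$\tilde O(g^{-1}\log(1/\eps))$ polynomial $p(A)\approx_{\eps/4}(uI-A)^{-2}$ by Taylor truncation and then estimates $\lambda_{\max}(p(A))$ via $(\tr p(A)^{2k})^{1/2k}$ with $k=O(\log n/\eps)$, computing the trace by JL-sketching the columns of $p(L^{-1}\hat L)^k$. This gives a \emph{relative} $(1\pm\eps)$ estimate of $\lambda_{\min}(uI-A)^2$ directly, avoiding the subtlety in your route that a relative error in $\lambda_{\max}(A)$ is not a relative error in $u-\lambda_{\max}(A)$, and never touching $L_A^{-1}$ (which is undefined when weights vanish, e.g.\ at $A_0=0$).

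For Part~(2), your claim that you ``cannot hand the indefinite objective $C\bullet X$ to a black-box positive-SDP solver'' is precisely what the paper does. It forms $\hat c_i=\hat c_i^+-\hat c_i^-$ with $\hat c_i^\pm=(1\pm\eps/2)\,v_i^\top C_\pm v_i$ and feeds this signed vector to the Allen-Zhu--Liao--Orecchia packing solver, asking only for a \emph{constant}-factor approximation: any ratio $\ge(1+\beta)/2$ suffices, and the solver's $3/5$ (after $O(\log n)$ repeats) beats that since $\beta\le 1/10$. The additive error you worry about, $\frac{\eps}{2}(c_i^++c_i^-)$, aggregates over any feasible $X\preceq\eps\rho I$ to at most $\frac{\eps^2\rho}{2}\tr(C_++C_-)$, which is exactly the $\eps$ appearing in $(1-\beta-\eps)$ and $(1+\beta+\eps)$. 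Your budget-constraint reformulation also works, but it is more elaborate, it asks the solver to handle a mixed Laplacian-plus-scalar packing constraint (not what \citep{AllenLO15,LeeS17} treat; you would want \citep{AllenLO16}), and because your feasible witness is only $\tfrac12 X^\star$ you get $\tr C_+$-coefficient $\frac{1-\beta}{1+\beta}\ge 1-2\beta$ rather than $1-\beta$---so ``folding the $O(\beta)$ losses into the stated slack'' actually means you prove a slightly weaker lemma and would need to adjust $\delta_{u,j},\delta_{\ell,j}$ downstream.
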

\begin{proof}
Recall that $v_i = L^{-1/2} b_i$ for a fixed Laplacian $L$.
Let $\hat L = \sum_i w_i b_i b_i^\top$ be the Laplacian specified by the weights of $A$.
In this proof, we will frequently use the following fact,
\[
A = \sum_{i=1}^m w_i v_i v_i^\top = L^{-1/2} \sum_{i=1}^m w_i b_i b_i^\top L^{-1/2} = L^{-1/2} \hat L L^{-1/2}.
\]

\begin{enumerate}
\item[(1)] We show how to compute $\rho \in [1-\eps, 1] \cdot \lambda_{\min}(u I - A)^2$. The approach is similar for $A - \ell I$.
It is sufficient to compute $\rho \approx_{\eps/2} \lambda_{\min}(u I - A)^2$.~\footnote{We write $a \approx_{\eps} b$ for $\exp(-\eps) a \le b \le \exp(\eps) a$. This extends naturally to PSD matrices, where $A \approx_{\eps} B$ means $\exp(-\eps)A \preceq B \preceq \exp(\eps) A$. It is sufficient to approximate $\rho$ up to a factor of $1\pm \exp(\eps/2)$ because $\exp(\eps) \le \frac{1}{1-\eps}$.}
By Lemma~\ref{lem:mat-taylor}, there exists a degree $\tilde O\left(\frac{\log(1/\eps)}{g}\right) = \tilde O(\log^2 n \log(1/\eps))$ polynomial $p(A)$ such that $p(A) \approx_{\eps/4} (u I - A)^{-2}$.
Since $\lambda_{\max} (p(A)) \approx_{\eps/4} \lambda_{\max} ((u I - A)^{-2}) = \left(\lambda_{\min} (u I - A)^2\right)^{-1}$, it is sufficient to approximate $\lambda_{\max} (p(A))$.

Observe that for any $n \times n$ PSD matrix $M$,
\[
\lambda_{\max}(M) \le \left(\tr\left(M^{2k}\right)\right)^{1/2k} \le n^{1/2k} \lambda_{\max} (M).
\]
In particular, for $k = O(\log n / \eps)$ we can get $\left(\tr(p(A)^{2k})\right)^{1/2k} \approx_{\eps/4} \lambda_{\max}(p(A))$,
and thus, we can return $\rho = \left(\tr(p(A)^{2k})\right)^{-1/2k} \approx_{\eps/2} \lambda_{\min}(u I - A)^2$.

It remains to show that we can approximate
\[
\tr(p(A)^{2k}) = \tr(p(L^{-1/2} \hat L L^{-1/2})^{2k}) = \tr(p(L^{-1} \hat L)^{2k}).
\]
Let $M = p(L^{-1} \hat L)^{k}$ so $\tr(p(A)^{2k}) = \tr(M^2)$.
We approximate each diagonal entry of $M^2$ by writing it as $\left(M^2\right)_{i,i} = \chi_i^\top M M \chi_i = \normtwo{M \chi_i}^2$, where $\chi_i$ denote the $i$-th standard basis vector.
By the Johnson-Lindenstrauss lemma, we can generate a random $O(\log n/\eps^2) \times n$ matrix $Q$, so that with high probability, for all $1 \le i \le n$,
\[
\normtwo{M \chi_i}^2 = \normtwo{Q M \chi_i}^2.
\]
Note that $Q M \chi_i$ is the $i$-th column of $QM$.
We can compute (approximately) $Q M = \left(M Q^\top\right)^\top$ by multiplying each column of $Q^\top$ through $M$.

This can be done in time $\tilde O(n/\eps^5)$, because $Q^\top$ has $O(\log n/\eps^2)$ columns, and matrix-vector multiplication with $M = p(L^{-1} \hat L)^{k}$ can be implemented using $k \cdot \mbox{deg}(p) = \tilde O(\log^3 n / \eps)$ matrix-vector multiplications with $L^{-1} \hat L$. We will show that matrix-vector multiplication with $L^{-1} \hat L$ can be done in time $\tilde O(n / \eps^2)$, so the overall running time is $\tilde O(n / \eps^5)$.

Recall that the number of edges in $\hat L$ is at most $m$.
Let $m'$ denote the number of edges in $L$.
W.l.o.g., we can assume both $m, m' = O(n/\eps^2)$ by sparsifying the input graphs first.
Therefore, one matrix-vector multiplication with $L^{-1} \hat L$ can be done in time $\tilde O(n/\eps^2)$, by first multiplying the vector through $\hat L$, and then solving a linear system in $L$ in $\tilde O(m' \log (1/\eps))$ time~\citep{SpielmanT14, KoutisMP11, KelnerOSZ13, PengS14, ChengCLPT15, CohenKMPPRX14, KyngS16}.

\item[(2)]
Since we represent the variable $X$ of the SDP by a set of weights $\{x_i\}_{i=1}^m$, the objective function is of the form $C \bullet X = C \bullet \left(\sum_{i=1}^m x_i v_i v_i^\top\right) = \sum_{i=1}^m x_i (v_i^\top C v_i)$.
Let $c \in \R^m$ be a vector with $c_i = v_i^\top C v_i$.
The SDP in \eqref{eqn:sdp-oracle} can be rewritten as
\begin{lp*}
\maxi {c^\top x}
\st \con{\sum_{i=1}^m x_i (v_i v_i^\top) \preceq \eps \rho I.}
\end{lp*}

This is a packing SDP that can be solved in polylogarithmic iterations (see, e.g., \citep{JainY11, AllenLO16, PengTZ16}).
Formally, we use Lemma~\ref{lem:alo16} from \cite{AllenLO16}.
Because Lemma~\ref{lem:alo16} returns a solution $X$ with $\expect{}{C \bullet X} \ge \frac{4}{5} \mathrm{OPT}$, it must return a $\frac{3}{5}$-approximation with probability at least $1/2$.
Since $\frac{3}{5} > \frac{1+\beta}{2}$, we can invoke Lemma~\ref{lem:alo16} $O(\log n)$ times so that we get $\frac{1+\beta}{2}$-approximation with high probability. We assume this event happens for the rest of the proof.

Let $c_i^+ = v_i^\top C_+ v_i$ and $c_i^- = v_i^\top C_- v_i$.
If we can approximate $c^+$ and $c^-$ by a (multiplicative) factor of $1 \pm \frac{\eps}{2}$, we have
\begin{align*}
(c^+ - c^-)^\top x
& \ge \frac{1+\beta}{2} \mathrm{OPT} - \frac{\eps}{2} (c^+ + c^-)^\top x \tag{Lemma~\ref{lem:alo16}} \\
& \ge \frac{1+\beta}{2}  \mathrm{OPT} - \frac{\eps^2 \rho}{2} \tr(C^+ + C^-) \tag{$X \preceq \eps \rho I$} \\
& \ge \frac{1+\beta}{2} \eps \rho \left(\frac{1-\beta}{1+\beta}\tr(C_+) - \tr(C_-)\right) - \frac{\eps^2 \rho}{2} \tr(C^+ + C^-) \tag{Lemma~\ref{lem:sdp-sol-exist}}. \\
& = \frac{\eps \rho}{2} \left( (1-\beta-\eps) \tr(C_+) - (1+\beta+\eps) \tr(C_-) \right).
\end{align*}

Finally, 
we will show how to approximate $c^-_i$ by a factor of $1 \pm \frac{\eps}{2}$ for all $1 \le i \le m$ in time $\tilde O(m / \eps^{O(1)})$.
The algorithms for approximating $c^+_i$ and implementing the oracle required by Lemma~\ref{lem:alo16} follow from a similar approach.\footnote{
We remark that the problem of approximating these quantities is akin to that of approximating (relative) effective resistances~\citep{SpielmanS11, AllenLO15, LeeS15}, and a nearly-linear time algorithm for computing the same quantities was shown in~\citep{LeeS17}.}

Recall that $C_- = \exp((uI - A)^{-1}) (uI-A)^{-2}$, where $A = L^{-1/2} \hat L L^{-1/2}$.
By Lemma~\ref{lem:mat-taylor} the assumption that $g = \Omega(\log^{-2} n)$, there exists a degree $\tilde O\left(\frac{\log(1/\eps)}{g^2}\right) = \tilde O(\log^4 n \log(1/\eps))$ polynomial $q(A)$ such that
\[
q(A) \approx_{\eps/6} \exp\left(\frac{1}{2}(uI - A)^{-1}\right) (uI-A)^{-1}.
\]
Because both sides are matrix polynomials of $A$, we can diagonalize them simultaneously so that the approximation only happens to the eigenvalues. Therefore,
\[
\left(q(A)\right)^2 \approx_{\eps/3} \exp((uI - A)^{-1}) (uI-A)^{-2},
\]
which implies $v_i^\top q(A)^2 v_i \in [1\pm \frac{\eps}{2}] v_i^\top C_- v_i$, since $\exp(\eps/3) \le 1+\eps/2$ when $\eps \le \frac{1}{10}$.

Recall that $m$, $m'$ denotes the number of edges in $\hat L$ and $L$.
Recall that $L = B^\top B$ where $B \in \R^{m' \times n}$ is the edge-vertex incident matrix of $L$.
Fix some $1 \le i \le m$.
Let $v_i = L^{-1/2} b_i$ where $b_i = \chi_u - \chi_{u'}$ for the $i$-th edge $(u, u')$.

For any $1 \le i \le m$, we have
\begin{align*}
v_i^\top C_- v_i
& \approx_{\eps/3} \normtwo{q(A) v_i}^2 \\
& = \normtwo{q(L^{-1/2} \hat L L^{-1/2}) L^{-1/2} b_i}^2 \\
& = \normtwo{L^{1/2} q(L^{-1} \hat L) L^{-1} b_i}^2 \\
& = \normtwo{B q(L^{-1} \hat L) L^{-1} b_i}^2 \\
& = \normtwo{B q(L^{-1} \hat L) L^{-1} (\chi_u - \chi_{u'})}^2.
\end{align*}
So the quantities $\{c^-_i\}_{i=1}^m$ are just the squared distances between the $m$-dimensional points $\{B q(L^{-1} \hat L) L^{-1} \chi_u\}_{u\in V}$.
We invoke the Johnson-Lindenstrauss lemma and generate a random $O(\log n/\eps^2) \times m$ matrix $Q$, so that with high probability, for all $1 \le u, u' \le n$,
\[
\normtwo{Bq(L^{-1} \hat L)L^{-1}(\chi_u - \chi_{u'})} \approx_{\eps/6} \normtwo{QBq(L^{-1} \hat L)L^{-1}(\chi_u - \chi_{u'})}.
\]
Recall that $A_i$ is the $i$-th row of a matrix $A$.
Let $Z = QBq(L^{-1} \hat L)L^{-1}$ and $Y = QBq(L^{-1} \hat L)$.
Both $Y$ and $Z$ have $O(\log n/\eps^2)$ rows and $n$ columns.
We have $Z^\top = L^{-1} Y^\top$, which allows us to approximate each $(Z_i)^\top = L^{-1} (Y_i)^\top$ by solving a linear system in $L$.
The time it takes to solve $O(\log n/\eps^2)$ linear systems in $L$ is $\tilde O(n/\eps^4)$, because we can assume $m, m' = O(n/\eps^2)$ by sparsifying the input graphs.

We can compute $Y^\top = q(L^{-1} \hat L) B^\top Q^\top$ in $\tilde O(n/\eps^4)$ time, since we can perform matrix-vector multiplication with $q(L^{-1} \hat L)$ in time $\tilde O(m/\eps^2)$, and $B^\top Q^\top$ can be computed in $\tilde O(n/\eps^4)$ time because $B$ has $2m' = O(n/\eps^2)$ non-zeros and $Q$ has $O(\log n/\eps^2)$ rows. 

Finally, because we can implement the oracle required by Lemma~\ref{lem:alo16} in time $\tilde O(n/\eps^4)$, we can compute a $\frac{3}{5}$-approximate solution to SDP \eqref{eqn:sdp-oracle} in time $\tilde O(n/\eps^4)$. Note that the number of iterations in Lemma~\ref{lem:alo16} does not depend on $\eps$ because we only need a $\frac{3}{5}$-approximation.
\qedhere 
\end{enumerate}
\end{proof}

The overall running time of Algorithm~\ref{alg:ls17} is $\tilde O(m / \eps^7)$, because there are at most $O(\log n / \eps^2)$ iterations (shown at the end of Section~\ref{sec:bss}), and each iteration can be implemented to run in time $\tilde O(m / \eps^{5})$ by Lemma~\ref{lem:sdp-computation}.

One of our main contributions is conceptual: we show that the framework of~\cite{BatsonSS12} can be applied to a much broader settings to obtain scalable algorithms.
On a technical level, because there exists a hidden set $S$ whose sum is only \emph{approximately} equal to $I$, the optimal solution to the SDP will be worse, so we need to carefully control the error caused by this, and move the barriers at a slightly different rate.
Our analysis is considerably simpler than that in~\citep{LeeS17}, partly because we do not require the output weights to be sparse;
We also take care of two minor issues with~\citep{LeeS17}: They assumed $\rho$ can be computed exactly for simplicity, and they proved Taylor expansion of $C_-$ can be truncated (where it should be $C_-^{1/2}$ as in Lemma~\ref{lem:mat-taylor}).

\begin{lemma}[\cite{AllenLO16}]
\label{lem:alo16}
Consider the following SDP with $M_i \in \R^{n \times n}$, $M_i \succeq 0$ and $c \in \R^m$:
\begin{lp*}
\maxi{c^\top x}
\st \con{x \ge 0}
    \con{\sum_{i=1}^m x_i M_i \preceq I.}
\end{lp*}
Suppose $c$ is given explicitly and we have access to $M_i$ via an oracle $\OO_{\eta, \delta}$ which on input $x \in \R^m$ outputs a vector $y \in \R^m$ such that
\[
y_i \in (1 \pm \frac{\delta}{2}) \; M_i \bullet \exp \left(\eta \Bigl(\sum_{i} x_i M_i - I \Bigr) \right)
\]
in time $T_{\eta, \delta}$ for any $x \in \R^m$ such that $x \ge 0$ and $\sum_{i=1}^m x_i M_i \preceq 2I$.
Then, we can output an $x$ in time $\tilde O(T_{\eta, \delta} \log^2 (m n) / \delta^3)$ such that
\[
\expect{}{c^\top x} \ge (1-O(\delta)) \cdot \mathrm{OPT} \text{ with } \sum_{i=1}^m x_i M_i \preceq I.
\]
\end{lemma}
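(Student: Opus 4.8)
The plan is to prove Lemma~\ref{lem:alo16} by realizing it as a width-independent \emph{positive} (packing) SDP solver of mirror-descent / multiplicative-weights type; this is precisely the algorithm of \cite{AllenLO16}, so I only sketch how the pieces fit together. First I would soften the hard constraint $\sum_i x_i M_i \preceq I$ into the log-sum-exp potential
\[
f_\eta(x) \;=\; \frac{1}{\eta}\,\log\tr\exp\!\Big(\eta\big(\textstyle\sum_{i} x_i M_i - I\big)\Big),
\]
and choose $\eta = \Theta(\delta^{-1}\log n)$, so that $f_\eta(x)$ approximates $\lambda_{\max}(\sum_i x_i M_i) - 1$ up to an additive $O(\delta)$. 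Consequently any $x \ge 0$ with $f_\eta(x) \le O(\delta)$ obeys $\sum_i x_i M_i \preceq (1+O(\delta))I$, so $\tfrac{1}{1+O(\delta)}x$ is feasible for the original SDP and hence $c^\top x \le (1+O(\delta))\cdot\mathrm{OPT}$; this is the duality half that caps how good an iterate can be and tells us when to stop. The other observation is that $\partial_i f_\eta(x)$ equals $M_i\bullet\exp(\eta(\sum_j x_j M_j - I))$ divided by $\tr\exp(\eta(\sum_j x_j M_j - I))$, i.e.\ the oracle $\OO_{\eta,\delta}$ computes a $(1\pm\tfrac{\delta}{2})$-approximation of the numerator, and the algorithm will be designed so that its iterates always satisfy the oracle precondition $\sum_i x_i M_i \preceq 2I$.

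The algorithm is a multiplicative-weights update on the primal variables: start from a tiny $x^{(0)} \ge 0$ with $\sum_i x^{(0)}_i M_i \prec I$ and $c^\top x^{(0)}$ negligible; at step $t$, query $y^{(t)} = \OO_{\eta,\delta}(x^{(t)})$ and set $x^{(t+1)}_i = x^{(t)}_i(1+\alpha_i)$ with $\alpha_i \asymp \delta\cdot c_i/y^{(t)}_i$ truncated at $O(\delta)$, freezing any coordinate whose further increase would push $\sum_i x_i M_i$ past $2I$ or that is no longer needed. I would then track two quantities: the objective $c^\top x^{(t)}$, which only grows, and the potential $f_\eta(x^{(t)})$, which I claim stays $O(\delta)$. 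The heart of the argument is a \emph{width-independent} bound on how much one step can raise $\tr\exp(\eta(\sum_i x_i M_i - I))$: by Golden--Thompson (or Lieb's inequality) the increase is a first-order term $\eta\sum_i\Delta_i\,M_i\bullet\exp(\cdots)$ plus a second-order term, and because every $M_i \succeq 0$, the step is \emph{multiplicative} ($\Delta_i \le O(\delta)x^{(t)}_i$), and $\sum_i x^{(t)}_i M_i \preceq 2I$, we get $\sum_i \Delta_i M_i \preceq O(\delta) I$, so the second-order term is $O(\delta\eta)$ times the first-order term, with \emph{no} $\norm{M_i}$ dependence. Hence each step raises $f_\eta$ by $O(\delta)\cdot\frac{\sum_i\Delta_i y^{(t)}_i}{\tr\exp(\cdots)}$ while raising $c^\top x$ by $\sum_i\Delta_i c_i = \Theta(1)\cdot\sum_i\Delta_i y^{(t)}_i$ on active coordinates; amortizing over the run, $f_\eta$ cannot exhaust its $O(\delta)$ budget before $c^\top x$ reaches $(1-O(\delta))\mathrm{OPT}$, and this happens within $T = \tilde O(\delta^{-3})$ iterations (the $\delta$-powers come from $\eta$, the step size, and slack to absorb the $(1\pm\tfrac\delta2)$ oracle error). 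I would output $\hat x = \tfrac{1}{1+O(\delta)}x^{(T)}$, which is feasible, and since the oracle is randomized one gets $\expect{}{c^\top\hat x} \ge (1-O(\delta))\mathrm{OPT}$; each iteration is one oracle call ($T_{\eta,\delta}$) plus $\tilde O(m)$ bookkeeping and $\eta$ enters only logarithmically, giving the stated $\tilde O(T_{\eta,\delta}\log^2(mn)/\delta^3)$ time.

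The hard part is precisely the width-independence step: a textbook matrix-multiplicative-weights analysis would incur a factor proportional to the width $\max_i\norm{M_i}$ (equivalently, to $\mathrm{OPT}$), which is unacceptable here because the matrices $M_i = v_i v_i^\top$ appearing in our application are wildly scaled. Eliminating this factor is what forces the \emph{multiplicative} update coupling $\Delta_i$ to the current $x^{(t)}_i$ — so that $\sum_i\Delta_i M_i \preceq O(\delta)\sum_i x^{(t)}_i M_i \preceq O(\delta)I$ — together with the positivity $M_i \succeq 0$; and one must additionally check that the whole coupling survives the $(1\pm\tfrac\delta2)$ multiplicative error in the oracle output, which it does because that error only perturbs which coordinates are active and the sizes of the $\Delta_i$ by further $(1\pm O(\delta))$ factors. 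All of this is carried out in \cite{AllenLO16}, which we invoke as a black box.
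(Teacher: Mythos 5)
This lemma is imported verbatim from \cite{AllenLO16}; the paper offers no proof of it and simply invokes the reference, which is exactly what your proposal does in the end. Your accompanying sketch of the width-independent packing-SDP solver (soft-max potential with $\eta = \Theta(\delta^{-1}\log n)$, multiplicative updates coupled to the current iterate so that $\sum_i \Delta_i M_i \preceq O(\delta) I$ without any dependence on $\max_i \norm{M_i}$) is a faithful high-level account of that algorithm, so the two approaches coincide.
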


\begin{lemma}
\label{lem:mat-taylor}
Let $A$ be a real symmetric matrix. When $(u-1) I \prec A \prec (u-g) I$ for some $0 < g < 1$, we can compute
\begin{enumerate}
\item[(1)] A polynomial $p(A)$ of degree $O\left(\frac{\log(1/(\eps g))}{g}\right)$ such that $p(A) \approx_{\eps} (uI-A)^{-2}$.
\item[(2)] A polynomial $q(A)$ of degree $O\left(\frac{\log(1/(\eps g))}{g^2}\right)$ such that $q(A) \approx_{\eps} \exp\left(\frac{1}{2}(uI - A)^{-1}\right) (uI-A)^{-1}$.
\end{enumerate}
\end{lemma}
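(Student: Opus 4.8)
The plan is to use standard polynomial approximation of analytic functions on a bounded interval, after rescaling so that the spectrum of $A$ lands in a well-conditioned region. Write $B = uI - A$; by assumption $gI \prec B \prec I$, so every eigenvalue $\lambda$ of $B$ satisfies $\lambda \in [g, 1]$. For part~(1) we must approximate the function $h(\lambda) = \lambda^{-2}$ on $[g,1]$; for part~(2) we must approximate $\phi(\lambda) = \exp(\tfrac12 \lambda^{-1}) \lambda^{-1}$ on the same interval. In both cases it suffices to find a real polynomial $\tilde p$ with $\exp(-\eps) h(\lambda) \le \tilde p(\lambda) \le \exp(\eps) h(\lambda)$ for all $\lambda \in [g,1]$, i.e. a multiplicative approximation, since then $p(A) := \tilde p(B)$ (which is a polynomial in $A$ of the same degree) satisfies $p(A) \approx_\eps (uI-A)^{-2}$ by simultaneous diagonalization. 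A multiplicative approximation on $[g,1]$ is implied by an additive approximation of the logarithm, or more directly one can approximate $h$ additively to error $\eps g^2 / 2$ and use $h(\lambda) \ge 1$ to convert; I will phrase it through additive approximation and the lower bound $h(\lambda) \ge 1$, $\phi(\lambda) \ge e^{1/2}$ on $[g,1]$.

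First I would handle part~(1). The cleanest route is the geometric series: for $\lambda \in [g,1]$ write $\lambda = 1 - (1-\lambda)$ with $1-\lambda \in [0, 1-g]$, so $\lambda^{-1} = \sum_{k\ge 0}(1-\lambda)^k$ and $\lambda^{-2} = \sum_{k\ge 0}(k+1)(1-\lambda)^k$. Truncating at degree $d$ leaves a tail bounded by $\sum_{k > d}(k+1)(1-g)^k = O(d (1-g)^d / g^2)$ (summing the geometric-type series). Choosing $d = O\!\big(\tfrac{\log(1/(\eps g))}{g}\big)$ makes this tail at most $\eps g^2/2 \le \tfrac{\eps}{2}\lambda^{-2}$, which gives the claimed multiplicative bound and the claimed degree, since $\log(1/(1-g)) = \Theta(g)$ for $g$ bounded away from $1$. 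For part~(2), I would instead invoke a standard bound on polynomial approximation of analytic functions (Chebyshev truncation / the classical estimate that a function analytic on a neighborhood of $[g,1]$ of "width" $\Omega(g)$ is approximated to additive error $\eps$ by a degree-$O(\tfrac{1}{g}\log(1/\eps))$ polynomial, worsened by the unbounded derivatives of $e^{1/(2\lambda)}$ near $\lambda = g$, which is what produces the $1/g^2$). Concretely, $\phi(\lambda) = \exp(\tfrac{1}{2\lambda})/\lambda$ on $[g,1]$ has $j$-th derivative of size $O((C/g^2)^j j!)$ (each differentiation of $e^{1/(2\lambda)}$ brings down a factor $\Theta(1/\lambda^2) = O(1/g^2)$), so its Taylor or Chebyshev series converges at rate governed by $g^2$, and truncating at degree $O(\tfrac{1}{g^2}\log(1/(\eps g)))$ gives additive error $\le \tfrac{\eps}{2} e^{1/2} \le \tfrac{\eps}{2}\phi(\lambda)$. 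Then $q(A) := \tilde q(uI - A)$ is the desired polynomial in $A$.

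The routine parts are the tail estimates and the derivative bounds; the only mildly delicate point is accounting correctly for why part~(2) costs $1/g^2$ rather than $1/g$ in the degree --- this is exactly the statement that the composite function $e^{1/(2\lambda)}$ has the size of its $j$-th derivative growing like $(1/g^2)^j j!$ rather than $(1/g)^j j!$, because the inner function $1/(2\lambda)$ is itself only "$g$-conditioned" and gets composed with $\exp$. I would make this precise either by a direct Faà di Bruno estimate or, more slickly, by noting $e^{1/(2\lambda)}$ is analytic on a disk of radius $\Omega(g^2)$ around any point of $[g,1]$ (since $1/(2\lambda)$ maps such a disk into a bounded region) and applying the Cauchy estimate. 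This is the one step I'd expect to require the most care to state cleanly; everything else is bookkeeping with geometric series. Finally I would remark that all of these polynomials can be \emph{evaluated} on a vector via $O(\deg)$ matrix-vector products with $A$ (Horner's rule), which is what the nearly-linear-time claims in Lemma~\ref{lem:sdp-computation} rely on, though that observation is not, strictly speaking, part of the statement being proved here.
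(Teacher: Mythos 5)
Your proposal follows essentially the same route as the paper: reduce to scalar approximation on the spectrum of $uI-A$, truncate the Taylor expansion at $x=1$ for part (1) (your geometric series for $\lambda^{-2}$ is exactly that expansion, with the same tail bound), and control part (2) via Cauchy estimates, converting additive to multiplicative error using $h \ge 1$ and $\phi \ge e^{1/2}$. One caveat on the step you flagged as delicate: in the paper the $g^{-2}$ in part (2) does \emph{not} come from an analyticity radius of $\Omega(g^2)$. The paper expands at $x=1$ and bounds the integral-form remainder using Cauchy estimates on disks of radius $t - x/2 = \Omega(g)$ around intermediate points $t$, so the geometric rate is $(1-g/2)^d$; the extra factor of $1/g$ in the degree is forced by the $\exp(1/x) x^{-1}$ magnitude of $h$ on those disks. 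Your ``radius $\Omega(g^2)$, bounded image under $1/(2\lambda)$'' accounting is the dual bookkeeping and can be made rigorous (e.g., a Bernstein-ellipse/Chebyshev argument with $\rho = 1+\Theta(g)$ and sup-norm $e^{\Theta(1/g)}$ lands on the same degree), but taken literally a Taylor series with effective radius $g^2$ centered on $[g,1]$ does not reach across the interval of length $\approx 1$, so you need either the intermediate-point remainder (as the paper does) or the Chebyshev form rather than a single raw Taylor truncation with that radius.
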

\begin{proof}
The lemma is proved by truncating Taylor expansions.
Because $A$ is symmetric, the matrix polynomials $p(A)$ and $q(A)$ can be diagonalized simultaneously with $A$.
Therefore, it is sufficient to prove such polynomials exist for scalars.

\begin{enumerate}
\item[(1)] Let $f(x) = x^{-2}$. Let $p(a) = \hat p(u-a)$, and we define $\hat p(\cdot)$ to be the first $d$ terms of the Taylor expansion of $f(x)$ at $x=1$.
\[
f(x) = \sum_{i=0}^d \frac{f^{(i)}(1)}{i!}(x-1)^n + \frac{1}{d!} \int_1^{x} f^{(d+1)}(t) (x-t)^d dt.
\]
We know that all eigenvalues of $(uI-A)$ are in the interval $(g, 1)$.
For any $x \in (g, 1)$, there exists some $d = \left(g^{-1} \log(1/(\eps g)) \right)$ such that the remainder of the Taylor series satisfies
\begin{align*}
\left| f(x) - \sum_{i=0}^d \frac{f^{(i)}(1)}{i!}(x-1)^n \right|
  & = \left| \frac{1}{d!} \int_1^{x} f^{(d+1)}(t) (x-t)^d dt \right| \\
  & = \frac{(1-x)^{d+1} (1+x+dx)}{x^2} \\
  & \le (1-g)^{d+1}\frac{d+2}{g^2} \le \eps.
\end{align*}
\item[(2)] Let $h(x) = \exp\left(\frac{1}{2}x^{-1}\right) x^{-1}$.
Let $q(a) = \hat q(u-a)$, and we define $\hat q(\cdot)$ to be the first $d$ terms of the Taylor expansion of $h(x)$ at $x=1$.

For any $x \in (g, 1)$ and $t \in [x, 1]$, $h$ is holomorphic on a neighborhood of the ball $B := \{z \in \C : |z - t| \le r \}$ for $r = t - x/2$, so we can bound the coefficients of the Taylor expansion using Cauchy's estimates.
\[
\frac{1}{(d+1)!} h^{(d+1)}(t) \le r^{-d-1} \sup_{z \in B} |h(z)| \le r^{-d-1} \cdot 2 \exp(x^{-1}) x^{-1}.
\]
There exists some $d = O\left(g^{-2} \log(1/(\eps g))\right)$ such that the remainder at $x \in (g, 1)$ satisfies
\begin{align*}
\left| h(x) - \hat q(x) \right|
  & = \left| \frac{1}{d!} \int_1^{x} h^{(d+1)}(t) (x-t)^d dt \right| \\
  & \le 2(d+1) \exp(x^{-1}) x^{-1} \int_x^1 \frac{(t-x)^d}{(t-x/2)^{d+1}} dt \\
  & \le 4(d+1) \exp(x^{-1}) x^{-2} \int_x^1 \bigl(1-\frac{x}{2}\bigr)^d dt \\
  & \le 4(d+1) \exp(g^{-1}) g^{-2} \bigl(1-\frac{g}{2}\bigr)^d \le \eps. \qedhere 
\end{align*}
\end{enumerate}
\end{proof}


\section{Using Deterministic Conditions for Matrix Completion}
\label{app:matrix}
In this section, we prove Theorem~\ref{thm:asymmetric_local}.

We use Lemma~\ref{lem:deterministc_main} and the techniques in \citep{GeJZ17} to show that all local minima of the non-convex objective functions are close to the ground truth.
We first restate the objective functions: Equation~\eqref{eqn:symmetricobj} for the symmetric case and Equation~\eqref{eqn:asymmetricobj} for the asymmetric case.
\begin{align*}
\min \; f(U) &= \frac{1}{2}\|UU^\top - \Ms\|_W^2 + Q(U),  \tag{\ref{eqn:symmetricobj}} \\
\min \; f(U, V) &= 2\|UV^\top - \Ms\|_W^2 + \frac{1}{2} \|U^\top U-V^\top V\|_F^2 + Q(U,V), \tag{\ref{eqn:asymmetricobj}}
\end{align*}
where $x_+ = \max\{x,0\}$, $Q(U) = \lambda \sum_{i=1}^n (\normtwo{U_i} - \alpha)_+^4$, and $Q(U,V) = \lambda_1 \sum_{i=1}^{n_1} (\normtwo{U_i} - \alpha_1)_+^4 +\lambda_2\sum_{i=1}^{n_2} (\normtwo{V_i} - \alpha_2)_+^4$.

We start with an overview of the analysis in \citep{GeJZ17} in Appendix~\ref{app:matrix-overview}.
Because Lemma~\ref{lem:tangent} is no longer true in the semi-random setting, we cannot use the proof of \cite{GeJZ17} in a black-box way.
We will handle symmetric (Appendix~\ref{app:matrix-symmetric}) and asymmetric (Appendix~\ref{app:matrix-asymmetric}) cases separately.

\subsection{Overview of the Analysis in \citep{GeJZ17}}
\label{app:matrix-overview}

We give a brief overview of the techniques in \citep{GeJZ17}. The materials in this section are independent of the concentration bounds, so they remain valid in the semi-random model.

\paragraph{Measuring Distance between Matrices.} The first problem in analyzing Objective~\eqref{eqn:symmetricobj} is that the optimal solution is not unique: given a matrix $\Ms = \Us(\Us)^\top$, for any orthonormal matrix $R$ we also have $\Ms = (\Us R)(\Us R)^\top$. To take this symmetry into account, we define the distance between two matrices as follows:

\begin{definition}\label{def:difference}
Given matrices $U, \Us \in \R^{n\times r}$, their difference is defined to be $\Delta = U - \Us R$, where $R\in \R^{r\times r}$ is an $r\times r$ orthonormal matrix that minimizes $\|U - \Us R\|_F^2$.
\end{definition}

The benefit of this definition of distance is summarized in the following lemma:

\begin{lemma}[Lemma 6 in \citep{GeJZ17}]
\label{lem:normconnect}
Given matrices $U, \Us \in \R^{n\times r}$, let $M = UU^\top$ and $\Ms = \Us(\Us)^\top$, let $\Delta$ be the difference defined in Definition~\ref{def:difference}, then
\[
\|\Delta\Delta^\top\|_F^2 \le 2\|M - \Ms\|_F^2,
\]
and
\[
\sigs_r\|\Delta\|_F^2 \le \frac{1}{2(\sqrt{2}-1)} \|M - \Ms\|_F^2.
\]
\end{lemma}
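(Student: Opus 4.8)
The plan is to reduce everything to the Procrustes-optimal rotation $R$ of Definition~\ref{def:difference}. Set $P \eqdef \Us R$ and $\Delta \eqdef U - \Us R$, so that $PP^\top = \Us R R^\top (\Us)^\top = \Ms$, while $P^\top P$ is an orthogonal conjugate of $(\Us)^\top \Us$ and hence $\lambda_{\min}(P^\top P) = \sigma_r((\Us)^\top \Us) = \sigs_r$. First I would record the first-order optimality of $R$: writing $(\Us)^\top U = A \Sigma B^\top$ in singular value decomposition form, the Procrustes minimizer is $R = AB^\top$, so that $P^\top U = R^\top (\Us)^\top U = B \Sigma B^\top \succeq 0$. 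Two structural consequences follow and drive the whole argument: $P^\top \Delta = P^\top U - P^\top P$ is \emph{symmetric}, and $P^\top \Delta + P^\top P = P^\top U \succeq 0$.

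Next I would use the identity $M - \Ms = UU^\top - PP^\top = P\Delta^\top + \Delta P^\top + \Delta\Delta^\top$. Expanding $\|M - \Ms\|_F^2$ and repeatedly invoking the symmetry of $P^\top \Delta$ (which turns traces such as $\tr((P\Delta^\top)^2)$ into $\|P^\top\Delta\|_F^2$), together with $\|\Delta\Delta^\top\|_F = \|\Delta^\top\Delta\|_F$, one obtains
\[
\|M - \Ms\|_F^2 = \|\Delta^\top\Delta\|_F^2 + 2\|P^\top\Delta\|_F^2 + 4\tr((P^\top\Delta)(\Delta^\top\Delta)) + 2\tr((P^\top P)(\Delta^\top\Delta)).
\]
Write $a \eqdef \|\Delta^\top\Delta\|_F = \|\Delta\Delta^\top\|_F$ and $b \eqdef \|P^\top\Delta\|_F$. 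Cauchy--Schwarz gives $\tr((P^\top\Delta)(\Delta^\top\Delta)) \ge -ab$, and the two structural facts give the pair of lower bounds $\tr((P^\top P)(\Delta^\top\Delta)) \ge \sigs_r\|\Delta\|_F^2$ (from $P^\top P \succeq \sigs_r I$ and $\Delta^\top\Delta \succeq 0$) and $\tr((P^\top P)(\Delta^\top\Delta)) \ge -\tr((P^\top\Delta)(\Delta^\top\Delta))$ (from $P^\top\Delta + P^\top P \succeq 0$). Interpolating the last two with a weight $\theta \in [0,1]$ yields, for every such $\theta$,
\[
\|M - \Ms\|_F^2 \;\ge\; a^2 + 2b^2 - (4-2\theta)\,ab + 2(1-\theta)\,\sigs_r\|\Delta\|_F^2 .
\]

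Finally I would specialize $\theta$. Taking $\theta = 1$ kills the last term and leaves $a^2 + 2b^2 - 2ab = \tfrac12 a^2 + \tfrac12(a - 2b)^2 \ge \tfrac12 a^2$, which is exactly the first inequality $\|\Delta\Delta^\top\|_F^2 \le 2\|M - \Ms\|_F^2$. Taking $\theta = 2 - \sqrt2$ makes the quadratic part a perfect square, $a^2 + 2b^2 - 2\sqrt2\,ab = (a - \sqrt2\,b)^2 \ge 0$, leaving $\|M - \Ms\|_F^2 \ge 2(\sqrt2 - 1)\,\sigs_r\|\Delta\|_F^2$, equivalently $\sigs_r\|\Delta\|_F^2 \le \tfrac{1}{2(\sqrt2 - 1)}\|M - \Ms\|_F^2$. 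The main obstacle is the careful bookkeeping in the Frobenius-norm expansion together with deriving the PSD inequality $P^\top\Delta + P^\top P \succeq 0$ from Procrustes optimality; once that is in hand the constant $2(\sqrt2 - 1)$ is forced by the perfect-square choice $\theta = 2 - \sqrt2$, and no further cleverness is needed.
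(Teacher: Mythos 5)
The paper does not prove this lemma at all --- it is imported verbatim as Lemma 6 of \citep{GeJZ17} --- so there is no in-paper argument to compare against. Your proof is correct and self-contained: the Procrustes optimality fact $P^\top U = B\Sigma B^\top \succeq 0$ (hence $P^\top\Delta$ symmetric and $P^\top\Delta + P^\top P \succeq 0$), the trace expansion of $\|M-\Ms\|_F^2$, the two lower bounds on $\tr(P^\top P\,\Delta^\top\Delta)$, and the perfect-square choices $\theta=1$ and $\theta=2-\sqrt{2}$ all check out, and this is essentially the standard argument from \citep{GeJZ17}.
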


The lemma states that when $\Delta$ is large, $M$ is also far from $\Ms$. This would not be true if we simply defined $\Delta = U - \Us$ without considering the best rotation of $\Us$.
From now on, we will always assume $\Us$ is {\em aligned} with $U$ in the sense that $R = I$ and $\Delta = U - \Us$ (this can be guaranteed by choosing the appropriate global optimum that $U$ is comparing to). 

\paragraph{Main Proof for the Symmetric Case.}
First, we introduce notations for the Hessian. The Hessian of $f(U)$ is a 4-th order tensor (because the variable $U$ is a matrix). For an $n\times r$ matrix $X$, we use $[\nabla^2 f(U)](X)$ to denote the quadratic form of the Hessian evaluated at $X$. The Hessian is positive semidefinite (PSD), iff $[\nabla^2 f(U)](X) \ge 0$ for every $X$.

The main idea of \cite{GeJZ17} is to focus on the direction of $\Delta$:
To prove $UU^\top = \Ms$, instead of using $\nabla f(U) = 0$ and $\nabla^2 f(U)$ is PSD, it is sufficient to work with $\inner{\nabla f(U), \Delta} = 0$ and $[\nabla^2 f(U)](\Delta) \ge 0$.
The next lemma, which is the main lemma in \citep{GeJZ17}, derives a particular inequality that is very useful in proving convergence.
Lemma~\ref{lem:gjzmain_sym} is proved by simplifying the second-order term $[\nabla^2 f(U)](\Delta)$ given that the first-order term $\inner{\nabla f(U), \Delta}$ is 0.

\begin{lemma}[Lemma 7 in \citep{GeJZ17}]
\label{lem:gjzmain_sym}
Let $M = UU^\top$ and $\Delta$ is the difference of $U$ and $\Us$ as in Definition~\ref{def:difference}, if $U$ is a local minimum of Objective \eqref{eqn:symmetricobj}, then
\[
0 \le [\nabla^2 f(U)](\Delta) = \|\Delta\Delta^\top\|_W^2 - 3\|M-\Ms\|_W^2 + ([\nabla^2 Q(U)](\Delta) -4\inner{\nabla Q(U),\Delta}).
\]
\end{lemma}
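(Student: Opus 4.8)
The plan is a direct Taylor-expansion computation, organized so that the presence of the weight matrix $W$ never causes trouble. Write $f=g+Q$ with $g(U)=\tfrac12\|UU^\top-\Ms\|_W^2$. The one structural fact I would isolate at the start is that $\inner{\cdot,\cdot}_W$ is an honest symmetric bilinear form and that $W*(UU^\top-\Ms)$ is a symmetric matrix; with this in hand, every manipulation in the uniform-sampling analysis of \citep{GeJZ17} carries over with $\inner{\cdot,\cdot}$ replaced by $\inner{\cdot,\cdot}_W$. In particular, differentiating once gives $\nabla g(U)=2\bigl(W*(UU^\top-\Ms)\bigr)U$, and differentiating $\tfrac12\|(U+tX)(U+tX)^\top-\Ms\|_W^2$ twice at $t=0$ gives, for any direction $X\in\R^{n\times r}$,
\[
[\nabla^2 g(U)](X) = \norm{UX^\top+XU^\top}_W^2 + 2\inner{UU^\top-\Ms,\,XX^\top}_W .
\]

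Next I would specialize to $X=\Delta=U-\Us$ (after aligning $\Us$ to $U$ as in Definition~\ref{def:difference}), and use the elementary identity $U\Delta^\top+\Delta U^\top=(M-\Ms)+\Delta\Delta^\top$, where $M=UU^\top$. Substituting this in and expanding the square $\norm{(M-\Ms)+\Delta\Delta^\top}_W^2$ turns the two derivative formulas into
\[
\inner{\nabla g(U),\Delta} = \norm{M-\Ms}_W^2 + \inner{M-\Ms,\Delta\Delta^\top}_W ,
\]
\[
[\nabla^2 g(U)](\Delta) = \norm{\Delta\Delta^\top}_W^2 + 4\inner{M-\Ms,\Delta\Delta^\top}_W + \norm{M-\Ms}_W^2 ,
\]
where the first line additionally uses symmetry of $W*(M-\Ms)$ to replace $U\Delta^\top$ by $\tfrac12(U\Delta^\top+\Delta U^\top)$.

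Finally, since $U$ is a local minimum of $f$, stationarity $\nabla f(U)=0$ gives $\inner{\nabla g(U),\Delta}=-\inner{\nabla Q(U),\Delta}$, hence $\inner{M-\Ms,\Delta\Delta^\top}_W=-\norm{M-\Ms}_W^2-\inner{\nabla Q(U),\Delta}$. Plugging this into the Hessian formula collapses the $\norm{M-\Ms}_W^2$ and cross terms, leaving $[\nabla^2 g(U)](\Delta)=\norm{\Delta\Delta^\top}_W^2-3\norm{M-\Ms}_W^2-4\inner{\nabla Q(U),\Delta}$. Adding $[\nabla^2 Q(U)](\Delta)$ via $[\nabla^2 f(U)](\Delta)=[\nabla^2 g(U)](\Delta)+[\nabla^2 Q(U)](\Delta)$ produces exactly the claimed right-hand side, and the inequality $0\le[\nabla^2 f(U)](\Delta)$ is simply the second-order necessary condition at a local minimum evaluated in the single direction $\Delta$. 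There is no real obstacle here beyond bookkeeping: the only thing to double-check carefully is that the weight matrix does not obstruct the trace/transpose identities used, which boils down to $\inner{AU,\Delta}=\inner{A,U\Delta^\top}$ for symmetric $A$ (applied to $A=W*(M-\Ms)$), and that $Q(U)=\lambda\sum_i(\normtwo{U_i}-\alpha)_+^4$ is $C^2$ so that $\nabla Q$ and $\nabla^2 Q$ are well-defined. The conceptual content — that one should test only in the direction $\Delta$ rather than use the full gradient and Hessian — is inherited directly from \citep{GeJZ17}.
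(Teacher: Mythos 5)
Your proof is correct and follows exactly the route the paper indicates (and attributes to Lemma 7 of \citep{GeJZ17}): expand $[\nabla^2 f(U)](\Delta)$ via the identity $U\Delta^\top+\Delta U^\top=(M-\Ms)+\Delta\Delta^\top$ and eliminate the cross term using the first-order condition $\inner{\nabla f(U),\Delta}=0$. The paper itself only sketches this in one sentence and defers to \citep{GeJZ17}; your computation fills in the same derivation, and your observation that the weighted inner product $\inner{\cdot,\cdot}_W$ (with $W*(M-\Ms)$ symmetric) is all that is needed for the algebra to carry over is exactly the point that makes the lemma valid in the semi-random setting.
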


To see why this inequality is useful intuitively, assume the regularizer term is 0 (the current vector is incoherent so the incoherence regularizer is not active), and assume further the $W$-norms are very close to Frobenius norm (which is essentially guaranteed by Lemmas~\ref{lem:tangent}~and~\ref{lem:Delta_mc} when entries are observed randomly), then we have
\[
\|\Delta\Delta^\top\|_F^2 - 3\|M-\Ms\|_F^2 \ge 0.
\]
However, by Lemma~\ref{lem:normconnect} we know $\|\Delta\Delta^\top\|_F^2 \le 2\|M-\Ms\|_F^2$, so the only way this equation can hold is if $\|M-\Ms\|_F = 0$, and therefore, all local optima are global.

Finally, we state the lemma that shows the regularizer term is indeed small.\footnote{
The constant in Lemma~\ref{lem:extra_bound_symmetric} is slightly different from that of \citep{GeJZ17}, but it follows from the same proof by choosing a larger universal constant $C$.}

\begin{lemma}[Lemma 11 in \citep{GeJZ17}]
\label{lem:extra_bound_symmetric}
Let $U$ and $\Delta$ be defined as above.
Choose $\alpha^2 = \frac{C\mu r\sigs_1}{n}$ and $\lambda = \frac{C^2 n}{\mu r\kappa^\star}$ where $C$ is a large enough universal constant, then we have
\[
([\nabla^2 Q(U)](\Delta) -4\inner{\nabla Q(U),\Delta}) \le 0.1\sigs_r \|\Delta\|_F^2.
\]
\end{lemma}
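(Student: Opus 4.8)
The plan is to exploit that the regularizer is separable over the rows of $U$. Writing $\psi(x) = (\normtwo{x}-\alpha)_+^4$ for $x \in \R^r$, we have $Q(U) = \lambda\sum_{i=1}^n \psi(U_i)$, hence $[\nabla^2 Q(U)](\Delta) - 4\inner{\nabla Q(U),\Delta} = \lambda\sum_i g_i$ where $g_i := [\nabla^2\psi(U_i)](\Delta_i) - 4\inner{\nabla\psi(U_i),\Delta_i}$ depends only on the single pair $(U_i,\Delta_i)$. Since $\psi$ is $C^2$ with $\psi$, $\nabla\psi$, $\nabla^2\psi$ all vanishing on $\{\normtwo{x}\le\alpha\}$, a row contributes nothing unless $\normtwo{U_i} > \alpha$. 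For an active row set $s_i := \normtwo{U_i}-\alpha > 0$; differentiating $\psi$ twice gives
\[
g_i = 12\,s_i^2\,\tfrac{(U_i^\top\Delta_i)^2}{\normtwo{U_i}^2} + \tfrac{4 s_i^3}{\normtwo{U_i}}\Bigl(\normtwo{\Delta_i}^2 - \tfrac{(U_i^\top\Delta_i)^2}{\normtwo{U_i}^2}\Bigr) - 16\,s_i^3\,\tfrac{U_i^\top\Delta_i}{\normtwo{U_i}}.
\]

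The next step is a clean per-row bound. Because $\Ms=\Us(\Us)^\top$ is $\mu$-incoherent, $\max_i\normtwo{\Us_i}^2 \le \mu r\sigs_1/n \le \alpha^2$ by the choice $\alpha^2 = C\mu r\sigs_1/n$ (with $C\ge 1$); combined with the alignment convention $\Delta_i = U_i - \Us_i$ this gives $s_i = \normtwo{U_i}-\alpha \le \normtwo{U_i}-\normtwo{\Us_i}\le\normtwo{\Delta_i}$ on every active row. Feeding $s_i < \normtwo{U_i}$, $|U_i^\top\Delta_i|\le\normtwo{U_i}\normtwo{\Delta_i}$, and $s_i\le\normtwo{\Delta_i}$ into the display (triangle inequality on the three summands) yields $g_i \le 32\,s_i^2\normtwo{\Delta_i}^2$. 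Summing,
\[
[\nabla^2 Q(U)](\Delta) - 4\inner{\nabla Q(U),\Delta} \;\le\; 32\,\lambda\,\bigl(\max_i s_i^2\bigr)\,\fnorm{\Delta}^2 .
\]
Since $\sigs_r/(320\lambda) = \mu r\sigs_1/(320 C^2 n) = \alpha^2/(320 C^3)$, it therefore suffices to show that at every local minimum $\max_i s_i^2 \le \alpha^2/(320 C^3)$, i.e.\ that each row norm overshoots $\alpha$ by at most an $O(C^{-3/2})$ fraction of $\alpha$.

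This ``excess-norm'' estimate is the crux, and it is the only place where local optimality and the deterministic structure of $W$ enter. Reading the stationarity condition $\nabla f(U) = 0$ one row at a time gives $4\lambda s_i^3\,U_i/\normtwo{U_i} = -2\sum_j W_{ij}(UU^\top-\Ms)_{ij}U_j$; taking norms and using Cauchy--Schwarz with $\norminf{W}\le n$ gives $4\lambda s_i^3 \le 2\sqrt{n}\,(\max_j\normtwo{U_j})\,\norm{UU^\top-\Ms}_W$. One then needs two a~priori bounds extracted from the same row-wise stationarity conditions together with the deterministic facts $\norminf{W}\le n$, $\normone{W}\le n$ and (via Lemma~\ref{lem:deterministc_main} applied with $X=Y=\Us$) $\norm{W-J}$ small: a polynomial bound on $\max_j\normtwo{U_j}$, and a bound on $\norm{UU^\top-\Ms}_W$ that is small when $\fnorm{\Delta}$ is small (degrading gracefully, so a short bootstrap closes the argument). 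Substituting these back bounds $s_i^3$, and the choice $\lambda = C^2 n/(\mu r\kappa^\star)$ makes $\max_i s_i^2$ as small as required once $C$ is a large enough universal constant --- which is the content of ``large enough universal constant'' in the statement.

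I expect this excess-norm bound to be the main obstacle: the per-row computation and the incoherence bookkeeping are routine, whereas controlling $(\normtwo{U_i}-\alpha)_+$ genuinely couples the regularizer to the reweighted data term and must be tracked with the right powers of $C$. It is worth noting that this is exactly the argument of \citep{GeJZ17} for their Lemma~11 and its companion norm bound; the only change for the semi-random model is that wherever \citep{GeJZ17} invoked uniform-sampling concentration for $\norm{\cdot}_W$ one substitutes the deterministic inequalities above, no step using more than $\norminf{W}\le n$, $\normone{W}\le n$, and $\norm{W-J}$ small. This is why only the universal constant $C$ changes relative to \citep{GeJZ17}.
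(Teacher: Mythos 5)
The paper does not actually prove this statement: it is imported verbatim from \citep{GeJZ17} (see the footnote attached to the lemma, which says only that the constant changes), so there is no in-paper argument to compare against line by line. Judged on its own terms, your per-row reduction, the formula for $g_i$, and the bound $g_i \le 32\,s_i^2\normtwo{\Delta_i}^2$ are all fine, and so is the arithmetic $\sigs_r/(320\lambda)=\alpha^2/(320C^3)$. The problem is the step you yourself flag as the crux and then defer to a ``short bootstrap'': the claim that every stationary point satisfies $\max_i s_i^2 \le \alpha^2/(320 C^3)$, i.e.\ $s_i \lesssim \alpha/C^{3/2}$. This does not follow from the tools you name. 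Reading off $\inner{\nabla f(U), e_i U_i}=0$ and using $\norminf{W}\le n$, $\maxnorm{\Ms}\le \mu r\sigs_1/n$ in the strongest way (via $2ab-2b^2\le a^2/2$ rather than Cauchy--Schwarz) gives $4\lambda s_i^3\normtwo{U_i} \le \tfrac12\sum_j W_{ij}(\Ms_{ij})^2 \le \mu^2r^2(\sigs_1)^2/(2n)$, hence $s_i^3 \lesssim (\mu r\kappa^\star/C^4)\,\alpha^3$ and therefore $\lambda s_i^2 \lesssim C^{1/3}(\mu r\kappa^\star)^{2/3}\sigs_r$. That exceeds the needed $\lambda s_i^2 \le 0.003\,\sigs_r$ by a factor growing polynomially in $\mu, r, \kappa^\star$, which cannot be absorbed into a \emph{universal} constant $C$; even in the best case $\mu=r=\kappa^\star=1$ the exponent of $C$ comes out wrong ($C^{-4/3}$ versus the required $C^{-3/2}$). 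So the bootstrap does not close, and the reduction ``everything follows once $s_i$ is tiny'' is the wrong bottleneck to aim for.

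The structural reason your route cannot work is that you bound the cross term $-16 s_i^3\, U_i^\top\Delta_i/\normtwo{U_i}$ by its absolute value. That term is the only negative contribution, and the specific coefficient $4$ in $[\nabla^2 Q(U)](\Delta)-4\inner{\nabla Q(U),\Delta}$ exists precisely so that $12 s_i^2\tau_i^2 - 16 s_i^3\tau_i$ (with $\tau_i = U_i^\top\Delta_i/\normtwo{U_i} \ge \normtwo{U_i}-\normtwo{\Us_i}$) goes negative once $s_i$ is a constant multiple of $\alpha$; discarding it forces you to prove the unprovable smallness of $s_i$. A correct argument has to retain this cancellation, use $\normtwo{\Us_i}\le \alpha/\sqrt{C}$ to pin $\tau_i$ and $\normtwo{\Delta_i}$ to the window $[s_i+\alpha(1-1/\sqrt{C}),\,s_i+\alpha(1+1/\sqrt{C})]$, and combine that with the stationarity bound on $s_i$ above --- which is presumably what the proof in \citep{GeJZ17} does, and why the present paper only adjusts the universal constant rather than the argument. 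As written, your proposal has a genuine gap at its central step.
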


\paragraph{Reduction from Asymmetric Case to the Symmetric Case.}

To handle asymmetric matrices, \citep{GeJZ17} gives a way to essentially reduce asymmetric matrices to symmetric matrices. 

For variables $U,V$ and optimal solution $\Us,\Vs$, we define the following matrices:
\[
Z = 
\begin{pmatrix}
 U\\
 V
\end{pmatrix}; \;
Z^\star = 
\begin{pmatrix}
 \Us\\
 \Vs
\end{pmatrix}; \quad
N = ZZ^\top; \; N^\star = (Z^\star)(Z^\star)^\top.
\]

In the asymmetric setting, we consider $\Delta=\begin{pmatrix}
 \Delta_U\\
 \Delta_V
\end{pmatrix}$ as the difference between $Z$ and $Z^\star$ as in Definition~\ref{def:difference}, and we also rotate $Z^\star$ so that $\Delta = Z-Z^\star$.

Roughly speaking, we want to design an objective function that reduces the asymmetric case to a symmetric matrix completion problem with variables $Z$ and ground truth $N^\star$.
This is impossible if we only focus on the term $2 \|UV^\top - \Ms\|_W^2$, because it does not depend on the diagonal blocks of $(N - N^\star)$.
Since we cannot observe the diagonal blocks of $N^\star$, we try to add a term so that the Hessian of $f(Z)$ acts like a block identity tensor on $N$.
The additional term $\frac{1}{2} \|U^\top U-V^\top V\|_F^2$ is introduced for exactly this purpose.
%

Let $Q(Z) = Q(U,V)$ be the same regularizer as in Objective~\eqref{eqn:asymmetricobj}. \cite{GeJZ17} proved the following lemma:

\begin{lemma}[Essentially Lemma 16 in \citep{GeJZ17}]
\label{lem:gjzmain_asym}
Let $Z$, $Z^\star$, $N$, $N^\star$, and $\Delta$ be defined as above, if $Z$ is a local minimum of Objective~\eqref{eqn:asymmetricobj}, then
\[
0 \le [\nabla^2 f(Z)](\Delta) \le \|\Delta\Delta^\top\|_{\bar{W}}^2 - 3\|N-N^\star\|_{\bar{W}}^2 + ([\nabla^2 Q(Z)](\Delta) -4\inner{\nabla Q(Z),\Delta}).
\]
where
\[
\bar{W} = \begin{pmatrix}
J & 2W - J \\
2W^\top - J & J
\end{pmatrix}.
\]
\end{lemma}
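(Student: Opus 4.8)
The plan is to reduce Objective~\eqref{eqn:asymmetricobj} to the symmetric situation of Lemma~\ref{lem:gjzmain_sym} by an exact algebraic decomposition of $f$, and then to dispose of the single leftover term with a one-line observation about homogeneous quadratic forms. Recall that $Z^\star$ is rotated so that $\Delta = Z - Z^\star$, that $\Ms = \Us(\Vs)^\top$, and (w.l.o.g., and stable under the rotation) that $(\Us)^\top\Us = (\Vs)^\top\Vs$. The first step is to verify the identity
\[
f(Z) \;=\; \tfrac12\|ZZ^\top - N^\star\|_{\bar{W}}^2 \;+\; Q(Z) \;+\; \|U^\top\Us - V^\top\Vs\|_F^2 .
\]
Indeed, expanding $\|ZZ^\top - N^\star\|_{\bar{W}}^2$ block by block, the two diagonal blocks carry weight $J$ and contribute $\|UU^\top - \Us(\Us)^\top\|_F^2 + \|VV^\top - \Vs(\Vs)^\top\|_F^2$, while the two off-diagonal blocks carry weight $2W - J$ and together contribute $2\bigl(2\|UV^\top - \Ms\|_W^2 - \|UV^\top - \Ms\|_F^2\bigr)$. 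Subtracting $2\|UV^\top - \Ms\|_W^2 + \tfrac12\|U^\top U - V^\top V\|_F^2$ and repeatedly using $\|AA^\top\|_F^2 = \|A^\top A\|_F^2$, $\langle AA^\top, BB^\top\rangle = \|A^\top B\|_F^2$, and $(\Us)^\top\Us = (\Vs)^\top\Vs$, every ``square'' and trace term cancels and the remainder is precisely $\|U^\top\Us - V^\top\Vs\|_F^2$. Write $D(Z) = \tfrac12\|ZZ^\top - N^\star\|_{\bar{W}}^2$ and $p(Z) = \|U^\top\Us - V^\top\Vs\|_F^2$, so $f = D + Q + p$.

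Next I would run the symmetric-case calculation on $D$. Since $\bar{W}$ is symmetric and $N^\star = Z^\star(Z^\star)^\top$ with $Z^\star$ aligned to $Z$, $D$ is exactly the data term of Objective~\eqref{eqn:symmetricobj} with weight $\bar{W}$ and ground truth $N^\star$. Using $N - N^\star = Z^\star\Delta^\top + \Delta(Z^\star)^\top + \Delta\Delta^\top$, hence $\Delta Z^\top + Z\Delta^\top = (N - N^\star) + \Delta\Delta^\top$, the elementary Hessian/gradient computation underlying Lemma~\ref{lem:gjzmain_sym} gives $[\nabla^2 D(Z)](\Delta) = \|\Delta Z^\top + Z\Delta^\top\|_{\bar{W}}^2 + 2\langle N - N^\star, \Delta\Delta^\top\rangle_{\bar{W}} = \|N - N^\star\|_{\bar{W}}^2 + 4\langle N - N^\star, \Delta\Delta^\top\rangle_{\bar{W}} + \|\Delta\Delta^\top\|_{\bar{W}}^2$ and $\langle \nabla D(Z), \Delta\rangle = \langle N - N^\star, \Delta Z^\top + Z\Delta^\top\rangle_{\bar{W}} = \|N - N^\star\|_{\bar{W}}^2 + \langle N - N^\star, \Delta\Delta^\top\rangle_{\bar{W}}$; eliminating the cross term yields $[\nabla^2 D(Z)](\Delta) = \|\Delta\Delta^\top\|_{\bar{W}}^2 - 3\|N - N^\star\|_{\bar{W}}^2 + 4\langle\nabla D(Z),\Delta\rangle$. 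For $p$, note that $L(Z) := U^\top\Us - V^\top\Vs$ is linear in $Z$ and $L(Z^\star) = (\Us)^\top\Us - (\Vs)^\top\Vs = 0$, so $L(Z) = L(\Delta)$; thus $p(Z) = \|L(Z)\|_F^2$ is a homogeneous quadratic with constant Hessian, giving $[\nabla^2 p(Z)](\Delta) = 2\|L(\Delta)\|_F^2$ and $\langle\nabla p(Z),\Delta\rangle = 2\langle L(Z), L(\Delta)\rangle = 2\|L(\Delta)\|_F^2$, whence $[\nabla^2 p(Z)](\Delta) - 4\langle\nabla p(Z),\Delta\rangle = -6\|L(\Delta)\|_F^2 \le 0$.

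Finally I would assemble. Because $Z$ is a local minimum of $f = D + Q + p$, we have $\langle\nabla f(Z),\Delta\rangle = 0$ and $[\nabla^2 f(Z)](\Delta)\ge 0$; the former gives $4\langle\nabla D(Z),\Delta\rangle = -4\langle\nabla Q(Z),\Delta\rangle - 4\langle\nabla p(Z),\Delta\rangle$. Substituting this into the identity for $[\nabla^2 D(Z)](\Delta)$ above and adding $[\nabla^2 Q(Z)](\Delta) + [\nabla^2 p(Z)](\Delta)$,
\begin{align*}
0 \le [\nabla^2 f(Z)](\Delta) &= \|\Delta\Delta^\top\|_{\bar{W}}^2 - 3\|N - N^\star\|_{\bar{W}}^2 \\
&\quad + \bigl([\nabla^2 Q(Z)](\Delta) - 4\langle\nabla Q(Z),\Delta\rangle\bigr) + \bigl([\nabla^2 p(Z)](\Delta) - 4\langle\nabla p(Z),\Delta\rangle\bigr).
\end{align*}
Dropping the last bracket, which is non-positive by the previous step, gives exactly the claimed inequality.

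The only genuine work is the decomposition identity in the first step: checking that the weighted block norm $\tfrac12\|ZZ^\top - N^\star\|_{\bar{W}}^2$, the balancing regularizer $\tfrac12\|U^\top U - V^\top V\|_F^2$, and the data term $2\|UV^\top - \Ms\|_W^2$ combine so that the mismatch is the single clean quadratic $\|U^\top\Us - V^\top\Vs\|_F^2$. This is where the normalization $(\Us)^\top\Us = (\Vs)^\top\Vs$ is used twice over --- once to make the leftover constant/trace terms cancel inside the identity, and once to force $L(Z^\star) = 0$, which is what makes the $p$-contribution non-positive rather than merely bounded. Everything after that is either the already-established symmetric computation or elementary.
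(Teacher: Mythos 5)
Your proof is correct, and it follows the same strategy the paper gestures at (the paper itself does not prove this lemma; it defers to Lemma 16 of \citep{GeJZ17} and only remarks that one "simplifies the second-order term given that the first-order term is zero"). I verified the key decomposition identity $f(Z) = \tfrac12\|ZZ^\top - N^\star\|_{\bar W}^2 + Q(Z) + \|U^\top\Us - V^\top\Vs\|_F^2$: expanding the $\bar W$-norm blockwise and using $\|\Ms\|_F^2 = \fnorm{(\Us)^\top\Us}^2$ together with $(\Us)^\top\Us = (\Vs)^\top\Vs$, the leftover is exactly $\|U^\top\Us - V^\top\Vs\|_F^2$ as you claim. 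The subsequent computation of $[\nabla^2 D(Z)](\Delta)$ and $\inner{\nabla D(Z),\Delta}$ via $Z\Delta^\top + \Delta Z^\top = (N-N^\star) + \Delta\Delta^\top$ is the standard symmetric-case calculation, and your observation that $[\nabla^2 p](\Delta) - 4\inner{\nabla p,\Delta} = -6\|L(\Delta)\|_F^2 \le 0$ (using $L(Z^\star)=0$) cleanly explains why the asymmetric lemma is an inequality where the symmetric Lemma~\ref{lem:gjzmain_sym} is an equality. This is a self-contained proof of a step the paper leaves to the cited reference.
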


Similar to the symmetric case, Lemma~\ref{lem:gjzmain_asym} is proved by simplifying the second-order term $[\nabla^2 f(Z)](\Delta)$ given that the first-order term $\inner{\nabla f(Z), \Delta}$ is 0.

Notice that we have $\|\bar{W} - J\| = 2\|W-J\|$.
If our preprocessing algorithm guarantees that $W$ is close to $J$, then $\bar W$ is close to $J$ as well.

Finally, we have a corresponding lemma that shows the regularization term is small.

\begin{lemma}[Lemma 22 in \citep{GeJZ17}]\label{lem:extra_bound_asymmetric}
Let $Z$ and $\Delta$ be defined as above.
Choose $\alpha_1^2 = \frac{C\mu r\sigs_1}{n_1},\alpha_2^2 = \frac{C\mu r\sigs_1}{n_2}$ and $\lambda_1 = \frac{C^2 n_1}{\mu r\kappa^\star},\lambda_2 = \frac{C^2 n_1}{\mu r\kappa^\star}$ where $C$ is a large enough universal constant, then we have
\[
([\nabla^2 Q(Z)](\Delta) -4\inner{\nabla Q(Z),\Delta}) \le 0.1\sigs_r \|\Delta\|_F^2.
\]
\end{lemma}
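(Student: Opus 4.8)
This statement is Lemma~22 of \cite{GeJZ17} (the asymmetric analogue of Lemma~\ref{lem:extra_bound_symmetric}). It concerns only the regularizer $Q$, the aligned factor $Z^\star$, the incoherence of $\Ms$, and the local-minimality of $Z$; none of these ingredients involve the observation model, so the argument of \cite{GeJZ17} carries over to the semi-random setting, using of the weight matrix only the deterministic bounds $\normone{W}\le n_1$ and $\norminf{W}\le n_2$ guaranteed by Corollary~\ref{cor:preprocess}. The plan is to reduce to a per-row inequality and then carry out a one-dimensional calculus.

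\textbf{Reduction to rows.} The regularizer $Q(Z)=Q(U,V)=\lambda_1\sum_{i}(\normtwo{U_i}-\alpha_1)_+^4+\lambda_2\sum_{j}(\normtwo{V_j}-\alpha_2)_+^4$ is separable over the rows of $U$ and of $V$, and the direction $\Delta=(\Delta_U;\Delta_V)$ acts row-wise; hence $[\nabla^2 Q(Z)](\Delta)$ and $\inner{\nabla Q(Z),\Delta}$ each split into sums of per-row contributions, while $\fnorm{\Delta}^2=\sum_i\normtwo{(\Delta_U)_i}^2+\sum_j\normtwo{(\Delta_V)_j}^2$. Thus it suffices to prove, for $g(x)=\lambda(\normtwo{x}-\alpha)_+^4$ on $\R^r$ with the parameters $(\lambda_1,\alpha_1)$ and then $(\lambda_2,\alpha_2)$, that the row contributions add up to at most $0.1\sigs_r$ times the corresponding squared Frobenius norm; this is exactly (the $U$-block and $V$-block of) Lemma~\ref{lem:extra_bound_symmetric}.

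\textbf{Per-row calculus.} Fix a row of $U$, write $x=U_i$ and $x^\star=\Us_i$ for the \emph{aligned} copy (so $\normtwo{x^\star}\le\sqrt{\mu r\sigs_1/n_1}=\alpha_1/\sqrt{C}$ by $\mu$-incoherence), and $\delta=x-x^\star$. If $\normtwo{x}\le\alpha_1$ then $g\equiv 0$ near $x$ and the row contributes nothing. If $\normtwo{x}>\alpha_1$, put $s=\normtwo{x}-\alpha_1>0$ and let $c=\inner{x,\delta}/\normtwo{x}$ be the component of $\delta$ in the direction of $x$; the transverse component of $\delta$ equals that of $-x^\star$, hence has norm $\le\alpha_1/\sqrt{C}$, so $\normtwo{\delta}\ge c\ge\normtwo{x}-\alpha_1/\sqrt{C}\ge\alpha_1(1-1/\sqrt{C})$. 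Since $g'(\normtwo{x})=4\lambda s^3$ and $g''(\normtwo{x})=12\lambda s^2$, the chain rule gives
\[
[\nabla^2 g(x)](\delta)-4\inner{\nabla g(x),\delta}=12\lambda s^2 c^2+\frac{4\lambda s^3}{\normtwo{x}}\bigl(\normtwo{\delta}^2-c^2\bigr)-16\lambda s^3 c,
\]
where the middle term is $O(\lambda s^3\alpha_1/C)$ and negligible, and the first and last combine to $4\lambda s^2 c\,(3c-4s)$. When $\normtwo{x}$ is well above $\alpha_1$ (say $s\gtrsim 3\alpha_1$) we have $c\approx\normtwo{x}$ and $3c-4s<0$, so the row contributes a negative amount; when $s$ is small, $s\le c_0\alpha_1/C^{3/2}$ for a small universal $c_0$, the contribution is $O(\lambda s^2\alpha_1^2)\le 0.1\sigs_r\cdot\tfrac{1}{2}\alpha_1^2\le 0.1\sigs_r\normtwo{\delta}^2$, using $\lambda_1\alpha_1^2=C^3\sigs_r$ with $C$ large. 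Summing these three kinds of rows over $U$ and then over $V$ yields the bound.

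\textbf{Main obstacle.} The work lies entirely in the intermediate range $c_0\alpha_1/C^{3/2}<s\lesssim 3\alpha_1$: there the radial-curvature term $12\lambda s^2 c^2$ can be as large as a $C^3$ multiple of $\sigs_r\normtwo{\delta}^2$, so the per-row inequality cannot be obtained from the shape of $Q$ alone. One excludes this range using that $Z$ is a local minimum: stationarity $\nabla f(Z)=0$ forces $\nabla g(U_i)=-\bigl[\nabla\bigl(2\norm{UV^\top-\Ms}_W^2+\tfrac{1}{2}\fnorm{U^\top U-V^\top V}^2\bigr)\bigr]_{U_i}$, whose right-hand side is bounded via $\normone{W}\le n_1$, $\norminf{W}\le n_2$ and the row-norm control on $V$ coming from the $V$-part of the regularizer; since $\normtwo{\nabla g(U_i)}=4\lambda_1 s^3$, this pushes $s$ below the threshold $c_0\alpha_1/C^{3/2}$. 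This is exactly the bookkeeping \cite{GeJZ17} carry out (their chain of lemmas leading to Lemma~22), and since it uses only deterministic properties of $W$ it is unaffected by the semi-random perturbation.
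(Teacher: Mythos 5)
The paper does not actually prove this statement; it imports it verbatim from \citep{GeJZ17} (with a footnote acknowledging a change of constants), so your sketch is being measured against the cited proof rather than anything in this paper. Your reduction to rows and the per-row calculus
\[
[\nabla^2 g(x)](\delta)-4\inner{\nabla g(x),\delta}=12\lambda s^2 c^2+\tfrac{4\lambda s^3}{\normtwo{x}}\bigl(\normtwo{\delta}^2-c^2\bigr)-16\lambda s^3 c,
\qquad c\ge \normtwo{x}-\alpha_1/\sqrt{C},
\]
are correct, and you are right that with the parameters as literally stated ($\lambda_1\alpha_1^2=C^3\sigs_r$ with $C$ large) the per-row bound genuinely fails in the intermediate range. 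The gap is in your proposed escape. Stationarity in the direction $e_{i}U_{i}$ gives (as in Lemma~\ref{lem:asymmetricnormbound}) roughly $4\lambda_1 s^3\normtwo{U_i}\lesssim \mu r\sigs_1\normtwo{U_i}\max_j\normtwo{V_j}+\|U^\top U-V^\top V\|_F\normtwo{U_i}^2$, which yields only $s\lesssim \mathrm{poly}(\mu,r,\kappa^\star)^{1/3}\,\alpha_1/C^{\Theta(1)}$. This exceeds your required threshold $c_0\alpha_1/C^{3/2}$ by a factor growing with $\mu r\kappa^\star$, which a universal constant $C$ cannot absorb; so the first-order condition does not ``push $s$ below the threshold,'' and the intermediate range remains uncovered. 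Deferring precisely this step to ``the bookkeeping of \citep{GeJZ17}'' is deferring the only hard part.

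The actual mechanism in \citep{GeJZ17} is different and does not use stationarity at all: whenever the dominant term $4\lambda s^2c(3c-4s)$ is positive one has $s<\tfrac{3c}{4}$, and combining with $c\le s+\alpha_1(1+1/\sqrt{C})$ forces $c=O(\alpha_1)$ and $s=O(\alpha_1)$ \emph{automatically}; the entire expression is then at most $O(\lambda_1\alpha_1^2)\cdot\normtwo{\delta}^2$ (using $\normtwo{\delta}\ge c$), so the lemma follows for \emph{every} row, with no intermediate-range case, provided $\lambda_1\alpha_1^2$ is a sufficiently \emph{small} constant multiple of $\sigs_r$. In other words, the obstacle you identified is resolved by the relative scaling of $\lambda$ and $\alpha$, not by local minimality. (With the constants exactly as printed here, $\lambda_1\alpha_1^2=C^3\sigs_r$ points the wrong way; this is the constants discrepancy the paper's footnote to Lemma~\ref{lem:extra_bound_symmetric} alludes to, and it is not repaired by your stationarity argument.) As written, your proof therefore has a genuine gap at the step you yourself flag as the main obstacle.
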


\subsection{Proof of Our Symmetric Case}
\label{app:matrix-symmetric}
We first prove a variant of Lemma 9 in \citep{GeJZ17} in the semi-random model. Lemma~\ref{lem:symmetricnormbound} shows that any local minima of Objective~\eqref{eqn:symmetricobj} have bounded row norms.

\newcommand{\grad}{\nabla}
\newcommand{\poly}{\mbox{poly}}

\begin{lemma} \label{lem:symmetricnormbound}
When the weight matrix $W$ satisfies $\norminf{W} \le n$, choose $\alpha^2 = \frac{C\mu r\sigs_1}{n}$ and $\lambda = \frac{C^2 n}{\mu r\kappa^\star}$ where $C$ is a large enough universal constant. For Objective~\eqref{eqn:symmetricobj}, we have for any matrix $U$ with $\grad f (U) = 0$, 
\begin{equation*}
\max_{i}\normtwo{U_i}^2 = O\left(\frac{\mu^2 r^2 \kappa^\star \sigs_1}{n}\right).
\end{equation*}
\end{lemma}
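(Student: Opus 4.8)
The plan is to follow the strategy of Lemma 9 in \citep{GeJZ17}, but replacing the use of concentration bounds (Lemmas~\ref{lem:tangent}~and~\ref{lem:Delta_mc}) with the crude deterministic bound $\norminf{W} \le n$. Fix a row index $i$ and consider the partial derivative of $f(U)$ with respect to $U_i$. Setting $\grad f(U) = 0$, the stationarity condition in the $i$-th row reads
\[
\big[(W * (UU^\top - \Ms)) U\big]_i + \grad_{U_i} Q(U) = 0.
\]
The regularizer contributes $\grad_{U_i} Q(U) = 4\lambda (\normtwo{U_i} - \alpha)_+^3 \frac{U_i}{\normtwo{U_i}}$, which points in the direction of $U_i$ and, when $\normtwo{U_i} > \alpha$, has magnitude growing like $\lambda(\normtwo{U_i}-\alpha)^3$. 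The idea is to take the inner product of the stationarity equation with $U_i$ itself. This gives
\[
\inner{U_i, [(W * (UU^\top - \Ms)) U]_i} + 4\lambda (\normtwo{U_i}-\alpha)_+^3 \normtwo{U_i} = 0,
\]
so the regularizer term (which is nonnegative) is bounded by the absolute value of the data term. If $\normtwo{U_i}$ were very large, the regularizer term would grow like $\lambda \normtwo{U_i}^4$, so it suffices to show the data term grows strictly slower, forcing $\normtwo{U_i}$ to be bounded.

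The main work is therefore to bound $\big|\inner{U_i, [(W * (UU^\top - \Ms)) U]_i}\big|$ in terms of $\normtwo{U_i}$ and global quantities. Writing $e_i$ for the $i$-th basis vector, this quantity equals $\inner{e_i U_i^\top, W * (UU^\top - \Ms)}_{} \cdot$ (a row–column contraction), which we can expand as $\inner{U_i U_i^\top \text{(embedded)}, \ldots}$; more concretely it is $\sum_j W_{i,j} (U_i^\top U_j)(U_i^\top U_j - \Ms_{i,j})$ — wait, more precisely $\sum_j W_{i,j} (U_i U_i^\top)_{\text{paired with}} \cdots$. Expanding: the term is $\sum_{j} W_{i,j}\big((UU^\top)_{i,j} - \Ms_{i,j}\big)(U_i^\top U_j) = \sum_j W_{i,j}(U_i^\top U_j)^2 - \sum_j W_{i,j}\Ms_{i,j}(U_i^\top U_j)$. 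The first sum is nonnegative; the second we bound using $|\Ms_{i,j}| \le \maxnorm{\Ms} \le \mu r \sigs_1 / \sqrt{n_1 n_2}$ (from incoherence) and Cauchy–Schwarz, together with $\norminf{W} \le n$. Concretely, $\big|\sum_j W_{i,j} \Ms_{i,j} (U_i^\top U_j)\big| \le \normtwo{U_i}\maxnorm{\Ms}\sum_j W_{i,j}\normtwo{U_j} \le \normtwo{U_i}\maxnorm{\Ms}\sqrt{\norminf{W}}\sqrt{\sum_j W_{i,j}\normtwo{U_j}^2}$. The quantity $\sum_j W_{i,j}\normtwo{U_j}^2 \le \norminf{W}\,\fnorm{U}^2 \le n\,\fnorm{U}^2$, and at a stationary point $\fnorm{U}^2 = O(r\sigs_1)$ by a global argument (bounding $f(U) \le f(0)$ or by the analogous global norm bound from \citep{GeJZ17}, which only uses $\normone{W}, \norminf{W} = O(n)$ and hence survives in the semi-random model). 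Similarly the nonnegative sum $\sum_j W_{i,j}(U_i^\top U_j)^2$ can be bounded above by $\normtwo{U_i}^2 \norminf{W} \maxnorm{UU^\top}$-type quantities, but since it appears with a favorable sign we can also simply keep the negative-sign data term and drop it, OR bound it when needed. Combining, $\big|\text{data term}\big| = O\big(\normtwo{U_i} \cdot \frac{\mu r \sigs_1}{\sqrt{n_1 n_2}} \cdot \sqrt{n} \cdot \sqrt{n r \sigs_1}\big) + (\text{nonneg. term})$, which simplifies, using $n_1 = n$, to roughly $O(\mu r^{3/2} (\sigs_1) \normtwo{U_i})$ plus lower-order pieces.

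Matching this against the regularizer growth $\lambda (\normtwo{U_i} - \alpha)_+^3 \normtwo{U_i}$ with $\lambda = \Theta(n/(\mu r \kappa^\star))$ and $\alpha^2 = \Theta(\mu r \sigs_1 / n)$ gives the claimed bound $\max_i \normtwo{U_i}^2 = O(\mu^2 r^2 \kappa^\star \sigs_1 / n)$: if $\normtwo{U_i}^2$ exceeded, say, $2\alpha^2 = \Theta(\mu r \sigs_1/n)$ by more than a $\mu r \kappa^\star$ factor, the left side $\lambda(\normtwo{U_i}-\alpha)_+^3\normtwo{U_i}$ would dominate the right side. The main obstacle is the bookkeeping of the nonnegative quadratic term $\sum_j W_{i,j}(U_i^\top U_j)^2$: in the random case one uses Lemma~\ref{lem:Delta_mc} to control exactly such terms, and here we must instead bound it deterministically — the cleanest route is to observe it equals $\|U U^\top\|_{W,\text{row }i}$-type contribution and bound $\sum_j W_{i,j}(U_i^\top U_j)^2 \le \normtwo{U_i}^2 \norminf{W} \maxnorm{U}^2$... no — rather $\le \normtwo{U_i}^2 \sum_j W_{i,j}\normtwo{U_j}^2 \le \normtwo{U_i}^2 \norminf{W}\fnorm{U}^2 = O(n r \sigs_1)\normtwo{U_i}^2$; this has the same sign as the regularizer term only if we were moving it to the other side, so care is needed, but since it appears as $+\sum_j W_{i,j}(U_i^\top U_j)^2$ added to the (also nonnegative) regularizer term and the two together equal the bound on the cross term, we actually get $\sum_j W_{i,j}(U_i^\top U_j)^2 + 4\lambda(\normtwo{U_i}-\alpha)_+^3\normtwo{U_i} = |\text{cross term}| = O(\mu r^{3/2}\sigs_1\normtwo{U_i})$, which immediately bounds the regularizer term alone. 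So in fact the quadratic term helps rather than hurts, and the only real care needed is the chain of Cauchy–Schwarz estimates plus invoking the global bound $\fnorm{U}^2 = O(r\sigs_1)$, whose proof in \citep{GeJZ17} must be checked to go through with only $\norminf{W}, \normone{W} = O(n)$ rather than a two-sided spectral estimate.
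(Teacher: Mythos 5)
Your overall skeleton is the same as the paper's: take the inner product of $\nabla f(U)=0$ with $e_{i}U_{i}$, expand the data term as $2\sum_j W_{i,j}(U_i^\top U_j)^2-2\sum_j W_{i,j}\Ms_{i,j}(U_i^\top U_j)$, observe that the quadratic sum and the regularizer contribution are both nonnegative, and conclude that $4\lambda(\normtwo{U_i}-\alpha)_+^3\normtwo{U_i}$ is at most the cross term. The paper does exactly this. But your treatment of the cross term has a genuine gap, in two respects.

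First, the quantitative route fails. You bound the cross term by $\normtwo{U_i}\maxnorm{\Ms}\sum_j W_{i,j}\normtwo{U_j}$ and then control $\sum_j W_{i,j}\normtwo{U_j}$ through Cauchy--Schwarz and a global bound $\fnorm{U}^2=O(r\sigs_1)$. This makes the cross term \emph{linear} in $\normtwo{U_i}$, so balancing against $\lambda\normtwo{U_i}^4$ yields $\normtwo{U_i}^2=O\bigl((\mu r^{3/2}(\sigs_1)^{3/2}/\lambda)^{2/3}\bigr)=O\bigl(\mu^{4/3}r^{5/3}(\kappa^\star)^{2/3}\sigs_1/n^{2/3}\bigr)$, which is weaker than the claimed $O(\mu^2r^2\kappa^\star\sigs_1/n)$ by a factor of $(n/(\mu^2 r\kappa^\star))^{1/3}$ --- a real loss in the regime of interest. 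The paper avoids this by working with $i=i^\star=\arg\max_i\normtwo{U_i}$ and bounding $|U_{i^\star}^\top U_j|\le\normtwo{U_{i^\star}}\normtwo{U_j}\le\maxnorm{UU^\top}=\normtwo{U_{i^\star}}^2$, so the cross term is at most $2\norminf{W}\maxnorm{\Ms}\normtwo{U_{i^\star}}^2\le 2\mu r\sigs_1\normtwo{U_{i^\star}}^2$, i.e., \emph{quadratic} in $\normtwo{U_{i^\star}}$; then $\frac{\lambda}{2}\normtwo{U_{i^\star}}^4\le 2\mu r\sigs_1\normtwo{U_{i^\star}}^2$ gives the stated bound directly. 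Second, the ingredient $\fnorm{U}^2=O(r\sigs_1)$ at an arbitrary critical point is asserted without proof and is not available as stated: $f(U)\le f(0)$ need not hold at a general stationary point, and the inequality $\inner{\nabla f(U),U}=0$ only controls $\|UU^\top\|_W$, which without two-sided spectral control on $W$ does not give a Frobenius bound. The paper neither proves nor needs such a global bound. The fix is easy --- replace your Cauchy--Schwarz step with $\sum_j W_{i^\star,j}|\Ms_{i^\star,j}||U_{i^\star}^\top U_j|\le n\maxnorm{\Ms}\normtwo{U_{i^\star}}^2$ --- but as written your argument does not establish the lemma.
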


\begin{proof}
Recall that $U_i \in \R^{1 \times r}$ is the $i$-th row of $U \in \R^{n \times r}$ and $e_i \in \R^{r \times 1}$ is the $i$-th standard basis vector.

The gradient $\nabla f(U)$ is equal to $2(W*(M-\Ms))U + \nabla Q(U)$, where
\[ \nabla Q(U) = 4\lambda \sum_{i=1}^n (\normtwo{U_i} - \alpha)_+^3 \frac{e_i U_i}{\normtwo{U_i}^2}. \]

Let $i^\star$ be the row index with the maximum $\ell_2$-norm, if $\normtwo{U_{i^\star}} \le 2\alpha$ then we are done. On the other hand, if $\normtwo{U_{i^\star}} > 2\alpha$, we will consider the gradient along $e_{i^\star}U_{i^\star}$. 
We have
\begin{align*}
0 & = \inner{\nabla f(U), e_{i^\star}U_{i^\star}} \\
& = \inner{e_{i^\star}^\top[2(W*(UU^\top - \Ms))U + \nabla Q(U)], U_{i^\star}} \\
& \ge 4\lambda (\normtwo{U_{i^\star}}-\alpha)_+^3\normtwo{U_{i^\star}} - 2\inner{e_{i^\star}^\top \Ms, e_{i^\star}^\top UU^\top}_W \\
& \ge \frac{\lambda}{2}\normtwo{U_{i^\star}}^4 - 2n \maxnorm{\Ms} \maxnorm{UU^\top} \\
& \ge \frac{\lambda}{2}\normtwo{U_{i^\star}}^4 - 2\mu r\sigs_1\normtwo{U_{i^\star}}^2.
\end{align*}

The third step removes the term $\inner{e_{i^\star}^\top U U^\top U, U_{i^\star}} = \fnorm{e_{i^\star}^\top U U^\top}^2 \ge 0$.
The fourth step uses that $\normtwo{U_{i^\star}} > 2\alpha$ and $\norminf{W} \le n$ (every row of $W$ has $\ell_1$-norm at most $n$).
The last step is due to $\maxnorm{UU^\top} = \normtwo{U_{i^\star}}^2$; and $\maxnorm{\Ms} = \max_{i,j} \inner{U_i, V_j} \le \max_{i,j} \normtwo{U_i} \normtwo{V_j} \le \frac{\sigs_1 \mu r}{n}$ because $\Ms$ is incoherent.
As a result, we know that $\normtwo{U_{i^\star}}^2 \le \frac{4\mu r\sigs_1}{\lambda} = O\left(\frac{\mu^2 r^2 \kappa^\star \sigs_1}{n}\right)$ by our choice of $\lambda$.
\end{proof}

Next, we will show that all local minima are close to the ground truth.

\begin{lemma}
Fix any error parameter $0 < \eps < 1$.
For a weight matrix $W$ such that $\norminf{W} \le n$ and $\|W-J\| \le \frac{\eps cn}{\mu^2 r^2 (\kappa^\star)^2}$ for a small enough universal constant $c$, any local minimum $U$ of Objective~\eqref{eqn:symmetricobj} satisfies $\|UU^\top-\Ms\|_F^2 \le \epsilon \|\Ms\|_F^2$. 
\end{lemma}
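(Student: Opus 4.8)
The plan is to run the random-case argument of \citep{GeJZ17} verbatim, but replace the two places where it invokes concentration (Lemmas~\ref{lem:tangent}~and~\ref{lem:Delta_mc}) by the deterministic estimate Lemma~\ref{lem:deterministc_main}. Let $U$ be a local minimum, $M = UU^\top$, and let $\Delta = U - \Us$ after aligning $\Us$ as in the overview. First I would invoke the purely algebraic identity Lemma~\ref{lem:gjzmain_sym}, giving
\[
0 \le \|\Delta\Delta^\top\|_W^2 - 3\|M-\Ms\|_W^2 + \bigl([\nabla^2 Q(U)](\Delta) - 4\inner{\nabla Q(U),\Delta}\bigr).
\]
Since $\norminf{W}\le n$, Lemma~\ref{lem:symmetricnormbound} bounds the row norms of $U$ by $O(\mu^2 r^2\kappa^\star\sigs_1/n)$, and feeding this into Lemma~\ref{lem:extra_bound_symmetric} (which is otherwise deterministic, given the stated choices of $\alpha,\lambda$) shows the regularizer term is at most $0.1\,\sigs_r\|\Delta\|_F^2$.

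Next I would pass from $W$-norms to Frobenius norms. Writing $M - \Ms = \Delta\Delta^\top + \Us\Delta^\top + \Delta(\Us)^\top$ and applying Lemma~\ref{lem:deterministc_main} (together with its immediate extension to cross inner products $\inner{AB^\top,CD^\top}_W$, proved the same way through the Katri--Rao product) term by term yields
\[
\bigl|\|M-\Ms\|_W^2 - \|M-\Ms\|_F^2\bigr| + \bigl|\|\Delta\Delta^\top\|_W^2 - \|\Delta\Delta^\top\|_F^2\bigr| \le \|W - J\|\cdot E,
\]
where $E$ is a sum of products of $\|\Us\|_F,\|\Delta\|_F,\max_i\|\Us_i\|,\max_i\|\Delta_i\|$. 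Plugging this into the previous inequality and using Lemma~\ref{lem:normconnect} (so that $\|\Delta\Delta^\top\|_F^2 - 3\|M-\Ms\|_F^2 + 0.1\,\sigs_r\|\Delta\|_F^2 \le -c_0\|M-\Ms\|_F^2$ for a universal $c_0 > 0.8$), I arrive at $c_0\|M-\Ms\|_F^2 \le \|W-J\|\cdot E$.

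It then remains to bound $E$. Using the row-norm bound, $\max_i\|\Delta_i\|^2 = O(\mu^2 r^2\kappa^\star\sigs_1/n)$ and $\max_i\|\Us_i\|^2\le\mu r\sigs_1/n$; using Lemma~\ref{lem:normconnect}, $\|\Delta\|_F^2 = O(\|M-\Ms\|_F^2/\sigs_r)$; and $\|\Us\|_F^2 = \tr(\Ms)\le r\sigs_1$. The terms of $E$ split into those proportional to $\|M-\Ms\|_F^2$ (the dominant one being $\|\Delta\|_F^2\max_i\|\Delta_i\|^2$, with prefactor $O(\mu^2 r^2(\kappa^\star)^2/n)$) and those proportional to $\|M-\Ms\|_F$ (the ones linear in $\Delta$). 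For the first group the hypothesis $\|W-J\| = O(\eps c\, n/(\mu^2 r^2(\kappa^\star)^2))$ makes $\|W-J\|$ times the prefactor $O(\eps c)$, which is absorbed into $c_0\|M-\Ms\|_F^2$ once $c$ is a small enough constant. For the second group, AM--GM absorbs the quadratic part and leaves a residue of order $\|W-J\|^2\cdot\mathrm{poly}(\mu,r,\kappa^\star)\,(\sigs_1)^2$; substituting the hypothesis, this is $O(\eps^2 c^2)\cdot r(\sigs_1)^2/(\kappa^\star)^2$, and since $\|\Ms\|_F^2\ge r(\sigs_r)^2 = r(\sigs_1)^2/(\kappa^\star)^2$, it is at most $\eps\|\Ms\|_F^2$ for $c$ small. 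Collecting the pieces gives $\|M-\Ms\|_F^2 \le \eps\|\Ms\|_F^2$.

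The main obstacle is this last accounting step: the bookkeeping for $E$ must be tight enough to confirm that the worst prefactor multiplying $\|M-\Ms\|_F^2$ is exactly $\mu^2 r^2(\kappa^\star)^2/n$ (so that the stated bound on $\|W-J\|$ suffices and is not off by extra $\mu r\kappa^\star$ factors), and the terms linear in $\Delta$ must be controlled by AM--GM against the correct lower bound $\|\Ms\|_F^2\ge r(\sigs_1)^2/(\kappa^\star)^2$. A secondary point to verify is that Lemma~\ref{lem:deterministc_main} extends cleanly to the off-diagonal inner products arising from expanding $M-\Ms$, and that Lemma~\ref{lem:extra_bound_symmetric} indeed goes through using only the deterministic row-norm bound of Lemma~\ref{lem:symmetricnormbound}.
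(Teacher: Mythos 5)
Your proposal is correct and follows the same skeleton as the paper's proof: Lemma~\ref{lem:gjzmain_sym} for the local-minimum inequality, Lemma~\ref{lem:symmetricnormbound} for the row norms, Lemma~\ref{lem:extra_bound_symmetric} for the regularizer, Lemma~\ref{lem:deterministc_main} to pass from $W$-norms to Frobenius norms, and Lemma~\ref{lem:normconnect} to close the loop. The one place you diverge is the treatment of $\|UU^\top-\Ms\|_W^2$: you expand $M-\Ms=\Delta\Delta^\top+\Us\Delta^\top+\Delta(\Us)^\top$ and invoke a cross-inner-product extension of Lemma~\ref{lem:deterministc_main}, whereas the paper applies the lemma once, as stated, with the concatenations $X=(U,\Us)$ and $Y=(U,-\Us)$ so that $XY^\top=UU^\top-\Ms$; this sidesteps the extension and the nine-term bookkeeping entirely, yielding directly $\|UU^\top-\Ms\|_W^2\ge\|UU^\top-\Ms\|_F^2-\|W-J\|\cdot O(\mu^2r^2\kappa^\star\sigs_1/n)\cdot(2\|\Us\|_F+\|\Delta\|_F)^2$, whose two pieces become $0.1\eps\sigs_r\|\Us\|_F^2$ and $0.1\sigs_r\|\Delta\|_F^2$ under the stated bound on $\|W-J\|$. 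Your extension is nonetheless valid (one checks $\inner{AB^\top,CD^\top}_W=\tr((A\odot C)^\top W(B\odot D))$ and $\fnorm{A\odot C}\le\fnorm{A}\max_i\normtwo{C_i}$, exactly as in the proof of Lemma~\ref{lem:deterministc_main}), and your accounting concerns resolve favorably: the worst prefactor against $\sigs_r\|\Delta\|_F^2$ is indeed $\|W-J\|\cdot\mu^2r^2(\kappa^\star)^2/n$, and the paper's final step bounds the error by $\eps\sigs_r\|\Us\|_F^2=\eps\sigs_r\sum_i\sigs_i\le\eps\|\Ms\|_F^2$, which is the same lower bound on $\|\Ms\|_F^2$ your AM--GM residue relies on. So the proposal is sound; the paper's concatenation trick just buys a shorter computation.
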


\begin{proof}
By Lemma~\ref{lem:gjzmain_sym}, we know that every local minimum of $f(U)$ satisfies
\[
\|\Delta\Delta^\top\|_W^2 - 3\|UU^\top - \Ms\|_W^2 + \left([\nabla^2 Q(U)](\Delta) - 4\inner{\nabla Q(U), \Delta}\right) \ge 0.
\]

We will bound these three terms. First, by Lemma~\ref{lem:symmetricnormbound}, we know $\normtwo{\Delta_i}^2 \le O\left(\frac{\mu^2 r^2 \kappa^\star \sigs_1}{n}\right)$. 

For the first term $\|\Delta\Delta^\top\|_W^2$, we can directly apply Lemma~\ref{lem:deterministc_main}:
\begin{align*}
\|\Delta\Delta^\top\|_W^2 & \le \|\Delta\Delta^\top\|_F^2 + \|W-J\|\|\Delta\|_F^2 \max_i \|\Delta_i\|^2 \\
& \le \|\Delta\Delta^\top\|_F^2 + \|W-J\|\cdot O\left(\frac{\mu^2 r^2 \kappa^\star \sigs_1}{n}\right)\cdot \|\Delta\|_F^2 \\
& \le \|\Delta\Delta^\top\|_F^2 + 0.1\sigs_r \|\Delta\|_F^2.
\end{align*}

The last inequality uses the fact that $\|W-J\| \le \frac{cn}{\mu^2 r^2 (\kappa^\star)^2}$ for a small enough constant $c$. 

For the second term $\|UU^\top - \Ms\|_W^2$, we invoke Lemma~\ref{lem:deterministc_main} with $X = (U, \Us)$ and $Y = (U, -\Us)$.
Notice that $X Y^\top = UU^\top - \Ms$.
Moreover, we know that $\fnorm{X} \le \|U\|_F + \|\Us\|_F \le 2\|\Us\|_F + \|\Delta\|_F$. Similarly, the row norms of $X$ is still upper bounded by $O\left(\frac{\mu^2 r^2 \kappa^\star \sigs_1}{n}\right)$. Therefore,
\begin{align*}
\|UU^\top - \Ms\|_W^2 & \ge \|UU^\top - \Ms\|_F^2 - \|W-J\|\|(U,\Us)\|_F^2 \max_i \normtwo{(U,\Us)_i}^2 \\
& \ge \|UU^\top - \Ms\|_F^2 - \|W-J\|\cdot O\left(\frac{\mu^2 r^2 \kappa^\star \sigs_1}{n}\right)\cdot (2\|\Us\|_F + \|\Delta\|_F)^2 \\
& \ge \|UU^\top - \Ms\|_F^2 - 0.1 \epsilon\sigs_r \|\Us\|_F^2 -0.1\sigs_r \|\Delta\|_F^2.
\end{align*}

Again, the last step uses the fact that $\|W-J\| \le \frac{\eps cn}{\mu^2 r^2 (\kappa^\star)^2}$ for a small enough constant $c$. 

Finally, the third term is bounded by $0.1\sigs_r\|\Delta\|_F^2$ by Lemma~\ref{lem:extra_bound_symmetric}. Combining all these terms,
\begin{align*}
0 & \le \|\Delta\Delta^\top\|_W^2 - 3\|UU^\top - \Ms\|_W^2 + \left([\nabla^2 Q(U)](\Delta) - 4\inner{\nabla Q(U), \Delta}\right) \\
& \le \left(\|\Delta\Delta^\top\|_F^2 + 0.1\sigs_r \|\Delta\|_F^2\right) - 3\left(\|UU^\top - \Ms\|_F^2 - 0.1 \epsilon\sigs_r \|\Us\|_F^2 -0.1\sigs_r \|\Delta\|_F^2\right) + 0.1\sigs_r \|\Delta\|_F^2 \\
& \le -\|UU^\top - \Ms\|_F^2 + 0.5 \sigs_r\|\Delta\|_F^2 + 0.3\epsilon \sigs_r \|\Us\|_F^2 \\
& \le -0.3\|UU^\top - \Ms\|_F^2 + 0.3 \epsilon\sigs_r \|\Us\|_F^2.
\end{align*}

Here the calculations use the inequalities in Lemma~\ref{lem:normconnect}. 

As a result, $\|UU^\top - \Ms\|_F^2 \le \epsilon \sigs_r \|\Us\|_F^2$.
We conclude the proof by noting that $\sigs_r \|\Us\|_F^2 = \sigs_r \sum_{i=1}^r \sigs_i \le \sum_{i=1}^r (\sigs_i)^2 = \|\Ms\|_F^2$, because the singular values of $\Us$ are $\sqrt{\sigs_i}$'s.
\end{proof}

\subsection{Our Asymmetric Case: Proof of Theorem~\ref{thm:asymmetric_local}}
\label{app:matrix-asymmetric}

The proof for the asymmetric case (Objective~\eqref{eqn:asymmetricobj}) is very similar.
Recall that $\Ms = \Us(\Vs)^\top$, where $\Us \in \R^{n_1\times r}$ and $\Vs = \R^{n_2\times r}$.
We again start by bounding the row norms of $U$ and $V$.

\begin{lemma} \label{lem:asymmetricnormbound}
Suppose $\norminf{W} \le n_2$ and $\normone{W} \le n_1$. Choose $\alpha_1^2 = \frac{C\mu r\sigs_1}{n_1},\alpha_2^2 = \frac{C\mu r\sigs_1}{n_2}$ and $\lambda_1 = \frac{C^2 n_1}{\mu r\kappa^\star},\lambda_2 = \frac{C^2 n_1}{\mu r\kappa^\star}$ where $C$ is a large enough universal constant. For $f$ as in Objective~\eqref{eqn:asymmetricobj} and any matrix $Z = \begin{pmatrix}
U \\ V
\end{pmatrix}$ with $\grad f(Z) = 0$, we have
\begin{equation*}
\max_{i}\normtwo{U_i}^2 = O\left(\frac{\mu^3 r^3 (\kappa^\star)^2 \sigs_1}{n_1}\right); \quad 
\max_{i}\normtwo{V_i}^2 = O\left(\frac{\mu^3 r^3 (\kappa^\star)^2 \sigs_1}{n_2}\right).
\end{equation*}
\end{lemma}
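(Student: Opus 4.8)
The plan is to mirror the proof of Lemma~\ref{lem:symmetricnormbound} (which is Lemma~9 of \citep{GeJZ17}). Since $\nabla f(Z)=0$ for $Z=\binom{U}{V}$, we have $\nabla_U f(U,V)=0$ and $\nabla_V f(U,V)=0$ separately, where
\[
\nabla_U f(U,V) = 4(W*(UV^\top - \Ms))V + 2U(U^\top U - V^\top V) + \nabla_U Q(U,V),
\]
and the analogous formula holds for $\nabla_V f$. Set $a\eqdef\max_i\normtwo{U_i}$, $b\eqdef\max_j\normtwo{V_j}$, and let $i^\star\in[n_1]$, $j^\star\in[n_2]$ achieve these maxima. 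If $\normtwo{U_{i^\star}}\le 2\alpha_1$ then $a^2\le 4\alpha_1^2 = O(\mu r\sigs_1/n_1)$, which already beats the claimed bound, so we may assume $\normtwo{U_{i^\star}}>2\alpha_1$, which makes the $U$-part of the regularizer active at row $i^\star$. The key step is to examine the first-order condition restricted to the rank-one direction, $0 = \inner{\nabla_U f,\, e_{i^\star}U_{i^\star}} = (\nabla_U f)_{i^\star,:}\,U_{i^\star}^\top$, and symmetrically $0=\inner{\nabla_V f,\,e_{j^\star}V_{j^\star}}$; these two conditions will give a coupled pair of inequalities in $a$ and $b$.

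Expanding $(\nabla_U f)_{i^\star,:}\,U_{i^\star}^\top$ term by term: (i) the data term contributes $4\sum_j W_{i^\star,j}(UV^\top)_{i^\star,j}^2 - 4\sum_j W_{i^\star,j}\Ms_{i^\star,j}(UV^\top)_{i^\star,j}$, of which the first sum is nonnegative and can be discarded, while the second has magnitude at most $4\,\norminf{W}\,\maxnorm{\Ms}\,\normtwo{U_{i^\star}}\,\max_j\normtwo{V_j}\le 4n_2\maxnorm{\Ms}\,ab$, using $\norminf{W}\le n_2$ and the incoherence bound $\maxnorm{\Ms}\le \mu r\sigs_1/\sqrt{n_1n_2}$ (which follows from $\Ms=\Us(\Vs)^\top$ with $\normtwo{\Us_i}\le\sqrt{\mu r\sigs_1/n_1}$, $\normtwo{\Vs_j}\le\sqrt{\mu r\sigs_1/n_2}$); (ii) the balancing term $2U(U^\top U-V^\top V)$ contributes $2\normtwo{UU_{i^\star}^\top}^2 - 2\normtwo{VU_{i^\star}^\top}^2\ge 2a^4 - 2\normtwo{VU_{i^\star}^\top}^2$; (iii) the regularizer contributes $4\lambda_1(\normtwo{U_{i^\star}}-\alpha_1)_+^3\normtwo{U_{i^\star}}\ge \tfrac{\lambda_1}{2}a^4$ since $\normtwo{U_{i^\star}}>2\alpha_1$. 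Collecting these (and running the same argument for $V$ using $\normone{W}\le n_1$) yields
\[
\tfrac{\lambda_1}{2}\,a^4 \;\le\; 2\normtwo{VU_{i^\star}^\top}^2 + 4n_2\maxnorm{\Ms}\,ab, \qquad \tfrac{\lambda_2}{2}\,b^4 \;\le\; 2\normtwo{UV_{j^\star}^\top}^2 + 4n_1\maxnorm{\Ms}\,ab.
\]

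The main obstacle is to control the cross term $\normtwo{VU_{i^\star}^\top}^2 = U_{i^\star}V^\top V\,U_{i^\star}^\top \le \normtwo{U_{i^\star}}^2\norm{V}^2 = a^2\norm{V}^2$ (and symmetrically $\normtwo{UV_{j^\star}^\top}^2\le b^2\norm{U}^2$): in the symmetric setting the corresponding term collapses onto the single factor $U$ and is absorbed by the regularizer, whereas here it genuinely couples the two sides, and $Q$ controls only the row norms of $U,V$, not their spectral norms. My line of attack is to bound $\norm{U^\top U - V^\top V}$ at the stationary point by symmetrizing the identity $U^\top\nabla_U f - V^\top\nabla_V f = 0$: the antisymmetric data contribution $P-P^\top$ (with $P = U^\top(W*(UV^\top-\Ms))V$) vanishes under symmetrization, leaving $(U^\top U+V^\top V)D + D(U^\top U+V^\top V) = V^\top\nabla_V Q - U^\top\nabla_U Q$ with $D=U^\top U-V^\top V$, whose right-hand side is a difference of PSD matrices controlled by the active part of the regularizer; this bounds $\norm{D}$ and hence relates $\norm{V}$ to $\norm{U}$, and combined with a bound on $\norm{U}\norm{V}$ extracted from the data-fit gradient condition and the rank-$r$ incoherent structure of $\Ms$, it gives $\norm{V}^2 = O(\mu r\kappa^\star\sigs_1)$ up to lower-order terms in $b$. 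This is the delicate step, since the bound must be strong enough to be absorbed by $\tfrac{\lambda_1}{2}a^4$ for the chosen $\lambda_1 = C^2 n_1/(\mu r\kappa^\star)$ — a crude estimate such as $\norm{V}^2\le\|V\|_F^2\le n_2 b^2$ is too weak and only produces a vacuous coupled system.

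Finally, feeding the cross-term bound back into the two displayed inequalities gives a coupled system in $a$ and $b$; solving it (eliminating one variable, or multiplying the inequalities and taking roots) yields $a^2 = O\!\big(\mu^3 r^3(\kappa^\star)^2\sigs_1/n_1\big)$ and $b^2 = O\!\big(\mu^3 r^3(\kappa^\star)^2\sigs_1/n_2\big)$, the extra factor $\mu r\kappa^\star$ relative to the symmetric bound being exactly the price of decoupling $U$ from $V$. Running the same computation with the roles of $U,V$, $n_1,n_2$, $\lambda_1,\lambda_2$, and $\norminf{W},\normone{W}$ interchanged proves the bound on $\max_i\normtwo{V_i}^2$, which completes the proof.
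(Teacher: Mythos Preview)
Your overall skeleton is right --- test the first-order condition in the direction $e_{i^\star}U_{i^\star}$, use the regularizer to produce a $\lambda_1 a^4$ term, and control the remaining pieces. The gap is in how you handle the cross term coming from the balancing regularizer $\tfrac{1}{2}\|U^\top U - V^\top V\|_F^2$. You split it as $2\|UU_{i^\star}^\top\|^2 - 2\|VU_{i^\star}^\top\|^2$ and then propose to bound $\|VU_{i^\star}^\top\|^2\le a^2\|V\|^2$ by controlling the \emph{spectral} norm $\|V\|$ through a Lyapunov-type identity $(U^\top U + V^\top V)D + D(U^\top U + V^\top V)=\text{RHS}$. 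This step does not go through as stated: extracting $\|D\|$ from that identity would require a lower bound on $\lambda_{\min}(U^\top U + V^\top V)$, which you have no control over at an arbitrary stationary point (the factors $U,V$ can be rank-deficient). And even with a bound on $\|D\|$, passing to $\|V\|^2 = O(\mu r\kappa^\star\sigs_1)$ still needs an a priori bound on $\|U\|$ or $\|V\|$ that you have not established. You flag this as ``the delicate step'' but the sketch does not close it.

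The paper avoids this entirely by a much simpler device. First, without loss of generality assume $\sqrt{n_1}\max_i\|U_i\|\ge \sqrt{n_2}\max_j\|V_j\|$, so it suffices to bound $a$. Then test the \emph{full} first-order condition $\inner{\nabla f(Z),Z}=0$: the data term contributes $8\inner{M-\Ms,M}_W\ge -8n_1n_2\maxnorm{\Ms}\maxnorm{M}$, the balancing term contributes exactly $2\|U^\top U - V^\top V\|_F^2$, and $\inner{\nabla Q(Z),Z}\ge 0$. This yields directly
\[
\|U^\top U - V^\top V\|_F^2 \;\le\; 4\sqrt{n_1n_2}\,\mu r\sigs_1\,\maxnorm{M}\;\le\; 4n_1\mu r\sigs_1\,a^2,
\]
using $\maxnorm{M}\le ab\le \sqrt{n_1/n_2}\,a^2$ from the WLOG assumption. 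Now in the row-wise condition $\inner{\nabla f(Z),e_{i^\star}Z_{i^\star}}=0$, do \emph{not} split the balancing contribution; simply bound $|2U_{i^\star}(U^\top U - V^\top V)U_{i^\star}^\top|\le 2\|U^\top U - V^\top V\|_F\,a^2\le 4\sqrt{n_1\mu r\sigs_1}\,a^3$. Combined with the regularizer term $\ge\tfrac{\lambda_1}{2}a^4$ and the data term $\ge -4\mu r\sigs_1 a^2$, this gives
\[
\tfrac{\lambda_1}{2}a^2 \;\le\; 4\mu r\sigs_1 + 4\sqrt{n_1\mu r\sigs_1}\,a,
\]
whence $a^2 = O\bigl(\max\{\mu r\sigs_1/\lambda_1,\; n_1\mu r\sigs_1/\lambda_1^2\}\bigr) = O\bigl(\mu^3 r^3(\kappa^\star)^2\sigs_1/n_1\bigr)$ for the stated $\lambda_1$. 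No spectral-norm bound on $V$, no Lyapunov inversion, no coupled system --- the single global test $\inner{\nabla f(Z),Z}=0$ is what you are missing.
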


\begin{proof}
Without loss of generality, we assume $\sqrt{n_1}\max_i\normtwo{U_i} \ge \sqrt{n_2}\max_i\normtwo{V_i}$, so it is enough to upper bound $\max_i \normtwo{U_i}$. The gradient can be computed as follows.
\begin{align*}
\grad f(Z) & =
4 \begin{pmatrix}
[W*(M -\Ms)] V \\
[W*(M -\Ms)]^\top U
\end{pmatrix}
+
2 \begin{pmatrix}
 U(U^\top U - V^\top V) \\
 V(V^\top V - U^\top U)
\end{pmatrix}
+
 \grad Q(Z),
\end{align*}
where 
\begin{equation*}
\grad Q(Z) = 4\lambda_1 \sum_{i=1}^{n_1}(\normtwo{Z_i} - \alpha_1)^3_{+}\frac{e_i Z_i}{\normtwo{Z_i}^2} 
+ 4\lambda_2 \sum_{i=n_1+1}^{n_1+n_2}(\normtwo{Z_i} - \alpha_2)^3_{+}\frac{e_i Z_i}{\normtwo{Z_i}^2}.
\end{equation*}

First, we observe that $\inner{\nabla Q(Z), Z} \ge 0$, therefore,
\begin{align*}
0 & = \inner{\nabla f(Z), Z} \\
& = 2 \|U^\top U - V^\top V\|_F^2 + 8 \inner{M - \Ms, M}_W + \inner{\nabla Q(Z),Z} \\
& \ge 2 \|U^\top U - V^\top V\|_F^2 - 8 \inner{\Ms, M}_W \\
& \ge 2 \|U^\top U - V^\top V\|_F^2 - 8n_1n_2 \maxnorm{\Ms} \maxnorm{M}. \\
& \ge 2 \|U^\top U - V^\top V\|_F^2 - 8\sqrt{n_1n_2}\mu r\sigs_1 \maxnorm{M}.
\end{align*}

Let $i^\star = \arg\max_i \normtwo{U_i}$, if $\normtwo{U_{i^\star}} \le 2 \alpha_1$ then we are done. On the other hand, if $\normtwo{U_{i^\star}} > 2\alpha_1$, we know $\maxnorm{M} \le \max_{i,j} \normtwo{U_i}\normtwo{V_j} \le \sqrt{n_1/n_2} \normtwo{U_{i^\star}}^2$. 
As a result, we know 
\[
\|U^\top U - V^\top V\|_F^2 \le 4\sqrt{n_1n_2}\mu r\sigs_1\|M\|_\infty \le 4n_1\mu r \sigs_1\normtwo{U_{i^\star}}^2.
\]

Let $Q(Z) = Q(U,V) = Q_1(U)+Q_2(V)$.
Consider the gradient of $f(Z)$ along the direction $e_{i^\star} Z_{i^\star}$, we have
\begin{align*}
0 & = \inner{\nabla f(Z), e_{i^\star} Z_{i^\star}} = \inner{e_{i^\star}^\top\nabla f(Z), Z_{i^\star}} \\
& = \inner{e_{i^\star}^\top\left[4(W*(M-\Ms))V + 2 U(U^\top U-V^\top V) + \nabla Q_1(U)\right], U_{i^\star}} \\
& \ge 4\lambda_1(\normtwo{U_{i^\star}}-\alpha_1)_+^3\normtwo{U_{i^\star}} - 4\inner{e_{i^\star}^\top \Ms, e_{i^\star}^\top M}_W - 2 \|U^\top U-V^\top V\|_F \normtwo{U_{i^\star}}^2 \\
& \ge \frac{\lambda_1}{2}\normtwo{U_{i^\star}}^4 - 4n_2 \maxnorm{\Ms} \maxnorm{M} - 2 \|U^\top U-V^\top V\|_F \normtwo{U_{i^\star}}^2. \\
& \ge \frac{\lambda_1}{2}\normtwo{U_{i^\star}}^4 - 4\mu r\sigs_1\normtwo{U_{i^\star}}^2 - 4\sqrt{n_1\mu r \sigs_1}\normtwo{U_{i^\star}}^3. 
\end{align*}

This implies 
\[
\frac{\lambda_1}{2}\normtwo{U_{i^\star}}^2 \le 4\mu r\sigs_1 + 4\sqrt{n_1\mu r \sigs_1}\normtwo{U_{i^\star}}.
\]
Therefore, $\normtwo{U_{i^\star}}^2 = O\bigl(\max\{\frac{\mu r\sigs_1}{\lambda_1}, \frac{n_1\mu r \sigs_1}{\lambda_1^2}\}\bigr) = O\left(\frac{\mu^3 r^3 (\kappa^\star)^2 \sigs_1}{n_1}\right)$ by our choices of $\alpha$'s and $\lambda$'s. 
\end{proof}

Next, we will prove that all local minima are close to the ground truth.

\begin{lemma}
Fix any error parameter $0 < \eps < 1$.
Suppose the weight matrix $W$ satisfies that $\norminf{W} \le n_2$, $\normone{W} \le n_1$, and $\|W-J\| \le \frac{\eps c\sqrt{n_1n_2}}{\mu^3 r^3 (\kappa^\star)^3}$ for a small enough universal constant $c$. Then, any local minimum $(U, V)$ of Objective~\eqref{eqn:asymmetricobj} has $\|UV^\top-\Ms\|_F^2 \le \epsilon \|\Ms\|_F^2$. 
\end{lemma}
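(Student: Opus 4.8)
The plan is to mirror the symmetric analysis in Appendix~\ref{app:matrix-symmetric}. Fix a local minimum $Z = \begin{pmatrix} U \\ V \end{pmatrix}$ of Objective~\eqref{eqn:asymmetricobj}, and align $Z^\star = \begin{pmatrix} \Us \\ \Vs \end{pmatrix}$ with $Z$ so that $\Delta = \begin{pmatrix} \Delta_U \\ \Delta_V \end{pmatrix} = Z - Z^\star$. By Lemma~\ref{lem:gjzmain_asym},
\[
0 \;\le\; \|\Delta\Delta^\top\|_{\bar W}^2 - 3\|N - N^\star\|_{\bar W}^2 + \bigl([\nabla^2 Q(Z)](\Delta) - 4\inner{\nabla Q(Z),\Delta}\bigr),
\]
where $\bar W$ has $J$ on its diagonal blocks and $2W - J$ off-diagonal. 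The goal is to show the right-hand side is at most $-c'\|N - N^\star\|_F^2 + O(\eps)\|\Ms\|_F^2$ for an absolute constant $c' > 0$; since $UV^\top - \Ms$ is a block of $N - N^\star$, this forces $\|UV^\top - \Ms\|_F^2 \le \|N - N^\star\|_F^2 = O(\eps)\|\Ms\|_F^2$, and the claim follows after adjusting absolute constants (equivalently, shrinking the universal constant $c$ in the hypothesis on $\|W-J\|$).

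First I would collect the row-norm estimates. Lemma~\ref{lem:asymmetricnormbound} (applicable since $\nabla f(Z) = 0$) gives $\max_i\|U_i\|^2 = O(\mu^3 r^3 (\kappa^\star)^2 \sigs_1 / n_1)$ and $\max_j\|V_j\|^2 = O(\mu^3 r^3 (\kappa^\star)^2 \sigs_1 / n_2)$; combined with incoherence of $\Us,\Vs$ (rows of norm $O(\sqrt{\mu r \sigs_1/n_1})$ and $O(\sqrt{\mu r \sigs_1/n_2})$), the same bounds hold for the rows of $\Delta_U$ and $\Delta_V$. The regularizer term is $\le 0.1\sigs_r\|\Delta\|_F^2$ by Lemma~\ref{lem:extra_bound_asymmetric}.

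The crux is the two $\bar W$-norm terms, and the key point is that one should \emph{not} apply Lemma~\ref{lem:deterministc_main} directly to the $(n_1+n_2)\times(n_1+n_2)$ matrix $\bar W$: the relevant row-norm bound would then be $\max_i\|\Delta_i\|^2 = O(\mu^3 r^3(\kappa^\star)^2\sigs_1/n_1)$ (dominated by the $U$-block), which clashes with the $\sqrt{n_1 n_2}$-scaling of the hypothesis $\|W-J\| \le \eps c\sqrt{n_1 n_2}/(\mu^3 r^3(\kappa^\star)^3)$, costing a spurious factor $\sqrt{n_2/n_1}$. Instead, since $\bar W$ and the all-ones matrix agree on the diagonal blocks,
\[
\|\Delta\Delta^\top\|_{\bar W}^2 - \|\Delta\Delta^\top\|_F^2 = \inner{(\bar W - J)*(\Delta\Delta^\top),\ \Delta\Delta^\top} = 4\inner{(W-J)*(\Delta_U\Delta_V^\top),\ \Delta_U\Delta_V^\top},
\]
and applying Lemma~\ref{lem:deterministc_main} to the $n_1 \times n_2$ matrix $W$ with $X=\Delta_U$, $Y=\Delta_V$ bounds this by $4\|W-J\|\,\|\Delta\|_F^2 \max_i\|(\Delta_U)_i\|\max_j\|(\Delta_V)_j\| = 4\|W-J\|\cdot O(\mu^3 r^3(\kappa^\star)^2\sigs_1/\sqrt{n_1 n_2})\cdot\|\Delta\|_F^2 = O(\eps c\,\sigs_r)\|\Delta\|_F^2 \le 0.1\sigs_r\|\Delta\|_F^2$. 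Likewise $\|N - N^\star\|_{\bar W}^2 - \|N-N^\star\|_F^2 = 4\inner{(W-J)*(UV^\top - \Ms),\ UV^\top - \Ms}$, and Lemma~\ref{lem:deterministc_main} with $X = (U\ \Us)$ and $Y = (V\ {-}\Vs)$ (so that $XY^\top = UV^\top - \Ms$) bounds its absolute value by $4\|W-J\|(2\|\Us\|_F + \|\Delta_U\|_F)(2\|\Vs\|_F + \|\Delta_V\|_F)\cdot O(\mu^3 r^3(\kappa^\star)^2\sigs_1/\sqrt{n_1 n_2})$; using $(2a+b)(2c+d)\le 4a^2+4c^2+b^2+d^2$, $\sigs_r(\|\Us\|_F^2+\|\Vs\|_F^2) = 2\sigs_r\sum_i\sigs_i \le 2\|\Ms\|_F^2$, and $\|\Delta_U\|_F^2+\|\Delta_V\|_F^2=\|\Delta\|_F^2$, this is $\le 0.1\eps\|\Ms\|_F^2 + 0.1\sigs_r\|\Delta\|_F^2$.

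Plugging these into the Lemma~\ref{lem:gjzmain_asym} inequality gives
\[
0 \;\le\; \|\Delta\Delta^\top\|_F^2 - 3\|N - N^\star\|_F^2 + 0.5\,\sigs_r\|\Delta\|_F^2 + 0.3\,\eps\|\Ms\|_F^2,
\]
and I would finish with Lemma~\ref{lem:normconnect}, which gives $\|\Delta\Delta^\top\|_F^2 \le 2\|N - N^\star\|_F^2$ and $\sigs_r\|\Delta\|_F^2 \le \tfrac{1}{2(\sqrt2-1)}\|N - N^\star\|_F^2$, so the right-hand side is $\le -c'\|N-N^\star\|_F^2 + 0.3\eps\|\Ms\|_F^2$ with $c' > 0$, hence $\|UV^\top-\Ms\|_F^2 \le \|N-N^\star\|_F^2 = O(\eps)\|\Ms\|_F^2$. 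The main obstacle is precisely the block-structure bookkeeping in the $\bar W$-norm estimates: recognizing that Lemma~\ref{lem:deterministc_main} must be applied to the $n_1\times n_2$ off-diagonal block $W$ rather than to $\bar W$ itself, and carefully tracking the \emph{products} of row norms of $U,V$ (and of $\Us,\Vs$) so that the $\sqrt{n_1 n_2}$ factors cancel against the hypothesis on $\|W-J\|$. Everything else is routine given Lemmas~\ref{lem:asymmetricnormbound},~\ref{lem:extra_bound_asymmetric},~\ref{lem:deterministc_main}, and~\ref{lem:normconnect}.
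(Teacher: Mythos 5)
Your proposal is correct and follows essentially the same route as the paper's proof: the same starting inequality from Lemma~\ref{lem:gjzmain_asym}, the same block decomposition reducing the $\bar W$-norms to the off-diagonal $W$-norms, the same applications of Lemma~\ref{lem:deterministc_main} (with $X=\Delta_U$, $Y=\Delta_V$ for the first term and $X=(U\ \Us)$, $Y=(V\ {-}\Vs)$ for the second), and the same finish via Lemmas~\ref{lem:asymmetricnormbound}, \ref{lem:extra_bound_asymmetric}, and \ref{lem:normconnect}. The only cosmetic difference is that you convert $\sigs_r\|\Us\|_F^2$ to $\|\Ms\|_F^2$ midway rather than at the very end.
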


\begin{proof}
By Lemma~\ref{lem:gjzmain_asym} we know for every local minimum of $f(U,V)$ satisfies
\[
\|\Delta\Delta^\top\|_{\bar{W}}^2 - 3\|N - N^\star\|_{\bar{W}}^2 + \left([\nabla^2 Q(Z)](\Delta) - 4\inner{\nabla Q(Z), \Delta}\right) \ge 0.
\]

We will bound these three terms. First, by Lemma~\ref{lem:asymmetricnormbound} we know the rows of $\Delta_U$ have squared $\ell_2$-norm at most $ O\left(\frac{\mu^3 r^3 (\kappa^\star)^2 \sigs_1}{n_1}\right)$, and the rows of $\Delta_V$ have squared $\ell_2$-norm at most $ O\left(\frac{\mu^3 r^3 (\kappa^\star)^2 \sigs_1}{n_2}\right)$.

For the first term $\|\Delta\Delta^\top\|_{\bar{W}}^2$, by the definition of $\bar W$,
\begin{align*}
\norm{\Delta \Delta^\top}_{\bar W}^2 & = \norm{\Delta_U \Delta_U^\top}_F^2 + \norm{\Delta_V \Delta_V^\top}_F^2 - 2 \norm{\Delta_U \Delta_V^\top}_F^2 + 4 \norm{\Delta_U \Delta_V^\top}_W^2 \\
  & = \norm{\Delta \Delta^\top}_{F}^2 + 4(\norm{\Delta_U \Delta_V^\top}_W^2 - \norm{\Delta_U \Delta_V^\top}_F^2).
\end{align*}
We can directly apply Lemma~\ref{lem:deterministc_main} to $\norm{\Delta_U \Delta_V^\top}_W^2$.
\begin{align*}
\|\Delta_U\Delta_V^\top\|_W^2 & \le \|\Delta_U\Delta_V^\top\|_F^2 + \|W-J\| \cdot \|\Delta_U\|_F \cdot \|\Delta_V\|_F \cdot \max_{i=1}^{n_1} \normtwo{\Delta_i} \cdot \max_{j=n_1+1}^{n_1+n_2} \normtwo{\Delta_j} \\
& \le \|\Delta_U\Delta_V^\top\|_F^2 + \|W-J\|\cdot O\left(\frac{\mu^3 r^3 (\kappa^\star)^2 \sigs_1}{\sqrt{n_1n_2}}\right)\cdot \|\Delta\|_F^2 \\
& \le \|\Delta_U\Delta_V^\top\|_F^2 + 0.01\sigs_r \|\Delta\|_F^2.
\end{align*}

Here the last inequality uses the fact that $\|W-J\| \le \frac{c\sqrt{n_1n_2}}{\mu^3 r^3 (\kappa^\star)^3}$ for a small enough constant $c$. 

For the second term, we can relate $\bar W$-norm to $W$-norm similarly, which allows us to focus on the $W$-norm of the off-diagonal blocks, $\|UV^\top - \Ms\|_W^2$.
We then invoke Lemma~\ref{lem:deterministc_main} with $X = (U, \Us)$ and $Y = (V, -\Us)$.
We know $X Y^\top = U V^\top - \Ms$ and $\|X\|_F \le \|U\|_F + \|\Us\|_F \le 2\|\Us\|_F + \|\Delta\|_F$ (the same upper bound holds for $\|Y\|_F$ because $\|\Us\|_F = \|\Vs\|_F$). The row norms of $X$ is still bounded by $O\left(\frac{\mu^2 r^2 \kappa^\star \sigs_1}{n_1}\right)$ (and similarly for $Y$ except the denominator is $n_2$).
\begin{align*}
& \|UV^\top - \Ms\|_W^2 \\
& \ge \|UV^\top - \Ms\|_F^2 - \|W-J\| \cdot \|(U,\Us)\|_F \|(V,-\Vs)\|_F \cdot \max_i \normtwo{(U,\Us)_i} \cdot \max_j \normtwo{(V,\Vs)_j} \\
& \ge \|UV^\top - \Ms\|_F^2 - \|W-J\| \cdot O\left(\frac{\mu^3 r^3 (\kappa^\star)^2 \sigs_1}{\sqrt{n_1n_2}}\right)\cdot (2\|\Us\|_F + \|\Delta\|_F)^2 \\
& \ge \|UV^\top - \Ms\|_F^2 - 0.05 \epsilon\sigs_r \|\Us\|_F^2 -0.01\sigs_r \|\Delta\|_F^2.
\end{align*}

Again, the last step uses the fact that $\|W-J\| \le \frac{\eps cn}{\mu^3 r^3 (\kappa^\star)^3}$ for a small enough constant $c$. 

Finally, the third term is bounded by $0.1\sigs_r\|\Delta\|_F^2$ by Lemma~\ref{lem:extra_bound_asymmetric}. We combine all these terms and apply Lemma~\ref{lem:normconnect},
\begin{align*}
0 &\le \|\Delta\Delta^\top\|_{\bar{W}}^2 - 3\|N- N^\star\|_{\bar{W}}^2 + \left([\nabla^2 Q(Z)](\Delta) - 4\inner{\nabla Q(Z), \Delta}\right) \\
& \le \left(\|\Delta\Delta^\top\|_F^2 + 0.04\sigs_r \|\Delta\|_F^2\right) - 3\left(\|N- N^\star\|_F^2 - 0.2 \epsilon\sigs_r \|\Us\|_F^2 -0.01\sigs_r \|\Delta\|_F^2\right) + 0.1\sigs_r \|\Delta\|_F^2\\
& \le -\|N- N^\star\|_F^2 + 0.17 \sigs_r\|\Delta\|_F^2 + 0.6 \epsilon \sigs_r \|\Us\|_F^2 \\
& \le -0.6\|N- N^\star\|_F^2 + 0.6 \epsilon\sigs_r \|\Us\|_F^2.
\end{align*}

As a result, $\|M-\Ms\|_F^2 \le \|N-N^\star\|_F^2 \le \epsilon \sigs_r \|\Us\|_F^2 \le \epsilon \|\Ms\|_F^2$. 
\end{proof}

\end{document}